
\documentclass[nohyperref]{article}

\usepackage{microtype}
\usepackage{graphicx}
\usepackage{subfigure}
\usepackage{booktabs} 
\usepackage{enumitem}
\usepackage{hyperref}



\usepackage[accepted]{icml2022}

\usepackage{amsmath}
\usepackage{amssymb}
\usepackage{mathtools}
\usepackage{amsthm}

\usepackage[capitalize,noabbrev]{cleveref}

\theoremstyle{plain}
\newtheorem{theorem}{Theorem}[section]

\newtheorem{lemma}[theorem]{Lemma}
\newtheorem{corollary}[theorem]{Corollary}
\theoremstyle{definition}

\newtheorem{assumption}[theorem]{Assumption}
\theoremstyle{remark}
\newtheorem{remark}[theorem]{Remark}

\usepackage{smile}



\icmltitlerunning{Communication-Efficient Adaptive  Federated Learning}

\def \vb {\mathbf{v}}

\def \Vbh {\hat{\mathbf{V}}}

\usepackage{color}
\usepackage{xcolor}
\usepackage{soul}
\definecolor{carnelian}{rgb}{0.7, 0.11, 0.11}
\newcommand{\ifcomments}{\iftrue}

\begin{document}

\twocolumn[
\icmltitle{Communication-Efficient Adaptive Federated Learning}




\icmlsetsymbol{equal}{*}

\begin{icmlauthorlist}
    \icmlauthor{Yujia Wang}{psu}
    \icmlauthor{Lu Lin}{uva}
    \icmlauthor{Jinghui Chen}{psu}
\end{icmlauthorlist}

\icmlaffiliation{psu}{College of Information Sciences and Technology, Pennsylvania State University, State College, PA, United States}
\icmlaffiliation{uva}{Department of Computer Science, University of Virginia, Charlottesville, VA, United States}

\icmlcorrespondingauthor{Jinghui Chen}{jzc5917@psu.edu}

\icmlkeywords{Machine Learning, ICML}

\vskip 0.3in
]



\printAffiliationsAndNotice{}  


\begin{abstract}
Federated learning is a machine learning training paradigm that enables clients to jointly train models without sharing their own localized data. However, the implementation of federated learning in practice still faces numerous challenges, such as the large communication overhead due to the repetitive server-client synchronization and the lack of adaptivity by SGD-based model updates.  Despite that various methods have been proposed for reducing the communication cost by gradient compression or quantization, and the federated versions of adaptive optimizers such as FedAdam are proposed to add more adaptivity, the current federated learning framework still cannot solve the aforementioned challenges all at once. In this paper, we propose a novel communication-efficient adaptive federated learning method (FedCAMS) with theoretical convergence guarantees. We show that in the nonconvex stochastic optimization setting, our proposed FedCAMS achieves the same convergence rate of $\cO(\frac{1}{\sqrt{TKm}})$ as its non-compressed counterparts. Extensive experiments on various benchmarks verify our theoretical analysis.
\end{abstract}

\section{Introduction}
Federated learning (FL) \citep{konevcny2016federated,mcmahan2017communication} has recently become a popular machine learning training paradigm where multiple clients cooperate to jointly learn a machine learning model. In the federated learning setting, training data is distributed across a large number of clients, or edge devices, such as smartphones, personal computers, or IoT devices. These clients own valuable data for training a variety of machine learning models, yet those raw client data is not allowed to share with the server or other clients due to privacy and regulation concerns. 
Federated Learning \citep{konevcny2016federated,mcmahan2017communication} works by having each client train the ML model locally based on its own data, while having the clients iteratively exchanging and synchronizing their local ML model parameters with each other through a central server.
\citet{mcmahan2017communication} proposed FedAvg algorithm, whose global model is updated by averaging multiple steps of local stochastic gradient descent (SGD) updates, and it has become one of the most popular FL methods. 

Despite the ability to jointly train the model without directly sharing the data, the implementation of FL in practice still faces several major challenges such as (1) \textit{large communication overhead} due to the repetitive synchronization between the server and the clients; and (2) \textit{lack of adaptivity} as SGD-based update may not be suitable for heavy-tail stochastic gradient noise distributions, which often arise in training large-scale models such as BERT \citep{devlin2018bert}, GPT-3 \citep{brown2020language}, GAN \citep{goodfellow2014generative} or ViT \citep{dosovitskiy2021an}.

Note that various attempts have been made to solve the aforementioned challenges individually but not all of them at once. In terms of reducing communication costs, one can avoid transmitting the complete model updates when synchronizing. Several works, including \citep{reisizadeh2020fedpaq, jin2020stochastic, jhunjhunwala2021adaptive, chen2021communication}, have studied the compressed and quantized federated learning optimization method based on FedAvg. Another way is to reduce the number of participating clients such that only part of the clients participate in the model training at each round \citep{yang2021achieving,li2019convergence,nishio2019client, li2019fedmd}. Besides, the network resources allocation also plays an important role in communication-efficient federated learning problems \citep{li2019fair,yang2020energy}. 
For adaptivity concerns, recently, FedAdam \citep{reddi2020adaptive} and other variants, such as FedYogi \citep{reddi2020adaptive} and FedAMSGrad \citep{tong2020effective} were proposed to introduce adaptive gradient methods \citep{kingma2014adam,j.2018on} into federated learning framework and provided provable convergence guarantees. However, it is still an open problem how to achieve communication efficient adaptive federated optimization while still providing rigorous convergence guarantees.

In this paper, we aim to develop a new compressed federated adaptive gradient optimization method that is communication-efficient while also provably convergences. Specifically, we first propose FedAMS, a variant of FedAdam, with an improved convergence analysis over the original FedAdam. Based on FedAMS, we propose FedCAMS, a \textbf{Fed}erated \textbf{C}ommunication-compressed \textbf{A}MSGrad with \textbf{M}ax  \textbf{S}tablization (FedCAMS), which addresses both the communication and adaptivity challenges within one training framework.
We summarize our contributions as follows:
\vspace{-10pt}
\begin{itemize}[itemsep=0ex,leftmargin=1em]
    \item We provide an improved analysis on the convergence behaviour of FedAMS, a variant of the existing federated adaptive gradient method FedAdam \citep{reddi2020adaptive}, whose analysis is simplified for only considering the case where no momentum is been used. In particular, we prove that FedAMS (with momentum) can achieve the same convergence rate of $\cO(\frac{1}{\sqrt{TKm}})$ w.r.t total iterations $T$, the number of local updates $K$, and the number of workers $m$ for both full participation and partial participation schemes.
    \item We propose a new communication-efficient adaptive federated optimization method, FedCAMS, which to the best of our knowledge, for the first time, achieves both communication efficiency and adaptivity in federated learning with one single learning framework. FedCAMS largely reduces the communication cost by error feedback and compression strategy and it is compatible with various commonly-used compressors in practice. We prove that FedCAMS achieves the same convergence rate of $\cO(\frac{1}{\sqrt{TKm}})$, as its uncompressed counterpart FedAMS. 
    \item We conduct experiments on various benchmarks and show that our proposed FedAMS and FedCAMS achieve good adaptivity in training real-world machine learning models. Furthermore, we show that FedCAMS effectively reduced the communication cost (number of bits for communication) by orders of magnitude while sacrificing little in terms of prediction accuracy.
\end{itemize}
\vspace{-5pt}
\textbf{Notation:} For vectors $\xb,\yb \in \RR^d$, $\sqrt{\xb},\xb^2, \xb/\yb$ denote the element-wise square root, square, and division of the vectors. For vector $\xb$ and matrix $A$, $\|\cdot\|$ denotes the $\ell_2$ norm of vector/matrix, i.e., $\|\xb\| = \|\xb\|_2$ and $\|A\| = \|A\|_2$. 

\section{Related Work}

\noindent\textbf{SGD and Adaptive Gradient Methods:} Stochastic gradient descent (SGD) \citep{robbins1951stochastic} has been widely applied in training machine learning models for decades. Although SGD is straightforward to implement, it is known to be sensitive to parameters and relatively slow to converge when facing heavy-tail stochastic gradient noise. Adaptive gradient methods were proposed to overcome these issues of SGD, including AdaGrad \citep{duchi2011adaptive}, RMSProp \citep{tieleman2012lecture}, AdaDelta \citep{zeiler2012adadelta}. Adam \citep{kingma2014adam} and its variant AMSGrad \citep{j.2018on}, are tremendously used in training deep neural networks, and other variants \citep{luo2019adaptive,loshchilov2017decoupled,chen2020closing} also play important roles in improving adaptive gradient methods through different aspects.    
\\
\textbf{Federated Learning: }As the demand of locally data storing and training models at edge devices, Federated Learning \citep{konevcny2016federated,li2020federated} rapidly attracts growing interest in recent years. 
Federated Averaging method (FedAvg) \citep{mcmahan2017communication} works by periodically averaging local SGD updates. \citet{stich2018local} provided a concise theoretical convergence guarantee for local SGD. \citet{lin2018don} proposed a variant of local SGD with empirical improvements. There are many works based on FedAvg such as FedProx \citep{li2020federated}, FedNova \citep{wang2020tackling}, SCAFFOLD \citep{karimireddy2020scaffold}, and other work discussed the variants of FedAvg
\citep{yang2021achieving, li2019convergence, hsu2019measuring, wang2019slowmo}. \citet{reddi2020adaptive} recently proposed several adaptive federated optimization methods including FedAdagrad, FedYogi and FedAdam to overcome the existing convergence issues of FedAvg. \citet{chen2020toward} proposed Local AMSGrad and \citet{tong2020effective} proposed a family of federated adaptive gradient methods with calibrations. 
Another line of research focused on addressing data heterogeneity issues or the network resource allocation issues \citep{ghosh2019robust,li2019fedmd, yang2020energy}.
\\
\textbf{Communication-Compressed Federated Learning:}
Various strategies have been proposed for reducing communication costs in distributed learning for SGD based algorithms \citep{bernstein2018signsgd,seide20141,alistarh2017qsgd,basu2019qsparse,stich2018sparsified,stich2019error,karimireddy2019error} and also adaptive gradient methods \citep{tang20211,wang2022communication}.
In terms of federated learning, many studies have tried to apply the aforementioned methods to FedAvg and have attracted growing interest recently, e.g., FedPAQ \citep{reisizadeh2020fedpaq}, FedCOM \citep{ haddadpour2021federated}, sign SGD in federated learning \citep{jin2020stochastic}, communication-efficient federated learning \citep{ chen2021communication}, AdaQuantFL \citep{jhunjhunwala2021adaptive}. However, there are fewer attempts to develop communication-efficient adaptive gradient methods in federated learning, which is our key focus in this work. 

\section{Proposed Method}
\vspace{-5pt}
In this paper, we aim to study the following federated learning nonconvex optimization problem: 
\vspace{-5pt}
\begin{align}\label{eq:prob_setup}
    \min_{x \in \RR^d} f(\xb)= \frac{1}{m} \sum_{i=1}^m F_i(\xb),
\end{align}
where $m$ is the total amount of local clients, $d$ denotes the dimension of the model parameters, $F_i(\xb) = \EE_{\xi \sim \cD_i} F_i(\xb,\xi_i)$ is the local nonconvex loss function on client $i$ associated with a local distribution $\cD_i$. In the stochastic setting, we can only obtain the unbiased estimator of $F_i(\xb)$, i.e., the stochastic gradient $\gb_t^i = \nabla F_i(\xb,\xi_i)$. In the non i.i.d setting, distributions $\cD_i, \cD_j$ can vary from each other, i.e., $\cD_i\neq \cD_j$, $\forall i \neq j$.

FedAvg \citep{mcmahan2017communication} is a commonly used optimization approach to solve \eqref{eq:prob_setup}. Let $\xb_{t}$ denotes the global model parameters before the $t$-th iteration. Now at iteration $t$, the participating client $i$ from the selected subset $\cS_{t}$ (with size $n$) receives the model $\xb_t$ from the server, conducts $K$ steps of local SGD updates with local learning rate $\eta_l$, obtains the local model $\xb_{t,K}^i$. Client $i$ then sends the model difference $\Delta_t^i = \xb_{t,K}^i - \xb_t$ to the server. And the server updates the global model difference $\Delta_t$ by simply averaging the local model differences $\Delta_t^i$. The server then updates the global model $\xb_{t+1}$ by $\xb_{t+1} = \xb_t + \Delta_t$, which is the same\footnote{The global update of FedAvg is equivalent to perform one step SGD update with the pseudo gradient $\Delta_t$ and learning rate $\eta = 1$.} as directly averaging the local model $\xb_{t,K}^i$, i.e., $\xb_{t+1} = \xb_t + \frac{1}{n}\sum_{i\in \cS_t} (\xb_{t,K}^i - \xb_t) = \frac{1}{n}\sum_{i\in \cS_t} \xb_{t,K}^i$.

FedAdam was then proposed among several adaptive optimization methods in federated learning \citep{reddi2020adaptive}. FedAdam changes the global update rule of FedAvg from one-step SGD to one-step adaptive gradient optimization. Specifically, after gathering local differences $\Delta_t^i$ and averaging to $\Delta_t$, the server updates the global model by Adam optimizer:
\begin{align}
    & \mb_{t} = \beta_1\mb_{t-1} + (1-\beta_1) \Delta_t, \label{eq:adam-m} \\
    & \vb_{t} = \beta_2 \vb_{t-1} + (1-\beta_2) \Delta_t^2, \label{eq:adam-v} \\
    & \xb_{t+1} = \xb_{t} + \eta \frac{\mb_t}{\sqrt{\vb_t}+\epsilon}, \label{eq:adam} 
\end{align}
where $\Delta_t$ acts as pseudo gradient, and the global update can be viewed as one step Adam update using $\Delta_t$. 
Several variants were also proposed will slight changes in the variance term $\vb_t$, such as FedAdagrad and FedYogi \citep{reddi2020adaptive} and FedAMSGrad \citep{tong2020effective}.
Note that the $\epsilon$ in \eqref{eq:adam} is used for numerical stabilization purpose as the $\vb_t$ term can be quite small and cause unstable optimization behaviours.

\subsection{Federated AMSGrad with Max Stabilization}

In this section, we propose a general adaptive federated optimization framework, \textbf{Fed}erated \textbf{A}MSGrad with \textbf{M}ax \textbf{S}tabilization (FedAMS), where the server conducts one additional max stabilization step before the final update. 

Algorithm \ref{alg:fedams} summarize the details of general FedAMS framework. At the beginning of global round $t$, we first select a subset of clients $\cS_t$, each participating client $i \in \cS_t$ obtains the local model $\xb_{t,K}^i$ after $K$ steps of local SGD updates with learning rate $\eta_l$. The model difference $\Delta_t^i$ is the difference between the local updated model $\xb_{t,K}^i$ and the current global model $\xb_t$, i.e., $\Delta_t^i = \xb_{t,K}^i - \xb_t$. The server aggregates $\Delta_t^i$ and gets the global difference $\Delta_t$, this $\Delta_t$ acts as a pseudo gradient to calculate momentum $\mb_t$ and variance $\vb_t$ following \eqref{eq:adam-m} and \eqref{eq:adam-v}. 
Now for updating $\xb_{t+1}$, our general FedAMS framework provides two options for max stabilization:
\begin{align}
    & \text{Option 1: }\hat{\vb}_{t} = \max(\hat{\vb}_{t-1}, \vb_t, \epsilon), \xb_{t+1} = \xb_{t} + \eta \frac{\mb_t}{\sqrt{\hat{\vb}_t}}. \notag\\ 
    & \text{Option 2: }\hat{\vb}_{t} = \max(\hat{\vb}_{t-1}, \vb_t), \xb_{t+1} = \xb_{t} + \eta \frac{\mb_t}{\sqrt{\hat{\vb}_t}+\epsilon}. \notag
\end{align}
Note that Option 2 is the same as the AMSGrad \citep{j.2018on} update rule, which 
brings a non-decreasing $\vb_t$ to solve a non-convergence issue in Adam \citep{kingma2014adam}. 
For Option 1, FedAMS directly adopts $\sqrt{\hat{\vb}_t}$ as the denominator where $\epsilon$ is token as the part of the max operation in $\hat{\vb}_t$. Intuitively, the unstable behaviour of the denominator (small value in the $\vb_t$) usually only happens for a small set of dimensions. Therefore, the max stabilization strategy in Option 1 only affects those dimensions with small $\vb_t$ values, while the traditional adding strategy as in Option 2 will affect the accuracy on all dimensions.

Moreover, we want to emphasize that although the theoretical analysis in \citet{reddi2020adaptive} assumes $\beta_1 = 0$ and only considers the impact of variance $\vb_t$, thus the non-decreasing variance is not necessary for the analysis in \citet{reddi2020adaptive}. While the non-decreasing variance is indeed necessary for us to obtain the complete proof with a positive $\beta_1$ (see Appendix for details).


\begin{algorithm}[t]
\caption{FedAMS}
  \label{alg:fedams}
  \begin{flushleft}
        \textbf{Input:} initial point $\xb_1$, local step size $\eta_l$, global stepsize $\eta$, $\beta_1, \beta_2, \epsilon$.
        \end{flushleft}
  \begin{algorithmic}[1]
        \STATE $\mb_0\gets 0$, $\vb_0\gets 0$ 
      \FOR{$t=1$ to $T$}
        \STATE Random sample a subset $\cS_t$ of clients
        \STATE Server sends $\xb_t$ to the subset $\cS_t$ of clients
        \STATE $\xb_{t,0}^i= \xb_t$
        \FOR{ each client $i \in \cS_t$ in parallel}
            \FOR{$k = 0,...,K-1$}
                \STATE Compute local stochastic gradient: $\gb_{t,k}^i= \nabla F_i(\xb_{t,k}^i; \xi_{t,k}^i)$
                \STATE $\xb_{t,k+1}^i = \xb_{t,k}^i -\eta_l \gb_{t,k}^i$
            \ENDFOR
            \STATE $\Delta_t^i = \xb_{t,K}^i-\xb_t$
        \ENDFOR
        \STATE Server aggregates local update: $\Delta_t = \frac{1}{|\cS_t|} \sum_{i\in\cS_t} \Delta_t^i$
        \STATE Update $\mb_t=\beta_1 \mb_{t-1}+(1-\beta_1)\Delta_t\;$
        \STATE Update $\vb_t=\beta_2 \vb_{t-1}+(1-\beta_2)\Delta_t^2\;$
        \\
        {//Option 1:}\\
        \STATE
        \colorbox{blue!20}{$\hat{\vb}_t=\max(\hat{\vb}_{t-1},\vb_t, \epsilon)$, update $\xb_{t+1}=\xb_t+ \eta \frac{\mb_t}{\sqrt{\hat{\vb}_t}}$}
        {//Option 2:}\\
        \STATE \colorbox{red!20}{ $\hat{\vb}_t = \max(\hat{\vb}_{t-1},\vb_t)$,  update $\xb_{t+1}=\xb_t+ \eta \frac{\mb_t}{\sqrt{\hat{\vb}_t}+\epsilon}$}
      \ENDFOR
  \end{algorithmic}
\end{algorithm}

\subsection{Federated Communication-Compressed AMSGrad}

In order to reduce the communication costs between synchronization, we propose \textbf{Fed}erated \textbf{C}ommunication-compressed \textbf{A}MSGrad with \textbf{M}ax \textbf{S}tabilization (FedCAMS), which is summarized in Algorithm \ref{alg:compfedams}. The main difference lies in that after the client $i$ obtains the model differences $\Delta_t^i$ via local SGD, FedCAMS will compress $\Delta_t^i$ to $\hat{\Delta}_t^i$ via error feedback compression strategy, and then send $\hat{\Delta}_t^i$ to the central server. In details, at round $t$, the client $i$ will apply the compressor on the summation of model differences $\Delta_t^i$ together with the cumulative compression error $\eb_t^i$ to obtain $\hat{\Delta}_t^i$. After that, the client will update term $\eb_{t+1}^i$ by calculating the new cumulative compression error, i.e., $\eb_{t+1}^i = \Delta_t + \eb_t^i - \hat{\Delta}_t^i$, which will be useful for next round's computation. The rest part of FedCAMS is similar to FedAMS: the server aggregates $\hat{\Delta}_t^i$ and obtains $\hat{\Delta}_t$, which will participate in the global update. 

To summarize, FedCAMS is indeed a communication-efficient with the following features.

\textbf{Error Feedback Compression:} Although error-feedback strategy \citep{karimireddy2019error,stich2018sparsified,stich2019error} has been widely used in various distributed learning settings, there is much less use of error-feedback in the federated settings, especially for adaptive federated optimization. Note that combining error-feedback with adaptive federated optimization is not a trivial task at all, instead, it is actually quite complicated.
Specifically, the theoretical analysis of the adaptive gradient method in the typical nonconvex setting relies on the construction of the Lyapunov function. Compared to directly analyzing the model parameter $\xb$, this causes extra difficulty as applying the error feedback strategy on the smoothness-expanded terms from the Lyapunov function will result in an accumulation of the compression error which leads to divergence\footnote{Similar divergence issue has also been discussed in \citet{tang20211,wang2022communication} in the distributed setting.}. In our theoretical analysis, we have to modify the original construction of the Lyapunov function and introduce a new auxiliary sequence about the compression error which eliminates the accumulation of compression error. 
Unlike those direct compression strategies such as simple quantization or direct compression strategies \citep{haddadpour2021federated,reisizadeh2020fedpaq,jin2020stochastic} which usually require an unbiased compressor to work, error feedback allows for various biased compressors such as commonly used scaled sign compressor or top-$k$ compressors. Furthermore, the design of the error feedback strategy is well-known for reducing unnecessary compression error, which leads to a more precise model update. 

\textbf{Support for Partial Participation: }Here we also make error feedback compatible with partial participation settings by keeping the stale cumulative compression error for clients who were not selected for the current round training (see Lines 14-16 in Algorithm \ref{alg:compfedams}). 
Such design makes FedCAMS more practical and communication efficient. Note that the default client sampling strategy in FedCAMS is to randomly select the participating clients (without replacement) in each round, i.e., $p_i = \PP\{i \in \cS_t\} = n/m$.
This can be easily extended to the weighted sampling strategy with probability $p_i = w_i$, even with varying numbers of participating workers $n$.

\begin{algorithm}[h!]
\caption{FedCAMS}
  \label{alg:compfedams}
  \begin{flushleft}
        \textbf{Input:} initial point $\xb_1$, local step size $\eta_l$, global stepsize $\eta$, $\beta_1,  \beta_2, \epsilon$, compressor $\cC(\cdot)$.
        \end{flushleft}
  \begin{algorithmic}[1]
        \STATE $\mb_0\gets 0$, $\vb_0\gets 0, \eb_1^i=0$
      \FOR{$t=1$ to $T$}
        \STATE Random sample a subset $\cS_t$ of clients
        \STATE Server sends $\xb_t$ to the subset $\cS_t$ of clients
        \STATE $\xb_{t,0}^i= \xb_t$
        \FOR{ each client $i \in \cS_t$ in parallel}
            \FOR{$k = 0,...,K-1$}
                \STATE Compute local stochastic gradient: $\gb_{t,k}^i= \nabla F_i(\xb_{t,k}^i; \xi_{t,k}^i)$
                \STATE $\xb_{t,k+1}^i = \xb_{t,k}^i -\eta_l \gb_{t,k}^i$
            \ENDFOR
            \STATE $\Delta_t^i = \xb_{t,K}^i-\xb_t$
            \STATE Compress $\hat{\Delta}_t^i = \cC(\Delta_t^i+\eb_t^i)$, send $\hat{\Delta}_t^i$ to the server and update $\eb_{t+1}^i = \Delta_t^i+\eb_t^i-\hat{\Delta}_t^i $
        \ENDFOR
        \FOR{ each client $j \notin \cS_t$ in parallel}
            \STATE client $j$ maintains the stale compression error $\eb_{t+1}^j = \eb_t^j$
        \ENDFOR
        \STATE Server aggregates local update $\hat{\Delta}_t = \frac{1}{|\cS_t|} \sum_{i\in\cS_t} \hat{\Delta}_t^i$
        \STATE Server updates $\xb_{t+1}$ using $\hat{\Delta}_t$ in the same way as in Algorithm \ref{alg:fedams} (Line 14-17)
      \ENDFOR
  \end{algorithmic}
\end{algorithm}

\section{Convergence Analysis}
In this section, we present the theoretical convergence results of our proposed FedAMS and FedCAMS in Algorithm \ref{alg:fedams} and \ref{alg:compfedams}. We first introduce some assumptions needed for the proof.
\begin{assumption}[Smoothness]\label{as:smooth}
Each loss function on the $i$-th worker $F_i(\xb)$ is $L$-smooth, i.e., $\forall \xb,\yb \in \RR^d$, 
\vspace{-5pt}
\begin{align*}
    \big|F_i(\xb)-F_i(\yb)-\langle\nabla F_i(\yb), \xb-\yb\rangle\big| \leq \frac{L}{2}\|\xb-\yb\|^2.
\end{align*}
\end{assumption}
This also implies the $L$-gradient Lipschitz condition, i.e., $\|\nabla F_i(\xb) - \nabla F_i(\yb) \| \leq L \|\xb - \yb \|$. Assumption \ref{as:smooth} is a standard assumption in nonconvex optimization problems, which has been also adopted in \citet{kingma2014adam, j.2018on,li2019convergence,yang2021achieving}.

\begin{assumption}[Bounded Gradient]\label{as:bounded-g}
Each loss function on the $i$-th worker $F_i(\xb)$ has $G$-bounded stochastic gradient on $\ell_2$, i.e., for all $\xi$, we have $\|\nabla f_i(\xb,\xi)\|\leq G$.
\end{assumption}
The assumption of bounded gradient is usually adopted in adaptive gradient methods \citep{kingma2014adam,j.2018on,zhou2018convergence,chen2020closing}

\begin{assumption}[Bounded Variance]\label{as:bounded-v} Each stochastic gradient on the $i$-th worker has a bounded local variance, i.e., for all $\xb, i \in [m]$,we have
    $\EE \big[ \|\nabla f_i(\xb,\xi)- \nabla F_i(\xb)\|^2\big] \leq \sigma_l^2$,
and the loss function on each worker has a global variance bound,
$\frac{1}{m}\sum_{i=1}^m \|\nabla F_i(\xb)-\nabla f(\xb)\|^2 \leq \sigma_g^2$.
\end{assumption}
Assumption \ref{as:bounded-v} is widely used in federated optimization problems \citep{li2019convergence,reddi2020adaptive,yang2021achieving}. The bounded local variance represents the randomness of stochastic gradients, and the bounded global variance represents data heterogeneity between clients. Note that $\sigma_g = 0$ corresponds to the \textit{i.i.d} setting, in which datasets from each client have the same distribution. 

In the following, we will show the convergence results of FedAMS\footnote{For simplicity, we will only present the convergence guarantee with Option 1. Note that the theoretical analysis can be easily extended to Option 2 with constant-only changes.} and FedCAMS. 

\subsection{Convergence Analysis for FedAMS} \label{subsec:fedams}

\noindent\textbf{Full Participation:}
For the full participation scheme, all workers participate in the communication rounds and model update, i.e., $|\cS_t| = m, \forall t \in [t]$.
\begin{theorem}\label{thm:full-noncomp}
Under Assumptions~\ref{as:smooth}-\ref{as:bounded-v}, if the local learning rate $\eta_l$ satisfies the following condition: $\eta_l \leq \min \Big\{\frac{1}{8KL}, \frac{\epsilon}{K \sqrt{\beta_2 K^2 G^2 + \epsilon} [(3+C_1^2) \eta L + 2\sqrt{2(1-\beta_2)}G]}\Big\}$, then the iterates of FedAMS in Algorithm \ref{alg:fedams} under full participation scheme satisfy
\begin{align}\label{eq:fedams-thm}
    & \min_{t\in [T]} \EE \big[\|\nabla f(\xb_t)\|^2\big] \notag\\
    & \leq 4 \sqrt{\beta_2 \eta_l^2 K^2 G^2 + \epsilon} \bigg[\frac{f_0-f_*}{\eta\eta_l K T} + \frac{\Psi}{T} + \Phi \bigg],
\end{align}
where $\Psi = \frac{C_1 G^2 d}{\sqrt{\epsilon}}+ \frac{2C_1^2 \eta\eta_l K L G^2 d}{\epsilon}$, $\Phi = \frac{5 \eta_l^2 K L^2}{\sqrt{2\epsilon}} (\sigma_l^2+6K \sigma_g^2)+ [(3+C_1^2)\eta L + 2\sqrt{2(1-\beta_2)} G] \frac{\eta_l}{2m \epsilon} \sigma_l^2$ and $C_1 = \frac{\beta_1}{1-\beta_1}$.
\end{theorem}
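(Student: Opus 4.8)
The plan is to establish a per-iteration descent inequality for $f$, tame the momentum through an auxiliary (Lyapunov) sequence, and then telescope over $t$. First I would invoke the $L$-smoothness of $f$ (Assumption~\ref{as:smooth}) on the global update $\xb_{t+1}-\xb_t = \eta\,\mb_t/\sqrt{\hat{\vb}_t}$ to obtain
\begin{align*}
  f(\xb_{t+1}) \le f(\xb_t) + \Big\langle \nabla f(\xb_t), \eta\tfrac{\mb_t}{\sqrt{\hat{\vb}_t}}\Big\rangle + \tfrac{L}{2}\eta^2\Big\|\tfrac{\mb_t}{\sqrt{\hat{\vb}_t}}\Big\|^2.
\end{align*}
The difficulty is that $\mb_t=\beta_1\mb_{t-1}+(1-\beta_1)\Delta_t$ mixes the current pseudo-gradient with stale momentum built against older denominators $\sqrt{\hat{\vb}_s}$, $s<t$. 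To decouple this I would introduce a Lyapunov function given by $f(\xb_t)$ augmented with momentum inner products of the form $\langle \mb_{t-1}, (\xb_t-\xb_{t-1})/\sqrt{\hat{\vb}_{t-1}}\rangle$ (equivalently, track the auxiliary sequence $\zb_t = \xb_t + \tfrac{\beta_1}{1-\beta_1}(\xb_t-\xb_{t-1})$), so that the leading inner-product term collapses to a quantity controlled by $\Delta_t/\sqrt{\hat{\vb}_t}$ alone. This is exactly where the non-decreasing property $\hat{\vb}_t \ge \hat{\vb}_{t-1}$ (guaranteed by max-stabilization) is essential: it lets me bound $1/\sqrt{\hat{\vb}_t}\le 1/\sqrt{\hat{\vb}_{t-1}}$ and turn the residual momentum cross-terms into a telescoping sum rather than an accumulating one --- the point the paper flags as necessary once $\beta_1>0$.

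Next I would connect the pseudo-gradient $\Delta_t=-\eta_l\frac{1}{m}\sum_i\sum_{k=0}^{K-1}\gb_{t,k}^i$ to the true gradient via the decomposition $\Delta_t = -\eta_l K\nabla f(\xb_t) + \rb_t$, where $\rb_t$ collects the client-drift error $\xb_{t,k}^i-\xb_t$, the data-heterogeneity gap, and the stochastic noise. Bounding $\EE\|\rb_t\|^2$ is the standard local-SGD drift estimate: using Assumptions~\ref{as:bounded-g} and~\ref{as:bounded-v} one shows $\frac{1}{m}\sum_i\EE\|\xb_{t,k}^i-\xb_t\|^2$ is $O(\eta_l^2K(\sigma_l^2+K\sigma_g^2))$ plus a gradient-norm term, which is what produces the $\frac{5\eta_l^2KL^2}{\sqrt{2\epsilon}}(\sigma_l^2+6K\sigma_g^2)$ and the $\sigma_l^2/m$ pieces of $\Phi$. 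Throughout I would use two-sided control of the adaptive denominator: the floor $\hat{\vb}_t\ge\epsilon$ (from the max with $\epsilon$) to bound every error term by $1/\sqrt{\epsilon}$, and an upper bound of the form $\hat{\vb}_t \le \beta_2\eta_l^2K^2G^2+\epsilon$ (following from $\|\Delta_t\|\le\eta_l KG$ under Assumption~\ref{as:bounded-g} together with the $\epsilon$-floor), which converts the negative leading term $-c\,\eta\eta_l K\,\EE\|\nabla f(\xb_t)\|^2/\sqrt{\hat{\vb}_t}$ into the $\min_t\EE\|\nabla f(\xb_t)\|^2$ on the left-hand side and supplies the prefactor $4\sqrt{\beta_2\eta_l^2K^2G^2+\epsilon}$.

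Finally I would assemble the pieces: summing the descent inequality from $t=1$ to $T$ makes the Lyapunov differences telescope to $f_0-f_*$; collecting the momentum residuals (which, after using $\hat{\vb}_t\ge\hat{\vb}_{t-1}$ and bounding $\|\mb_t\|$ via $G$, sum to the dimension-dependent $\Psi$-terms $\frac{C_1G^2d}{\sqrt{\epsilon}}$ and $\frac{2C_1^2\eta\eta_lKLG^2d}{\epsilon}$ with $C_1=\frac{\beta_1}{1-\beta_1}$); and pushing the non-vanishing variance contributions into $\Phi$. Dividing by $\eta\eta_l K T$ and by the prefactor, then lower-bounding the running average by $\min_{t\in[T]}\EE\|\nabla f(\xb_t)\|^2$, yields \eqref{eq:fedams-thm}; the restriction on $\eta_l$ is precisely what keeps the coefficient of the leading $\|\nabla f\|^2$ term positive (so it can be moved to the left) and absorbs the second-order term $\tfrac{L}{2}\eta^2\|\mb_t/\sqrt{\hat{\vb}_t}\|^2$. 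The main obstacle is the interaction between the momentum and the changing preconditioner: without the modified Lyapunov function and the monotonicity of $\hat{\vb}_t$, the stale-momentum cross-terms would accumulate in $t$ rather than telescope, degrading the bound to a non-convergent $O(1)$ floor.
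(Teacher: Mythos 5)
Your proposal is correct and follows essentially the same route as the paper: the auxiliary sequence $\zb_t = \xb_t + \tfrac{\beta_1}{1-\beta_1}(\xb_t-\xb_{t-1})$, the decomposition of $\Delta_t$ against $-\eta_l K\nabla f(\xb_t)$ with the standard local-SGD drift bound, the two-sided control $\epsilon \le \hat{\vb}_t$ and $\beta_2\vb_{t-1}+\epsilon \le \beta_2\eta_l^2K^2G^2+\epsilon$ supplying the prefactor, and the monotonicity of $\hat{\vb}_t$ turning the momentum-times-preconditioner-difference terms into a telescoping sum bounded by the $d$-dependent $\Psi$ terms. The only cosmetic difference is that the paper organizes the smoothness expansion into four explicit terms $I_1$--$I_4$ and routes the stale-momentum control through $\sum_t\|\Vbh_{t-1}^{-1/2}-\Vbh_t^{-1/2}\|_1 \le d/\sqrt{\epsilon}$, which is exactly the telescoping mechanism you describe.
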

\begin{remark}
The upper bound for $\min_{t\in [T]} \EE [\|\nabla f(\xb_t)\|^2]$ contains three parts: the first two items that directly related to the total number of step $T$ are vanishing as $T \to \infty$. The last term in \eqref{eq:fedams-thm} relates to the local stochastic variance $\sigma_l$ and global variance $\sigma_g$. In the \textit{i.i.d} setting where each worker has the same data distribution, we have zero global variance, i.e., $\sigma_g = 0$, and the variance term $\Phi$ will be smaller and less dependent on the number of local steps $K$.
\end{remark}

\begin{corollary}\label{cor:fedams-full}
Suppose we choose the global learning rate $\eta= \Theta(\sqrt{Km})$ and local learning rate $\eta_l= \Theta (\frac{1}{\sqrt{T} K})$, when $T$ is sufficient large, i.e., $T\geq Km$, the convergence rate for FedAMS in Algorithm \ref{alg:fedams} under full participation scheme satisfies
\begin{align}\label{eq:fedams-full-cor}
    & \min_{t\in [T]} \EE \big[\|\nabla f(\xb_t)\|^2\big] =  \cO\bigg(\frac{1}{\sqrt{TKm}}\bigg).
\end{align}
\end{corollary}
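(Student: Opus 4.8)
The plan is to obtain Corollary~\ref{cor:fedams-full} as a direct asymptotic specialization of Theorem~\ref{thm:full-noncomp}: substitute $\eta=\Theta(\sqrt{Km})$ and $\eta_l=\Theta(1/(\sqrt{T}K))$ into the bound \eqref{eq:fedams-thm} and track the order of each term. Before that, I would first check that this choice is admissible, i.e. that it satisfies the two stepsize constraints $\eta_l \le \min\big\{\tfrac{1}{8KL},\ \tfrac{\epsilon}{K\sqrt{\beta_2 K^2 G^2 + \epsilon}\,[(3+C_1^2)\eta L + 2\sqrt{2(1-\beta_2)}G]}\big\}$ required by the theorem. Since $\eta$ is held at the fixed scale $\Theta(\sqrt{Km})$, both right-hand sides are positive quantities independent of $T$ (the second one is $\Theta(1/(K^{3/2}\sqrt{m}))$ after absorbing the constants), whereas $\eta_l=\Theta(1/(\sqrt{T}K))\to 0$ as $T$ grows. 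Comparing $1/(\sqrt{T}K)$ with $1/(K^{3/2}\sqrt{m})$ shows the constraint holds precisely once $\sqrt{T}\gtrsim\sqrt{Km}$, i.e. for $T\ge Km$, which is exactly the standing hypothesis of the corollary.

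Next I would dispose of the multiplicative prefactor $4\sqrt{\beta_2 \eta_l^2 K^2 G^2 + \epsilon}$. With $\eta_l=\Theta(1/(\sqrt{T}K))$ we get $\eta_l^2 K^2=\Theta(1/T)$, hence $\beta_2\eta_l^2 K^2 G^2=\Theta(1/T)\to 0$, so the prefactor converges to the constant $4\sqrt{\epsilon}$ and is in particular $\Theta(1)$; it may therefore be pulled out without changing the rate. It then remains to show each of the three bracketed terms is $\cO(1/\sqrt{TKm})$. For the first term the key product is $\eta\eta_l K=\Theta(\sqrt{Km})\cdot\Theta(1/\sqrt{T})=\Theta(\sqrt{Km}/\sqrt{T})$, so $\frac{f_0-f_*}{\eta\eta_l K T}=\Theta(1/\sqrt{TKm})$, which is exactly the claimed rate and furnishes the dominant contribution. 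For $\Psi/T$, the constant piece $C_1 G^2 d/\sqrt{\epsilon}$ gives $\Theta(1/T)$ while $2C_1^2\eta\eta_l K L G^2 d/\epsilon$ gives $\Theta(\sqrt{Km}/T^{3/2})$; invoking $T\ge Km$ (equivalently $\sqrt{T}\ge\sqrt{Km}$) yields $1/T\le 1/\sqrt{TKm}$ and $\sqrt{Km}/T^{3/2}\le 1/\sqrt{TKm}$, so both are $\cO(1/\sqrt{TKm})$.

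For $\Phi$, I would isolate the leading summand $(3+C_1^2)\eta L\cdot\eta_l\sigma_l^2/(2m\epsilon)=\Theta(\eta\eta_l/m)=\Theta(1/\sqrt{TKm})$, which is again of target order, and then bound the remaining pieces: $\Theta(\eta_l^2 K\sigma_l^2)=\Theta(1/(TK))$, $\Theta(\eta_l^2 K^2\sigma_g^2)=\Theta(1/T)$, and $\Theta(\eta_l\sigma_l^2/m)=\Theta(1/(\sqrt{T}Km))$, each of which is smaller than $1/\sqrt{TKm}$ once $T\ge Km$. Summing the three bracketed terms and multiplying by the $\Theta(1)$ prefactor then gives $\min_{t\in[T]}\EE[\|\nabla f(\xb_t)\|^2]=\cO(1/\sqrt{TKm})$, as claimed.

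I do not expect a genuine obstacle here, since the argument is an asymptotic substitution once Theorem~\ref{thm:full-noncomp} is in hand. The only points requiring care are (i) confirming that the fast-decaying $\eta_l=\Theta(1/(\sqrt{T}K))$ respects the $T$-independent stepsize constraints precisely for $T\ge Km$, and (ii) the routine but slightly tedious bookkeeping that every subleading term is absorbed into $\cO(1/\sqrt{TKm})$ — the mechanism throughout being that the hypothesis $T\ge Km$ upgrades a $1/T$ (or faster) decay into the claimed $1/\sqrt{TKm}$ bound.
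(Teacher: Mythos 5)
Your proposal is correct and follows essentially the same route as the paper: the paper's own proof of this corollary is a single sentence asserting that substituting $\eta=\Theta(\sqrt{Km})$ and $\eta_l=\Theta(\frac{1}{\sqrt{T}K})$ into Theorem \ref{thm:full-noncomp} yields the rate, and your term-by-term bookkeeping (prefactor $\Theta(1)$, leading contribution $\Theta(1/\sqrt{TKm})$ from $\eta\eta_l K=\Theta(\sqrt{Km}/\sqrt{T})$ in both the $\frac{f_0-f_*}{\eta\eta_l KT}$ term and the $\eta L\eta_l\sigma_l^2/(2m\epsilon)$ piece of $\Phi$, with the remaining terms absorbed via $T\ge Km$) is exactly the calculation the paper omits. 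One minor slip in your admissibility check: since $\sqrt{\beta_2K^2G^2+\epsilon}=\Theta(K)$, the second stepsize bound scales as $\Theta(1/(K^{5/2}\sqrt{m}))$ rather than $\Theta(1/(K^{3/2}\sqrt{m}))$, which would require $T\gtrsim K^3m$ rather than $T\ge Km$ for strict admissibility; this affects only how large $T$ must be, not the claimed rate, and the paper does not perform this check at all.
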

\begin{remark}
Corollary \ref{cor:fedams-full} suggests that with sufficient large $T$, FedAMS achieves a convergence rate of  $\cO(\frac{1}{\sqrt{TKm}})$, which matches the result for general federated non-convex optimization methods such as SCAFFOLD \citep{karimireddy2020scaffold}, FedAdam \citep{reddi2020adaptive}.  
\end{remark}
\begin{remark}
Note that compared with FedAdam \citep{reddi2020adaptive}, our theoretical analysis on FedAMS makes improvements in completing the proof. The analysis in \citet{reddi2020adaptive} can only consider the case when $\beta_1 = 0$ which largely simplifies the proof in their theoretical analysis, while we provide a full analysis on FedAMS with non-zero momentum term.
\end{remark}

\noindent\textbf{Partial Participation:}
In the partial participation scheme, we assume that only $n$ of $m$ workers participate the local update and communicate with the central server on each step $t$, i.e., $|\cS_t| = n, \forall t \in [1,T]$. The partial participation includes the randomness of sampling, and the coefficient varies for different sampling methods. Here we consider the random sampling without replacement. At the $t$-th iteration, we randomly sample a subset $\cS_t$ contains $n$ workers for local updating, for any two workers $i,j \in \cS_t$, the probability of being sampled to participate in model update are $\PP\{i \in \cS_t \} = \frac{n}{m}$ and $\PP\{i,j \in \cS_t \} = \frac{n(n-1)}{m(m-1)}$. 

\begin{theorem} \label{thm:part-fedams}
Under Assumption~\ref{as:smooth}-\ref{as:bounded-v}, if the local learning rate $\eta_l$ satisfies:
$\eta_l \leq \min\Big\{ \frac{1}{8KL},$ $\frac{n(m-1)\epsilon}{24 m(n-1) K \sqrt{\beta_2 K^2 G^2 + \epsilon} \cdot [3\eta L + C_1^2 \eta L + 2\sqrt{2(1-\beta_2)}G]} \Big\}$, then the iterates of FedAMS in Algorithm \ref{alg:fedams} under partial participation scheme satisfy
\begin{align}\label{eq:part-fedams}
    & \min_{t\in [T]} \EE \big[\|\nabla f(\xb_t)\|^2\big] \notag\\ 
    & \leq 8 \sqrt{\beta_2 \eta_l^2 K^2 G^2 + \epsilon} \bigg[\frac{f_0-f_*}{\eta\eta_l K T} + \frac{\Psi}{T} + \Phi \bigg],
\end{align}
where $\Psi = \frac{C_1 G^2 d}{\sqrt{\epsilon}}+ \frac{2C_1^2 \eta\eta_l K L G^2 d}{\epsilon}$ and $\Phi = \frac{5 \eta_l^2 K L^2}{\sqrt{2\epsilon}} (\sigma_l^2+6K \sigma_g^2)+ [(3 + C_1^2) \eta L+ 2\sqrt{2(1-\beta_2)} G] \frac{\eta_l}{2n\epsilon} \sigma_l^2 + [(3 + C_1^2) \eta L+ 2\sqrt{2(1-\beta_2)} G] \frac{\eta_l (m-n)}{2n(m-1)\epsilon} [15 K^2L^2 \eta^2 (\sigma_l^2 + 6K\sigma_g^2) + 3K\sigma_g^2]$ and $C_1 = \frac{\beta_1}{1-\beta_1}$.
\end{theorem}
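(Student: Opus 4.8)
The plan is to reuse the entire Lyapunov-function descent machinery established for the full-participation case (Theorem~\ref{thm:full-noncomp}) and to isolate the single place where partial participation differs, namely the aggregated pseudo-gradient. Under full participation the server forms $\bar{\Delta}_t:=\frac{1}{m}\sum_{i=1}^m\Delta_t^i$, whereas under partial participation it forms $\Delta_t=\frac{1}{n}\sum_{i\in\cS_t}\Delta_t^i$, an \emph{unbiased} surrogate with $\EE_{\cS_t}[\Delta_t]=\bar{\Delta}_t$. First I would take the one-step descent inequality from the full-participation proof verbatim and insert the decomposition $\Delta_t=\bar{\Delta}_t+(\Delta_t-\bar{\Delta}_t)$ wherever $\Delta_t$ enters. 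Using the non-decreasing, max-stabilized $\hat{\vb}_t$ to peel the adaptive denominator off the current-round randomness (the standard AMSGrad telescoping responsible for the $\Psi$ terms, with $\sqrt{\hat{\vb}_t}\ge\sqrt{\epsilon}$ throughout), the first-order/cross terms reduce, after taking the sampling expectation and invoking $\EE_{\cS_t}[\Delta_t-\bar{\Delta}_t]=0$, to exactly their full-participation values; this is consistent with $\Psi$ being unchanged across the two theorems. The sampling noise survives only inside the second-moment terms carrying the $\frac{1}{\epsilon}$ factor.

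The main new ingredient is a sampling-variance lemma. Conditioning on the local stochastic gradients so that the $\Delta_t^i$ are fixed, and applying the standard identity for uniform sampling of $n$ indices out of $m$ without replacement, I would obtain
\begin{align*}
\EE_{\cS_t}\big[\|\Delta_t-\bar{\Delta}_t\|^2\big]\;\le\;\frac{m-n}{n(m-1)}\cdot\frac{1}{m}\sum_{i=1}^m\|\Delta_t^i-\bar{\Delta}_t\|^2,
\end{align*}
which already produces the characteristic factor $\frac{m-n}{n(m-1)}$ appearing in the extra term of $\Phi$. The empirical dispersion $\frac{1}{m}\sum_{i}\|\Delta_t^i-\bar{\Delta}_t\|^2$ is then controlled by unrolling $\Delta_t^i=-\eta_l\sum_{k=0}^{K-1}\gb_{t,k}^i$ and bounding how far the local trajectories $\xb_{t,k}^i$ drift apart: Assumption~\ref{as:bounded-v} supplies the stochastic-noise contribution $\propto\sigma_l^2$ and the heterogeneity contribution $\propto\sigma_g^2$, while the $L$-smoothness of Assumption~\ref{as:smooth} converts the accumulated drift over $K$ local steps into a bound of the form $15K^2L^2\eta_l^2(\sigma_l^2+6K\sigma_g^2)+3K\sigma_g^2$, i.e.\ the bracket in the last summand of $\Phi$.

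With these two pieces I would substitute the sampling-variance bound into the second-moment terms of the telescoped descent. Each such term splits into its full-participation counterpart, now averaged over the $n$ active clients (which is why the $\frac{\eta_l}{2m\epsilon}\sigma_l^2$ term of Theorem~\ref{thm:full-noncomp} becomes $\frac{\eta_l}{2n\epsilon}\sigma_l^2$), plus the genuinely new contribution scaled by $\frac{\eta_l(m-n)}{2n(m-1)\epsilon}$. Summing over $t=1,\dots,T$, dividing by the coefficient of the retained gradient term, and using Assumption~\ref{as:bounded-g} to upper-bound $\sqrt{\hat{\vb}_t}\le\sqrt{\beta_2\eta_l^2K^2G^2+\epsilon}$ (so that the weighted sum $\sum_t\|\nabla f(\xb_t)\|^2/\sqrt{\hat{\vb}_t}$ lower-bounds $\min_t\EE\|\nabla f(\xb_t)\|^2$) yields \eqref{eq:part-fedams}. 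The prefactor loosens from $4$ to $8$ because the extra positive second-order term from sampling must be absorbed by the negative gradient term, leaving only half the coefficient one keeps under full participation, and this same absorption forces the stricter step-size condition, carrying the additional factor $\frac{n(m-1)}{m(n-1)}$ relative to the full-participation condition.

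The step I expect to be the main obstacle is the empirical-dispersion bound $\frac{1}{m}\sum_i\|\Delta_t^i-\bar{\Delta}_t\|^2$. Unlike the ordinary single-client drift estimate (which produces the unchanged $\frac{5\eta_l^2KL^2}{\sqrt{2\epsilon}}(\sigma_l^2+6K\sigma_g^2)$ term), here I must control deviations \emph{across} clients rather than each client from its own starting point, so the heterogeneity $\sigma_g$ enters at leading order through the stand-alone $3K\sigma_g^2$ rather than only through the drift, and the stochastic noise must be handled without the $\frac{1}{m}$ averaging that full participation enjoys. Ensuring that the resulting drift-plus-sampling variance is small enough either to be reabsorbed into the descent or to appear merely as a constant floor in $\Phi$—rather than compounding over the $T$ rounds—is exactly what dictates the tighter bound on $\eta_l$; coupling this control with the max-stabilized denominator $\sqrt{\hat{\vb}_t}$, whose lower bound $\sqrt{\epsilon}$ must be used uniformly in $t$, is the delicate part of closing the argument.
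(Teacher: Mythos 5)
Your proposal matches the paper's proof in all essentials: the same Lyapunov sequence and descent decomposition, the same identification of $\EE[\|\Delta_t\|^2]$ as the only place where partial participation enters, the same without-replacement sampling variance (the paper computes $\EE\|\Delta_t\|^2$ directly via the inclusion probabilities $\PP\{i\in\cS_t\}=n/m$, $\PP\{i,j\in\cS_t\}=\tfrac{n(n-1)}{m(m-1)}$, which is algebraically identical to your centered form with the $\tfrac{m-n}{n(m-1)}$ factor), the same drift bound yielding the $15K^2L^2\eta_l^2(\sigma_l^2+6K\sigma_g^2)+3K\sigma_g^2$ bracket, and the same absorption of the sampling-induced $\|\nabla f(\xb_t)\|^2$ contribution into the negative gradient term, which is exactly what produces the prefactor $8$ and the tightened step-size condition. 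No gaps.
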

\begin{remark}
The upper bound for $\min_{t\in [T]} \EE [\|\nabla f(\xb_t)\|^2]$ of partial participation is similar to full participation case but with a larger variance term $\Phi$. This is due to the fact that random sampling of participating workers introduces an additional variance during sampling. In the \textit{i.i.d} setting where we have zero global variance, i.e., $\sigma_g = 0$, the variance term will get smaller and less dependent on the number of local steps $K$ as well.
\end{remark}

\begin{corollary}\label{cor:fedams-part}
Suppose we choose the global learning rate $\eta= \Theta(\sqrt{Kn})$ and local learning rate $\eta_l = \Theta (\frac{1}{\sqrt{T} K})$, the convergence rate for FedAMS in Algorithm \ref{alg:fedams} under partial participation scheme without replacement sampling is
\begin{align}\label{eq:cor-fedams-part}
    & \min_{t\in [T]} \EE \big[\|\nabla f(\xb_t)\|^2\big] = \cO\bigg(\frac{\sqrt{K}}{\sqrt{Tn}}\bigg).
\end{align}
\end{corollary}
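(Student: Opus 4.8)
The plan is to treat Corollary~\ref{cor:fedams-part} as a direct specialization of Theorem~\ref{thm:part-fedams}: substitute $\eta = \Theta(\sqrt{Kn})$ and $\eta_l = \Theta(\tfrac{1}{\sqrt{T}K})$ into the bound~\eqref{eq:part-fedams} and track the order of every term in $K$, $n$, and $T$. Before doing so I would verify that this choice of $\eta_l$ actually satisfies the stepsize constraint of the theorem. Since $\eta_l = \Theta(\tfrac{1}{\sqrt{T}K})$ is decreasing in $T$ while the right-hand side of the constraint $\eta_l \le \min\{\tfrac{1}{8KL},\,\cdots\}$ is independent of $T$, the condition holds for all sufficiently large $T$; this is where the requirement that $T$ be large enters. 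I would also simplify the prefactor first: because $\eta_l^2 K^2 = \Theta(1/T)\to 0$, we have $8\sqrt{\beta_2\eta_l^2 K^2 G^2 + \epsilon} = \Theta(\sqrt{\epsilon}) = \Theta(1)$, so the rate is governed entirely by the bracket $[\tfrac{f_0-f_*}{\eta\eta_l K T} + \tfrac{\Psi}{T} + \Phi]$.

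Next I would evaluate the three groups of terms separately. For the optimization term, $\eta\eta_l K = \Theta(\tfrac{\sqrt{Kn}}{\sqrt{T}})$, so $\tfrac{f_0-f_*}{\eta\eta_l K T} = \Theta(\tfrac{1}{\sqrt{TKn}})$. For $\tfrac{\Psi}{T}$, the constant piece $\tfrac{C_1 G^2 d}{\sqrt{\epsilon}}$ contributes $\Theta(1/T)$ and the piece $\tfrac{2C_1^2\eta\eta_l K L G^2 d}{\epsilon}$ contributes $\Theta(\tfrac{\sqrt{Kn}}{T^{3/2}})$; both are $o(1/\sqrt{T})$ and hence negligible. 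The interesting part is $\Phi$, whose first two sub-terms---the local drift term $\tfrac{5\eta_l^2 K L^2}{\sqrt{2\epsilon}}(\sigma_l^2 + 6K\sigma_g^2)$ and the local-variance term carrying $\tfrac{\eta_l}{2n\epsilon}\sigma_l^2$---evaluate to $\Theta(1/T)$ and $\Theta(\tfrac{1}{\sqrt{TKn}})$ respectively, both dominated by the last sub-term.

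The decisive contribution is the sampling-variance sub-term of $\Phi$, namely the one carrying the factor $\tfrac{m-n}{m-1}$ that is absent from the full-participation bound. After substituting the stepsizes into $[(3+C_1^2)\eta L + 2\sqrt{2(1-\beta_2)}G]\,\tfrac{\eta_l(m-n)}{2n(m-1)\epsilon}\,[\cdots]$ and simplifying, this term is of order $\Theta(\tfrac{\sqrt{K}}{\sqrt{Tn}})$, which strictly dominates the $\Theta(\tfrac{1}{\sqrt{TKn}})$ contributions by a factor of $K$. Collecting all pieces then yields $\min_{t\in[T]}\EE[\|\nabla f(\xb_t)\|^2] = \cO(\tfrac{\sqrt{K}}{\sqrt{Tn}})$. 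The main obstacle is not any single estimate but the bookkeeping: one must compare several terms whose $K$-, $n$-, and $T$-scalings differ and confirm that the sampling-variance term is the unique dominant one. A useful sanity check is that setting $n = m$ annihilates the $(m-n)$ factor, removing precisely this term and recovering the sharper $\cO(\tfrac{1}{\sqrt{TKm}})$ rate of the full-participation Corollary~\ref{cor:fedams-full}; the extra $\sqrt{K}$ factor here is exactly the price of the additional variance introduced by random client sampling.
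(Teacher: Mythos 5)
Your proposal is correct and follows essentially the same route as the paper: the paper's own proof of Corollary~\ref{cor:fedams-part} is a one-line substitution of $\eta = \Theta(\sqrt{Kn})$ and $\eta_l = \Theta(\tfrac{1}{\sqrt{T}K})$ into Theorem~\ref{thm:part-fedams}, and your term-by-term bookkeeping (correctly identifying the sampling-variance sub-term of $\Phi$ as the dominant $\Theta(\tfrac{\sqrt{K}}{\sqrt{Tn}})$ contribution) simply makes that substitution explicit. Your check of the stepsize constraint and the $n=m$ sanity check are sensible additions that the paper leaves implicit.
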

\begin{remark}
Note that Corollary \ref{cor:fedams-part} suggests that the dominant term in \eqref{eq:part-fedams} is $\cO(\frac{\sqrt{K}}{\sqrt{Tn}})$, which directly relates to the global variance $\sigma_g^2$. Such convergence rate is consistent with the partial participation result of FedAvg in the non i.i.d case in \citep{yang2021achieving}. It shows that the global variance has more impact on convergence behaviour in partial participation cases, especially in highly \textit{non i.i.d.} cases where $\sigma_g$ is large. Corollary \ref{cor:fedams-part} also suggests that larger number of participating clients $n$ would accelerate the convergence. Note that although FedAdam \citep{reddi2020adaptive} did not provide explicit conclusions on the partial participation setting, its appendix introduced the necessary steps for analyzing such setting, which cannot imply such desired relationship with $n$. 
\end{remark}
\begin{remark}
The impact of the number of local updates $K$ is complicated. In partial participation settings, it shows that larger $K$ slows down the convergence while full participation suggests the opposite. Similar slow-down result has also been mentioned in \citet{li2019convergence}, while some others \citep{stich2018local,mcmahan2017communication} showed that larger $K$ would increase the convergence rate. We will leave this as future work.
\end{remark}

\subsection{Convergence Analysis for FedCAMS} \label{subsec:cfedams}
Let us first introduce the assumption for the compressor.
\begin{assumption}[Biased Compressor]\label{as:compressor}Consider a biased operator $\cC: \RR^d \to \RR^d$: for $\forall \xb \in \RR^d$, there exists constant $0\leq q \leq 1$ such that 
\begin{align*}
    \|\cC(\xb)-\xb\| \leq q \|\xb\|, \forall \xb\in \RR^d.
\end{align*}
Note that $q = 0$ leads to $\cC(\xb)=\xb$ which means no compression to $\xb$. Assumption \ref{as:compressor} is a standard assumption for biased compressors \citep{karimireddy2019error,alistarh2018convergence}. There are several widely used compressors satisfying \ref{as:compressor} such as scaled-sign compressor and top-$k$ compressor.

\noindent\textbf{Top-$k$} \citep{stich2018sparsified}: For $1\leq k \leq d$ and $\forall \xb \in \RR^d$, the coordinate of $\xb$ is ordered by the magnitude $|x_{(1)}|\leq |x_{(2)}|\leq \cdots \leq |x_{(d)}|$. Denote $\alpha_1, \alpha_2,...,\alpha_d$ as standard unit basis vectors in $\RR^d$. The compressor $\cC_{\text{top}}: \RR^d \to \RR^d$ is defined as:
$\cC_{\text{top}}(\xb) = \sum_{i=d-k+1}^d \xb_{(i)} \alpha_{(i)}$.
\begin{remark}
Let us define the compression ratio as $r = k/d$. It can be shown that $\|\cC_{\text{top}}(\xb)- \xb\|^2 \leq (1-k/d) \|\xb\|^2 = (1-r) \|\xb\|^2$, and thus we have $q = \sqrt{1-r}$. 
\end{remark}

\noindent\textbf{Scaled sign} \citep{karimireddy2019error}:For $1\leq k \leq d$ and $\forall \xb \in \RR^d$, the compressor $\cC_{\sign}: \RR^d \to \RR^d$ is defined as
\begin{align*}
    \cC_{\sign}(\xb) = \|\xb\|_1 \cdot \sign(\xb)/d.
\end{align*}

\end{assumption}
\begin{remark}
For scaled sign compressor, we have $\|\cC_{\sign}(\xb)-\xb\|^2 = (1-\|\xb\|_1^2 /d \|\xb\|^2) \|\xb\|^2$, thus we have $q = \sqrt{1-\|\xb\|_1^2 /d \|\xb\|^2}$.
\end{remark}

\begin{assumption}[Compression Dissimilarity] \label{as:compressor-adapt} For the biased compressor satisfies \ref{as:compressor}, there exists a constant $\xi$ such that, for each iteration $t \geq 0$, we have 
{\small
\begin{align*}
    \bigg\|\cC\Big( \frac{1}{m} \sum_{i=1}^m [\Delta_t^i + \eb_t^i] \Big) - \frac{1}{m} \sum_{i=1}^m \cC(\Delta_t^i + \eb_t^i)\bigg\|   \leq \gamma \bigg\| \frac{1}{m} \sum_{i=1}^m \Delta_t^i \bigg\|.
\end{align*}
}
\end{assumption}
\vspace{-3pt}
Assumption \ref{as:compressor-adapt} bounds the difference between the average of compression and compression of average. Similar assumptions have been adopted in \citet{alistarh2018convergence}\footnote{We further discuss Assumption \ref{as:compressor-adapt} in Appendix \ref{appendix:fedcams-assumption}.}.

Next, we show the convergence analysis for FedCAMS. Due to the space limit, we only show the full participation setting and leave the partial participation setting in Appendix \ref{sec:ext}.

\begin{theorem} \label{thm:full-cfedams}
Under Assumptions~\ref{as:smooth}-\ref{as:bounded-v} \ref{as:compressor}, and \ref{as:compressor-adapt}, if the local learning rate $\eta_l$ satisfies the following condition: 
$\eta_l \leq \min \Big\{ \frac{1}{8KL}, \frac{\epsilon}{K C_{\beta,q}  [3\eta L + 2C_2 \eta L + 2\sqrt{2(1-\beta_2)}G]}\Big\}$, where $C_{\beta,q} = \sqrt{4\beta_2(1+q^2)^3(1-q^2)^{-2} K^2 G^2 + \epsilon}$, then the iterates of FedCAMS in Algorithm \ref{alg:compfedams} satisfy
\begin{align}
    & \min_{t\in [T]} \EE \big[\|\nabla f(\xb_t)\|^2\big] \notag\\
    & \leq 4 \sqrt{\beta_2 \frac{4(1+q^2)^3}{(1-q^2)^2} \eta_l^2 K^2 G^2 + \epsilon} \bigg[\frac{f_0-f_*}{\eta\eta_l K T} + \frac{\Psi}{T} + \Phi \bigg],
\end{align}
where $\Psi = \frac{C_1 G^2 d}{\sqrt{\epsilon}}+ \frac{2C_1^2 \eta\eta_l K L G^2 d}{\epsilon}$ and $\Phi = \frac{5 \eta_l^2 K L^2}{\sqrt{2\epsilon}} (\sigma_l^2+6K \sigma_g^2)+ [(3+2C_2)\eta L + 2\sqrt{2(1-\beta_2)} G] \frac{\eta_l}{2m \epsilon} \sigma_l^2$, $C_1 = \frac{\beta_1}{1-\beta_1} + \frac{2q}{1-q^2}$ and $C_2 = \frac{\beta_1^2}{(1-\beta_1)^2} + \frac{4(q+\gamma)^2}{(1-q^2)^2}$.
\end{theorem}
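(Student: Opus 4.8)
The plan is to reduce the proof to the FedAMS analysis of Theorem~\ref{thm:full-noncomp}, viewing the compressed aggregate $\hat{\Delta}_t$ as the pseudo-gradient driving the same AMSGrad-with-max-stabilization global update, and then to separately control the error injected by the compressor. The crucial algebraic observation is that, under full participation, summing the error-feedback update $\eb_{t+1}^i = \Delta_t^i + \eb_t^i - \hat{\Delta}_t^i$ over all clients gives
\begin{align*}
    \hat{\Delta}_t = \Delta_t + \bar{\eb}_t - \bar{\eb}_{t+1}, \qquad \bar{\eb}_t := \tfrac{1}{m}\textstyle\sum_{i=1}^m \eb_t^i,
\end{align*}
so the compressed pseudo-gradient equals the true pseudo-gradient $\Delta_t$ (the same object analyzed for FedAMS) plus a \emph{telescoping} difference of accumulated errors. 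First I would substitute this identity into the $L$-smoothness descent inequality applied to the global step $\xb_{t+1} = \xb_t + \eta\,\mb_t/\sqrt{\hat{\vb}_t}$, recalling that $\mb_t = \beta_1\mb_{t-1}+(1-\beta_1)\hat{\Delta}_t$.

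The central obstacle is exactly the one flagged in the paper: a naive expansion couples $\bar{\eb}_t$ to the adaptive preconditioner $1/\sqrt{\hat{\vb}_t}$, and because the preconditioner varies with $t$ the telescoping structure is destroyed, so the compression error accumulates and the bound diverges. To fix this I would not track $\xb_t$ directly but build a modified Lyapunov function carrying two correction terms: a momentum-correction sequence inherited from the FedAMS proof (the reason the non-decreasing $\hat{\vb}_t$ produced by the max operation is essential once $\beta_1>0$), and a new auxiliary virtual iterate that absorbs $\bar{\eb}_t$ so that $\bar{\eb}_t-\bar{\eb}_{t+1}$ cancels upon summation instead of piling up. Establishing that this augmented potential still admits a clean one-step descent, with the leading term proportional to $-\eta\eta_l K\|\nabla f(\xb_t)\|^2$ surviving after the preconditioner is bounded, is the main technical content and the step I expect to be hardest.

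Next I would quantify the error feedback using the squared form $\EE\|\cC(\xb)-\xb\|^2 \le q^2\|\xb\|^2$ of Assumption~\ref{as:compressor}. Applied to the input $\Delta_t^i+\eb_t^i$ this yields, for any $\rho>0$, the recursion $\EE\|\eb_{t+1}^i\|^2 \le q^2(1+\rho)\EE\|\eb_t^i\|^2 + q^2(1+\rho^{-1})\EE\|\Delta_t^i\|^2$. Choosing $\rho$ to balance against the contraction factor $q^2$ and summing the geometric series, together with the uniform bound $\|\Delta_t^i\|\le \eta_l K G$ from Assumption~\ref{as:bounded-g}, produces bounds on $\EE\|\eb_t^i\|^2$ and $\EE\|\hat{\Delta}_t^i\|^2$ of order $\eta_l^2 K^2 G^2$ scaled by the compressor-dependent factors $(1+q^2)^3/(1-q^2)^2$; these are precisely the constants appearing in the statement and in the $\eta_l$ condition, and they reduce to the FedAMS constants as $q\to 0$.

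Finally, I would reuse the FedAMS local-update estimates essentially verbatim to bound the drift of $\Delta_t$ from $-\eta_l K\nabla f(\xb_t)$ via Assumptions~\ref{as:smooth}--\ref{as:bounded-v}, which contributes the $\sigma_l^2$ and $\sigma_g^2$ terms in $\Phi$. Summing the one-step descent over $t=1,\dots,T$, telescoping the augmented Lyapunov function, lower-bounding $\hat{\vb}_t\ge\epsilon$ (so that $1/\sqrt{\hat{\vb}_t}\le 1/\sqrt{\epsilon}$ controls the nuisance terms) while upper-bounding $\hat{\vb}_t \le \beta_2\,\frac{4(1+q^2)^3}{(1-q^2)^2}\eta_l^2K^2G^2+\epsilon$ through the preceding norm bound on $\hat{\Delta}_t$, and imposing the stated $\eta_l$ condition to keep all nuisance coefficients nonnegative, I would isolate $\min_{t\in[T]}\EE\|\nabla f(\xb_t)\|^2$ and obtain the claimed bound. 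The only genuinely new ingredient relative to Theorem~\ref{thm:full-noncomp} is the error-absorbing auxiliary sequence; everything else is a perturbation of the uncompressed argument by the $q$-dependent constants.
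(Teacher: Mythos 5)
Your proposal follows essentially the same route as the paper's proof: the telescoping identity $\hat{\Delta}_t = \Delta_t + \eb_t - \eb_{t+1}$, the augmented Lyapunov sequence combining the momentum correction with an auxiliary error-absorbing term (the paper's $\bGamma_t = (1-\beta_1)\sum_{\tau\le t}\beta_1^{t-\tau}\eb_\tau$, giving $\zb_{t+1} = \xb_{t+1} + \eta\frac{\beta_1}{1-\beta_1}\Vbh_t^{-1/2}\mb'_t + \eta\Vbh_t^{-1/2}\bGamma_{t+1}$), the Young-inequality error recursion yielding $\|\eb_t^i\|^2 \le \frac{4q^2}{(1-q^2)^2}\eta_l^2K^2G^2$ and $\|\hat{\Delta}_t\|^2\le\frac{4(1+q^2)^3}{(1-q^2)^2}\eta_l^2K^2G^2$, and the reuse of the FedAMS descent machinery. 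The step you flag as hardest — the one-step descent of the augmented potential — is exactly where the paper spends its effort, and your outline of how to handle it is consistent with what the paper does.
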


\begin{remark}
The convergence rate in Theorem \ref{thm:full-cfedams} contains three parts as well, the first two parts are related to total iterates $T$, and they vanish as $T$ increases. The last term $\Phi$ shows no direct dependency on $T$, but on local and global variances. In the \textit{i.i.d} case where $\sigma_g = 0$, the variance $\Phi$ will decrease and show less dependency on the number of local steps $K$.
\end{remark}

\begin{corollary}\label{cor:cfedams-full}
Suppose we choose the global learning rate $\eta = \Theta(\sqrt{Km})$ and local learning rate $\eta_l = \Theta (\frac{1}{\sqrt{T} K})$, when $T$ is sufficient large, i.e., $T\geq Km$, the convergence rate for FedCAMS in Algorithm \ref{alg:compfedams} under full participation scheme satisfies
\begin{align}
    & \min_{t\in [T]} \EE \big[\|\nabla f(\xb_t)\|^2\big] =  \cO\bigg(\frac{1}{\sqrt{TKm}} \bigg).
\end{align}
\end{corollary}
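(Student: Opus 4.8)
The plan is to substitute the prescribed learning rates directly into the bound of Theorem~\ref{thm:full-cfedams} and track the dependence on $T$, $K$, and $m$ of each term, treating $\beta_1,\beta_2,\epsilon,q,L,G,d,\sigma_l,\sigma_g$ and the derived constants $C_1,C_2$ as $\Theta(1)$ quantities. Since the structure of the bound is identical to the FedAMS bound in Theorem~\ref{thm:full-noncomp}, with the only changes being that the $\Theta(1)$ constants now absorb the compression factors $\frac{2q}{1-q^2}$, $\frac{4q^2}{(1-q^2)^2}$ and $\frac{4(1+q^2)^3}{(1-q^2)^2}$, the argument mirrors that of Corollary~\ref{cor:fedams-full}.

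First I would dispose of the square-root prefactor. With $\eta_l=\Theta(\frac{1}{\sqrt{T}K})$ we have $\eta_l^2 K^2=\Theta(1/T)$, so the term $\beta_2\frac{4(1+q^2)^3}{(1-q^2)^2}\eta_l^2 K^2 G^2$ inside the root vanishes as $T\to\infty$, leaving the prefactor equal to $\Theta(\sqrt{\epsilon})=\Theta(1)$. It therefore does not affect the rate, and it suffices to show that each of the three bracketed terms is $\cO(\frac{1}{\sqrt{TKm}})$.

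Next I would evaluate the leading term. Using $\eta=\Theta(\sqrt{Km})$ and $\eta_l=\Theta(\frac{1}{\sqrt{T}K})$ gives $\eta\eta_l K=\Theta(\sqrt{Km}/\sqrt{T})$, so $\frac{f_0-f_*}{\eta\eta_l K T}=\Theta(\frac{1}{\sqrt{TKm}})$, which is the dominant contribution and already matches the claimed rate. The remaining work is to verify that the $\Psi/T$ and $\Phi$ terms are no larger. For $\Psi/T$, the constant piece $\frac{C_1 G^2 d}{\sqrt{\epsilon}T}=\Theta(1/T)$ and the piece $\frac{2C_1^2\eta\eta_l K L G^2 d}{\epsilon T}=\Theta(\frac{\sqrt{Km}}{T^{3/2}})$ are both $\cO(\frac{1}{\sqrt{TKm}})$ precisely when $T\ge Km$, since $\frac{1}{T}\le\frac{1}{\sqrt{TKm}}\iff T\ge Km$ and $\frac{\sqrt{Km}}{T^{3/2}}=\frac{Km}{T}\cdot\frac{1}{\sqrt{TKm}}$ with $\frac{Km}{T}\le 1$. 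For $\Phi$, the variance piece $\frac{5\eta_l^2 K L^2}{\sqrt{2\epsilon}}(\sigma_l^2+6K\sigma_g^2)=\Theta(\frac{\sigma_l^2}{TK}+\frac{\sigma_g^2}{T})$ is likewise $\cO(\frac{1}{\sqrt{TKm}})$ under $T\ge Km$ (using $m\le TK$, which follows from $T\ge Km\ge m$), while the local-variance piece contains $\eta L\cdot\frac{\eta_l}{m}=\Theta(\frac{1}{\sqrt{TKm}})$ and $G\cdot\frac{\eta_l}{m}=\Theta(\frac{1}{\sqrt{T}Km})=\cO(\frac{1}{\sqrt{TKm}})$. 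Summing the finitely many $\Theta(1)$-weighted terms yields $\cO(\frac{1}{\sqrt{TKm}})$.

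The only genuinely delicate point — and hence what I would flag as the main obstacle — is the bookkeeping of the condition $T\ge Km$: it is exactly the threshold that forces every $\Theta(1/T)$ and $\Theta(1/(TK))$ contribution (notably the global-variance term $\Theta(\sigma_g^2/T)$) to fall below the leading $\Theta(1/\sqrt{TKm})$ term. All other manipulations are routine order-of-magnitude comparisons carried out with the $\Theta(\cdot)$ constants held fixed.
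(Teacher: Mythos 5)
Your proposal is correct and follows the same route as the paper: the paper's own proof of Corollary~\ref{cor:cfedams-full} is a one-line substitution of $\eta_l=\Theta(\frac{1}{\sqrt{T}K})$ and $\eta=\Theta(\sqrt{Km})$ into the bound of Theorem~\ref{thm:full-cfedams}, and your term-by-term bookkeeping (including the observation that $T\ge Km$ is exactly what makes the $\Theta(1/T)$ and $\Theta(\sigma_g^2/T)$ contributions subdominant) simply fills in the details the paper omits. No gaps.
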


\begin{remark}
Corollary \ref{cor:cfedams-full} suggests that with sufficient large $T$, FedCAMS achieves the desired $\cO(\frac{1}{\sqrt{TKm}})$ convergence rate which matches the result for its uncompressed counterpart, FedAMS. This suggests that FedCAMS can indeed achieve better communication efficiency without sacrificing much on the accuracy.
\end{remark}

\begin{remark}
The constants $C_1$ and $C_2$ in Theorem \ref{thm:full-cfedams} are related to the compression constant $q$. 
Specifically, if we track the dependency on $q$ in the convergence rate, we have $O(1/(1-q) \sqrt{TKm})$ under the full participation scheme. A larger $q$ ($q \to 1$) corresponds to a stronger compression we applied, leading to worse convergence due to heavier information losses. Note that this $q$-dependency is common for the adaptive gradient method since the convergence proof of adaptive gradient methods heavily relies on the bounded gradient assumption and thus the compressed gradient bound is related to $q$. Similar type of $q$-dependency also occurs in other communication compressed distributed Adam methods such as \citet{chen2021quantized}.\footnote{ Note that the $q$-dependency in 1-bit Adam \cite{tang20211} is actually different due to the use of variance-freezed Adam update, i.e., freeze the variance term $\vb_t$ of Adam update as a constant after a few epochs, which make it resembles momentum SGD.
}
\end{remark}

\section{Experiments}\label{sec:experiments}
In this section, we present empirical validations toward the effectiveness of our proposed algorithms. Firstly, we provide comparisons between FedAMS and other first-order federated optimization baselines. Secondly, we provide experimental results of our proposed communication-efficient adaptive federated learning method, FedCAMS, to show its effectiveness in achieving communication-efficient adaptive federated learning. 

\noindent\textbf{Experimental Setup:} We test all federated learning baselines, including ours on CIFAR10 and CIFAR100 datasets \citep{krizhevsky2009learning} using the following two models: (1) ResNet-18 \citep{he2016deep}, a widely used convolutional neural network model which is commonly trained by SGD; and (2) ConvMixer model \citep{trockman2022patches}, which shares similar ideas to vision transformer \citep{dosovitskiy2021an} to use patch embeddings to preserve locality and similarly is trained via adaptive gradient methods by default. We set in total $100$ clients for all federated training experiments. We set the partial participation ratio as $0.1$, i.e., in each round, the server picks $10$ out of $100$ clients to participate in the communication and model update. In each round, the client will perform $3$ local epochs of local training with batch size $20$. We search for the best training hyper-parameters for each baseline, including ours. Due to the space limit, we leave all the hyper-parameter details as well as the CIFAR-100 experiments in the Appendix.

\subsection{FedAMS and Adaptive Federated Optimization}
We compare two options of the FedAMS framework with several state-of-the-art adaptive federated learning optimization methods, including: (1) FedAdam \citep{reddi2020adaptive} (2) FedYogi \citep{reddi2020adaptive} as well as standard federated baselines: (3) FedAvg \citep{mcmahan2017communication}. Note that the Option 2 for FedAMS is same as FedAMSGrad \citep{tong2020effective}. Thus in this section, we denote FedAMS for Option 1 and FedAMSGrad for Option 2 in the general FedAMS framework. 


Figure \ref{fig:cifar10} shows the convergence result of FedAMS and other federated learning baselines on training CIFAR-10 dataset with ResNet-18 model and ConvMixer-256-8 model. We compare the training loss and test accuracy against global rounds for each model. For the ResNet-18 model, FedAMS and FedYogi achieve quite similar performances, which are significantly better than the other three baselines. In particular, FedAMS performs the best in terms of the final training loss and test accuracy. On the other hand, FedAMSGrad and FedAdam obtain quite similar results on test accuracy and training loss. FedAvg achieves a slightly better training loss to FedAdam and FedAMSGrad but much higher test accuracy which is close to FedYogi and FedAMS. For the ConvMixer-256-8 model, which is typically trained via adaptive gradient methods, we observe that all adaptive federated optimization methods (FedAdam, FedYogi, FedAMSGrad and FedAMS) achieve much better performance in terms of both training loss and test accuracy than FedAvg. In detail, FedAMS again achieves a significantly better result than other baselines. Other adaptive methods, including FedAdam, FedYogi, and FedAMSGrad, have similar convergence behaviour when training the ConvMixer-256-8 model. Such results empirically show the effectiveness of our proposed FedAMS method with max stabilization.

Figure \ref{fig:cifar10_kn_resnet} shows the effect of parameter $n$ on the convergence rate by choosing different number of $n$ from $\{5, 10, 20\}$. From Figure \ref{fig:cifar10_kn_resnet} we can observe that a larger number of participating clients $n$ in general achieves a faster convergence rate. This verified our theoretical results in Section \ref{subsec:fedams}.

\begin{figure}[ht!]
\centering
\subfigure[ResNet-18]{\includegraphics[width=0.23\textwidth]{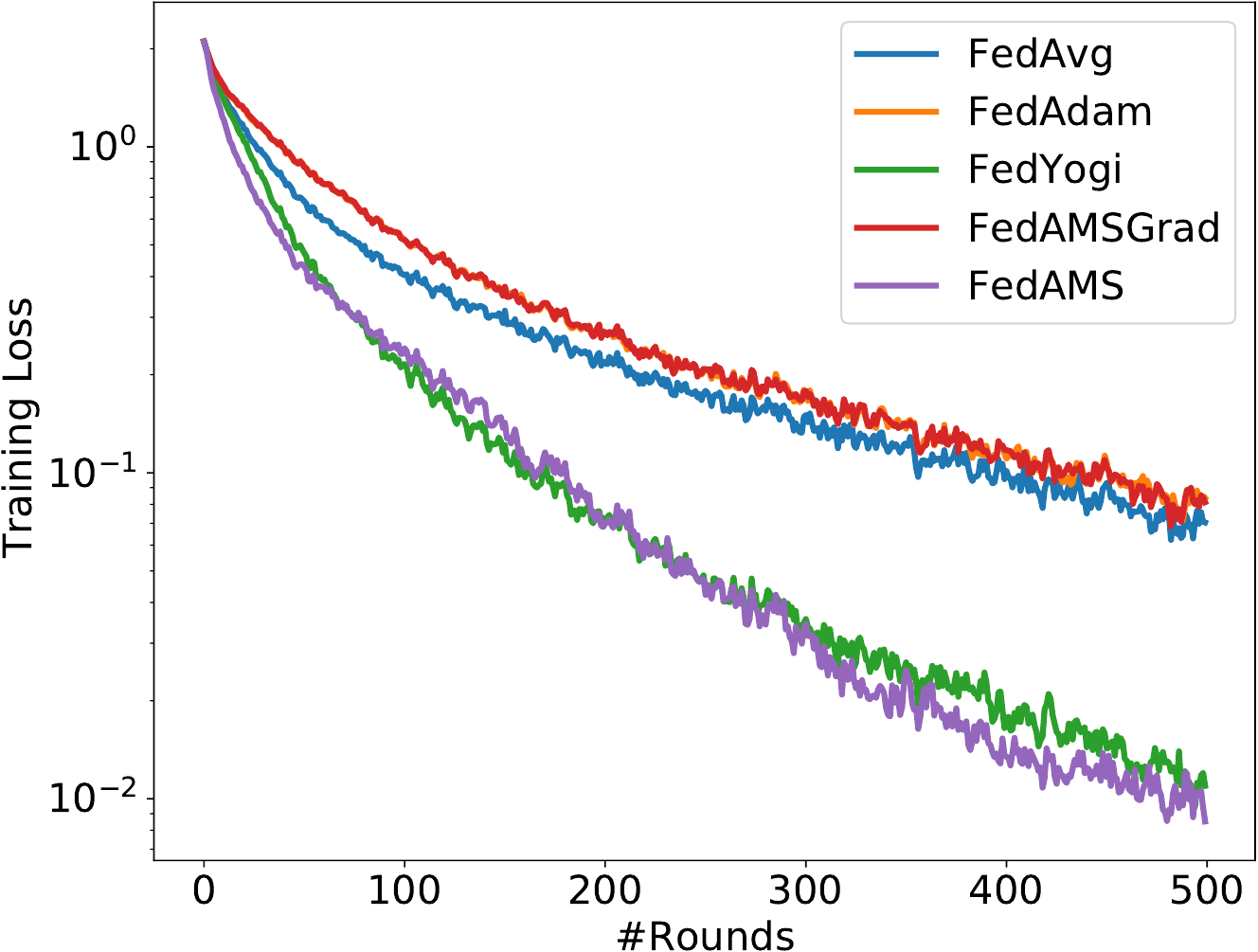}}
\subfigure[ResNet-18]{\includegraphics[width=0.23\textwidth]{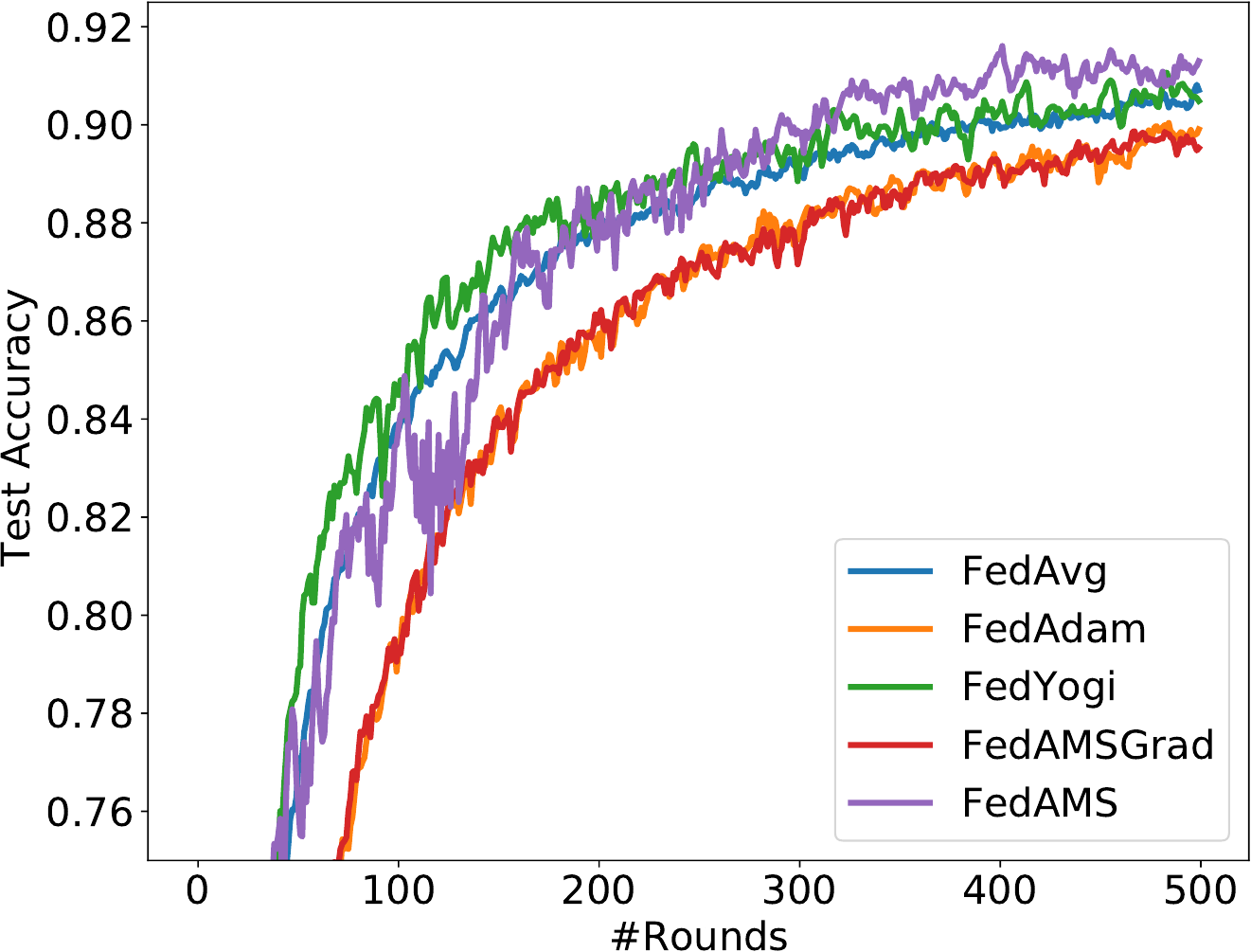}}
\subfigure[ConvMixer-256-8]{\includegraphics[width=0.23\textwidth]{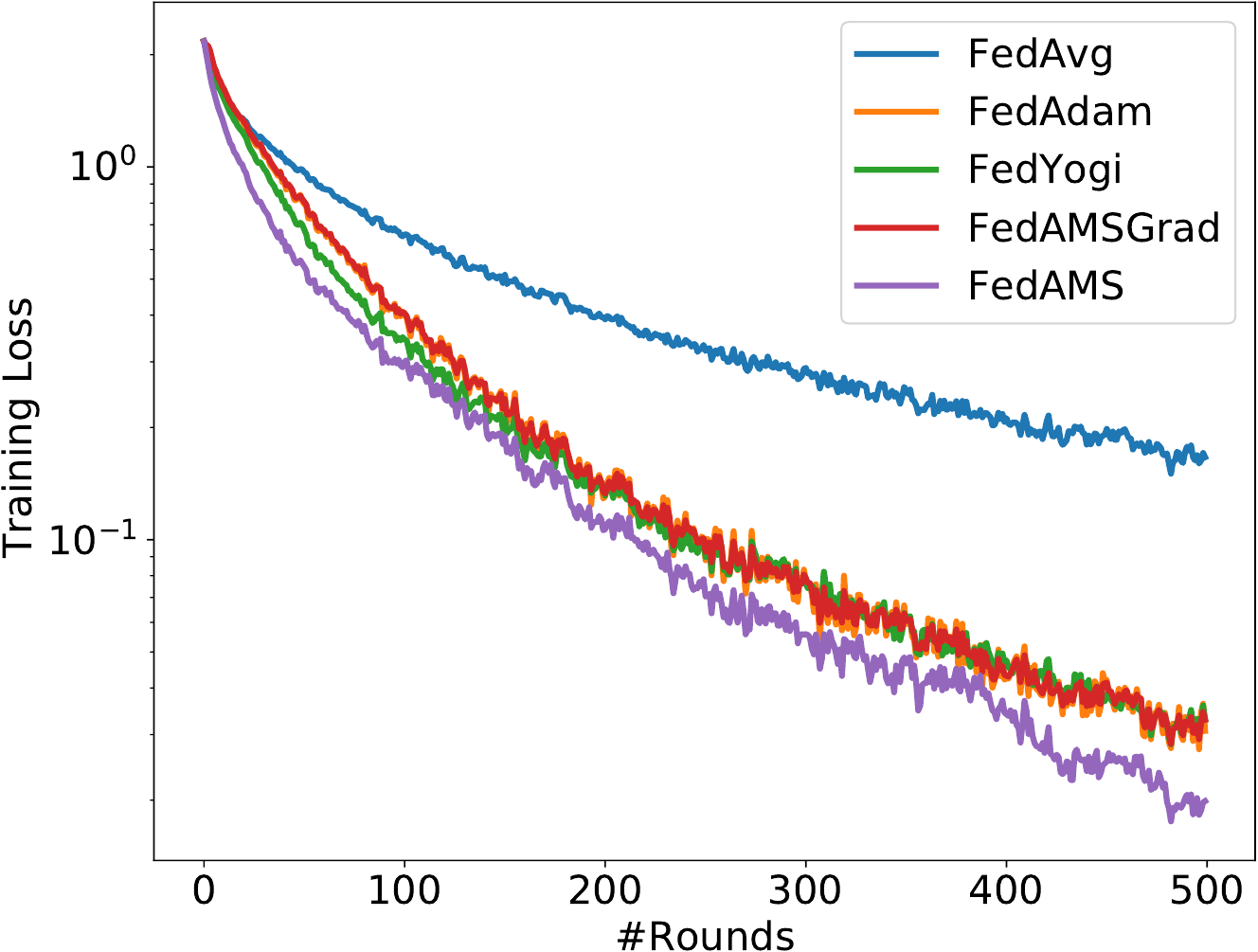}}
\subfigure[ConvMixer-256-8]{\includegraphics[width=0.23\textwidth]{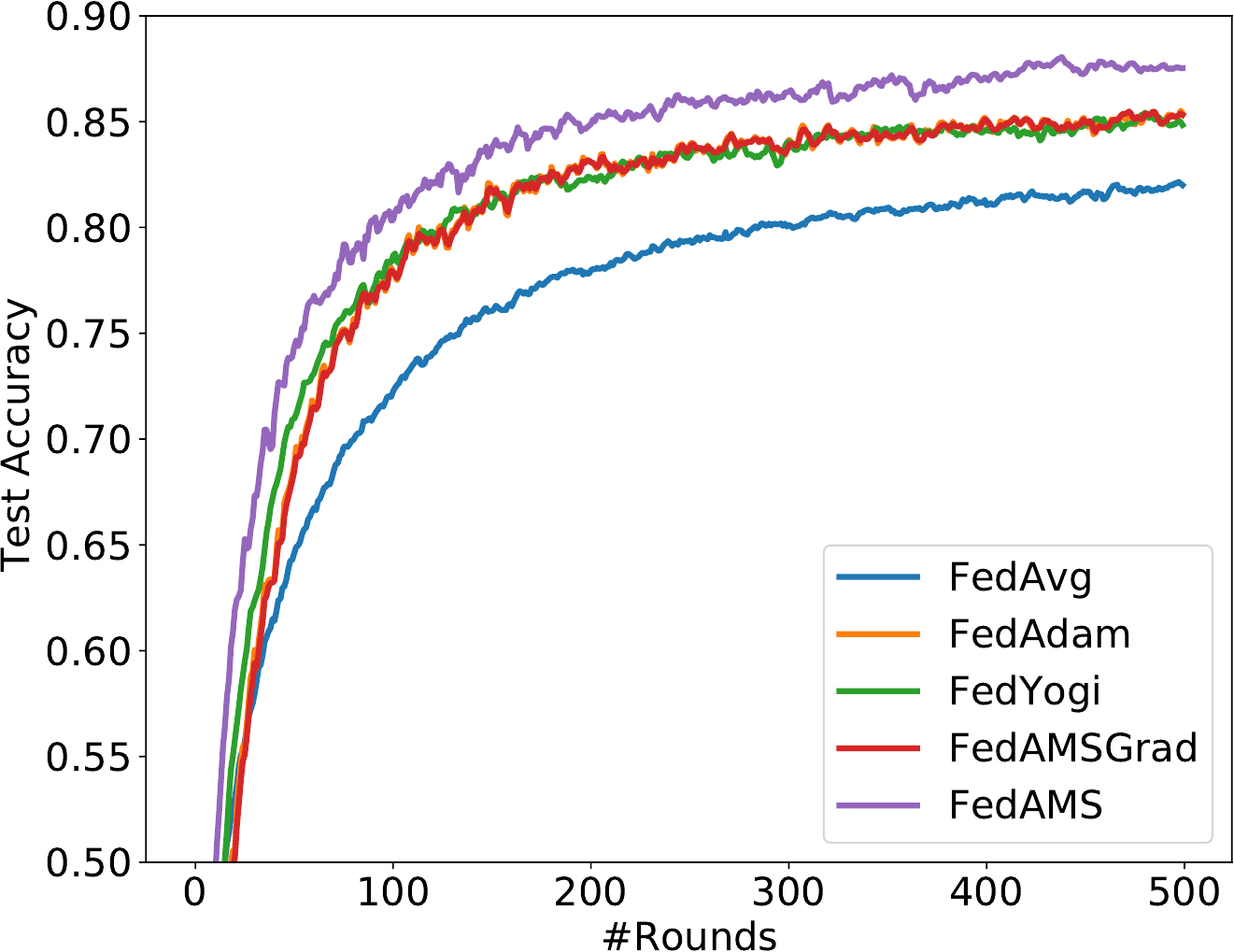}}

\caption{The learning curves for FedAMS and other federated learning baselines on training CIFAR-10 data (a)(b) show the results for the ResNet-18 model and (c)(d) show the results for the ConvMixer-256-8 model. }
\label{fig:cifar10}
\end{figure}

\begin{figure}[ht!]
\centering
\subfigure[ResNet-18]{\includegraphics[width=0.23\textwidth]{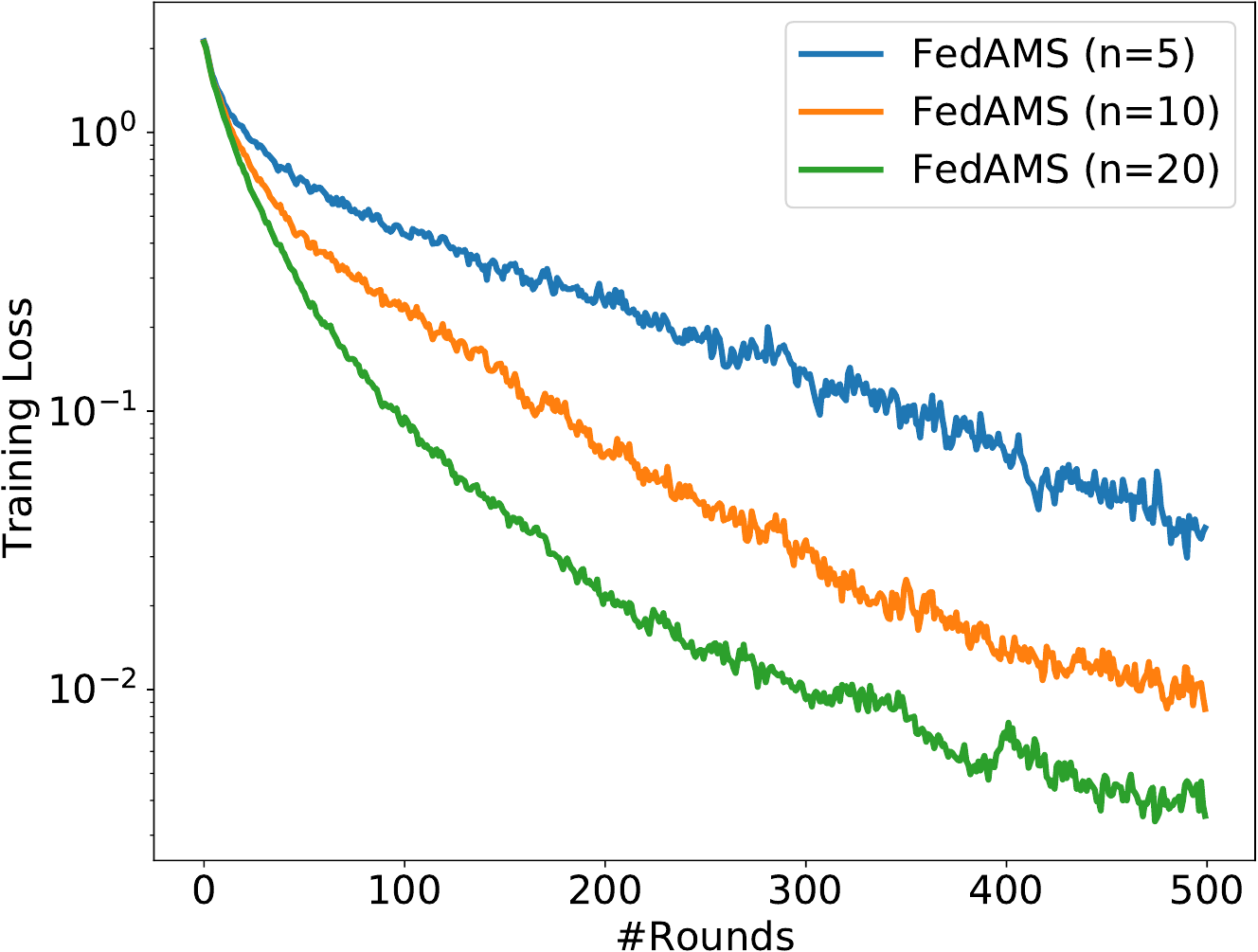}}
\subfigure[ConvMixer-256-8]{\includegraphics[width=0.23\textwidth]{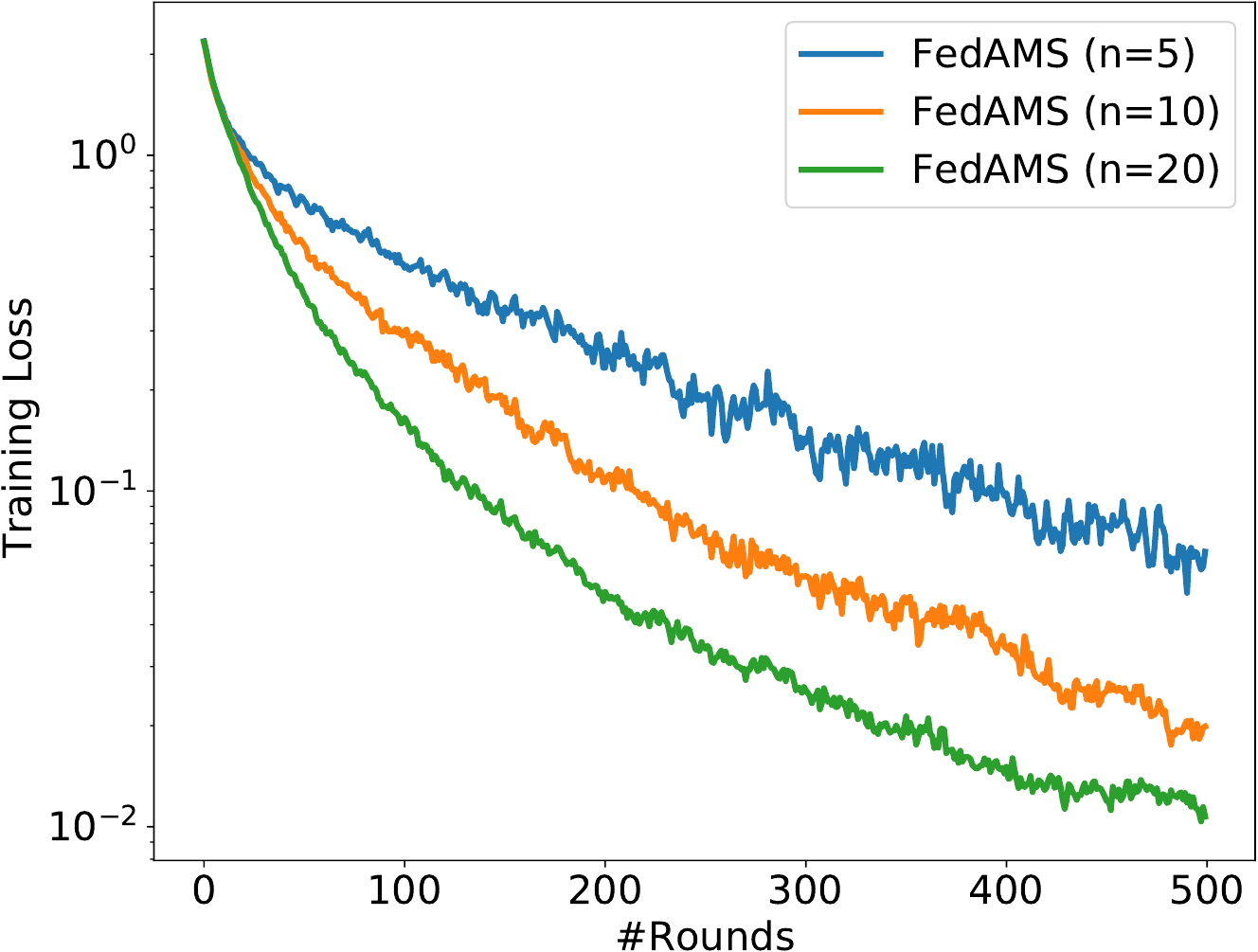}}
\caption{The learning curves for FedAMS with different participating number of clients $n$ in training CIFAR-10 data on the ResNet-18 and the ConvMixer-256-8 models.}
\label{fig:cifar10_kn_resnet}
\end{figure}

Figure \ref{fig:abla-local-ep} shows the ablation study with different local epochs by choosing different number of local epochs $E$ from $\{3, 10, 30,100\}$ when training CIFAR-10 data on ResNet-18 with FedAMS optimizer. We observe that larger $E$ leads to faster convergence, but larger $E$ does not show a significant advantage in achieving a higher test accuracy. We follow FedAvg \citep{mcmahan2017communication} and FedAdam \citep{reddi2020adaptive} and set local epoch $E = 3$ by default unless otherwise specified. 
\vspace{-10pt}
\begin{figure}[ht!]
    \centering
    \subfigure[Training Loss]{\includegraphics[width=0.23\textwidth]{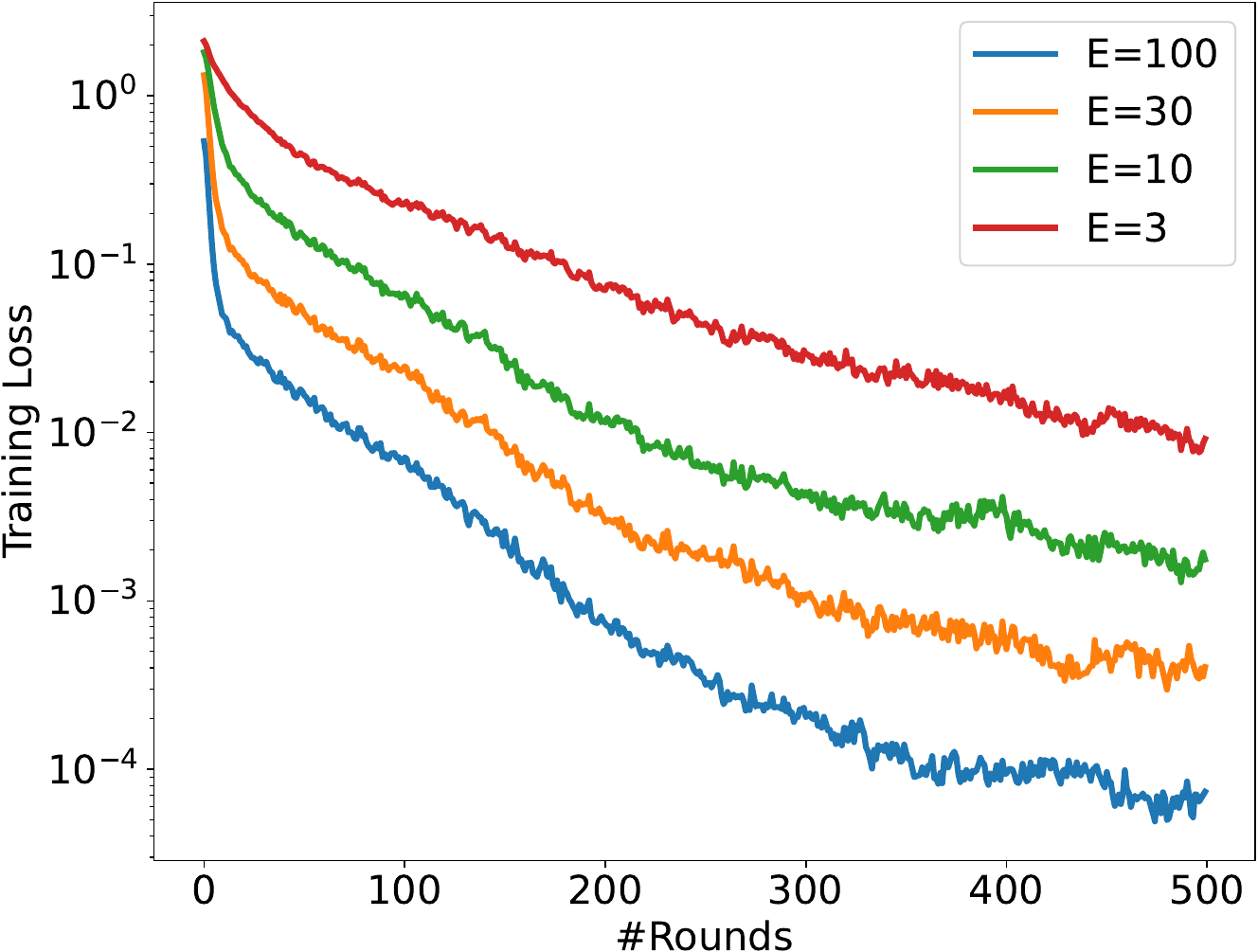}}
    \subfigure[Test Accuracy]{\includegraphics[width=0.23\textwidth]{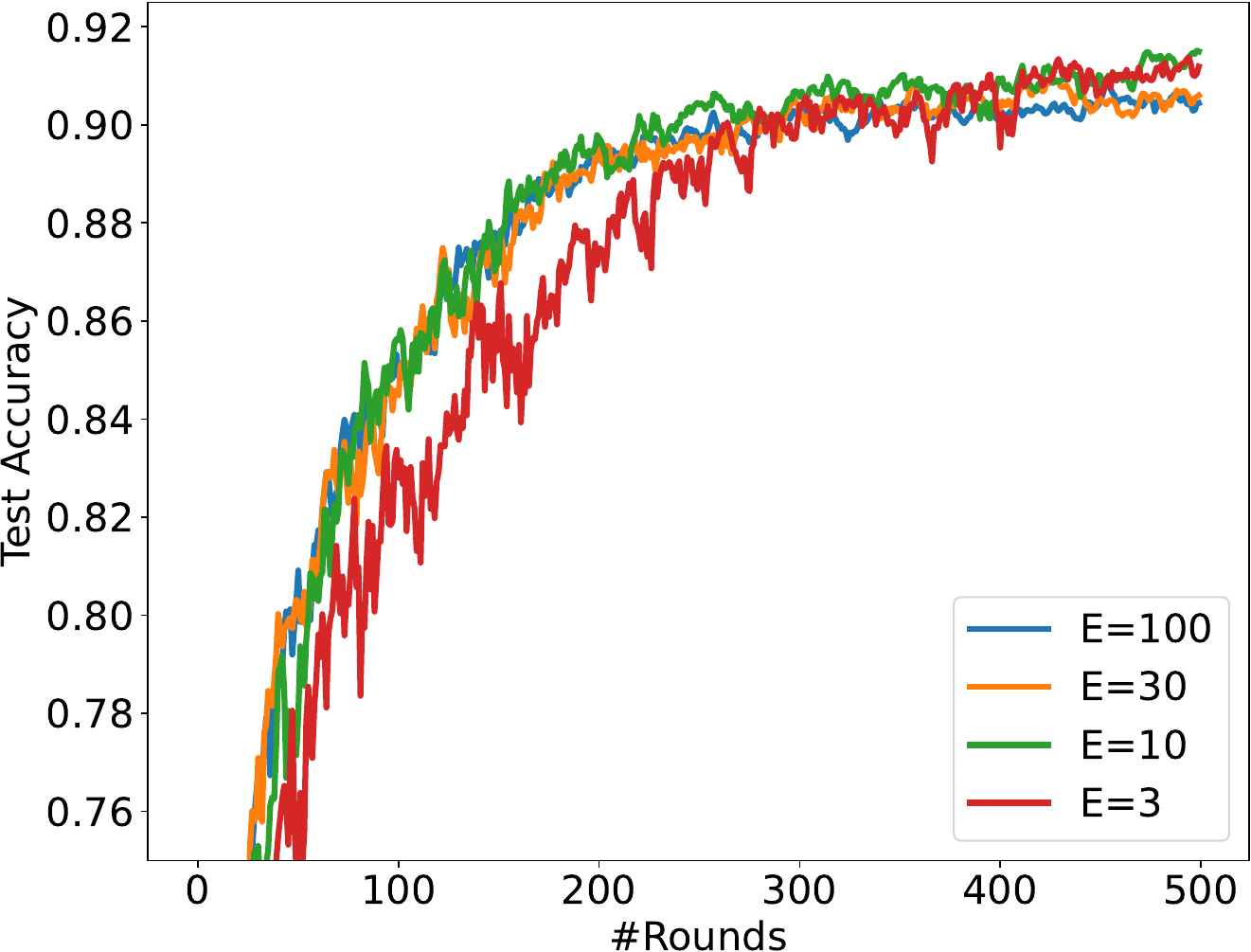}}
    \caption{The learning curves for FedAMS with different number of local epochs $E$ in training CIFAR-10 data on the ResNet-18 model.}
    \setlength{\abovecaptionskip}{-10pt}
    \setlength{\belowcaptionskip}{-10pt}
    \label{fig:abla-local-ep}
\end{figure}

\subsection{Communication-Efficient FedCAMS}

Figure \ref{fig:cifar10_ef_resnet} shows the convergence results of FedAMS and FedCAMS\footnote{Here FedCAMS adopt Option 1 for the final update step for fairness comparisons (same as FedAMS). } with different compression strategies on training CIFAR-10 dataset with the ResNet-18 model. It includes comparisons between scaled sign compressor and top-$k$ compressor with compression ratio $r \in\{1/64, 1/128, 1/256\}$. We compare the training loss and test accuracy against global rounds and the (pseudo) gradient communication bits\footnote{Note that here we only count the client-to-server one-way communications compression.}. FedAMS, who does not conduct any communication compression, performs the best in terms of training loss yet requires a large volume of communication costs. For our FedCAMS compression methods, sign compressor and top-$k$ compressor with ratio $r=1/64$ achieve similar performance in terms of test accuracy against the training rounds and obtain the best trade-off between communication efficiency and model accuracy. 
Figure \ref{fig:cifar10_ef_resnet} (b)(d) show the direct comparison against the communication bits of training ResNet-18 on CIFAR-10. In particular, we can observe that the top-$k$ compressor with a smaller $r$ (i.e., a heavier compression with more information lost), obtains better communication efficiency but a slower convergence rate. Note that for a $d$ dimensional vector, the overall cost of a scaled sign compressor is $32+d$ bits, and it is roughly the same communication costs as a top-$k$ compressor\footnote{Top-$k$ compressor also needs to communicate about the chosen $k$ locations which roughly double the costs.} with a ratio $r=1/64$. This verifies our theoretical results in Section \ref{subsec:cfedams}.

\begin{figure}[ht!]
\centering
\subfigure[Training Loss]{\includegraphics[width=0.23\textwidth]{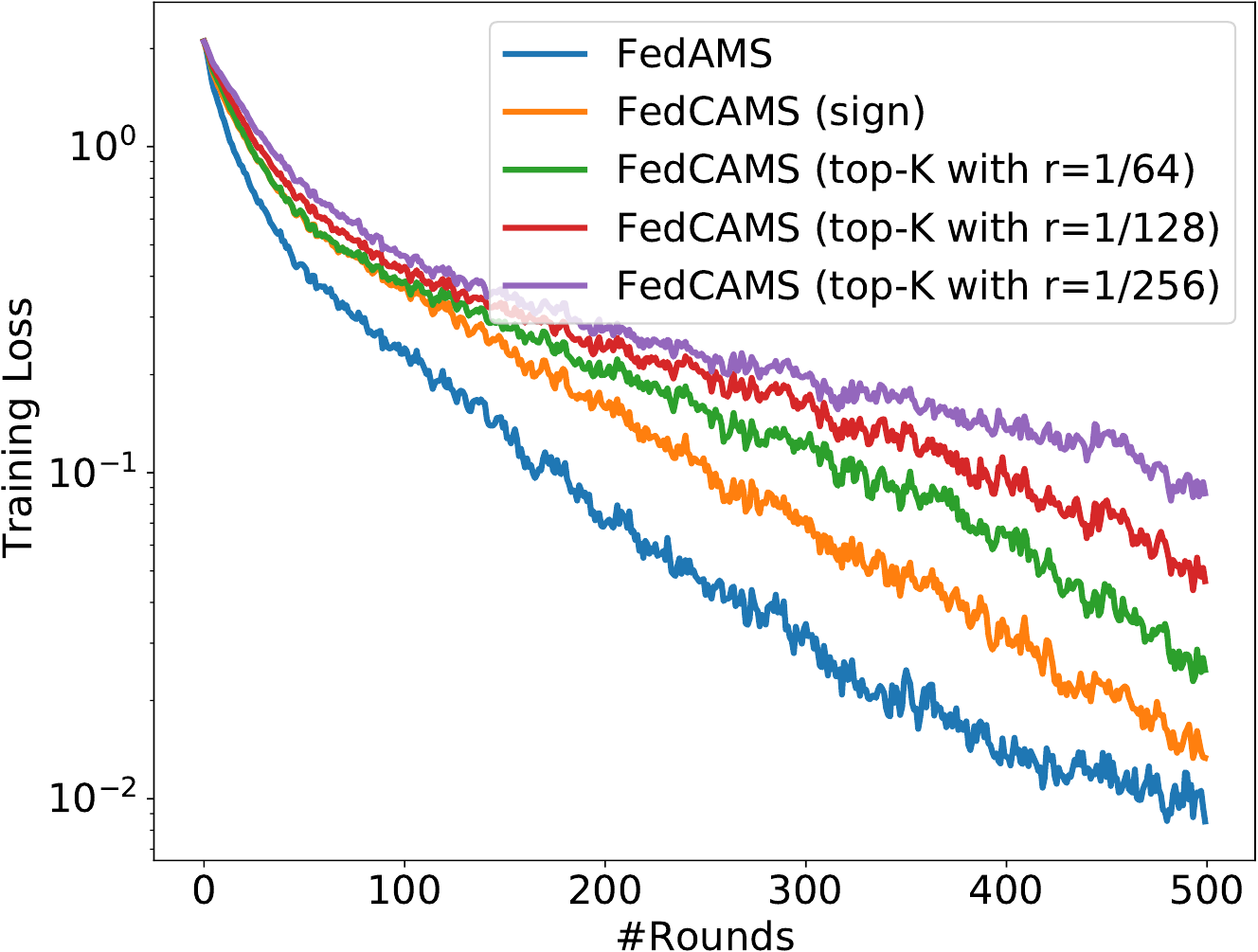}}
\subfigure[Training Loss]{\includegraphics[width=0.23\textwidth]{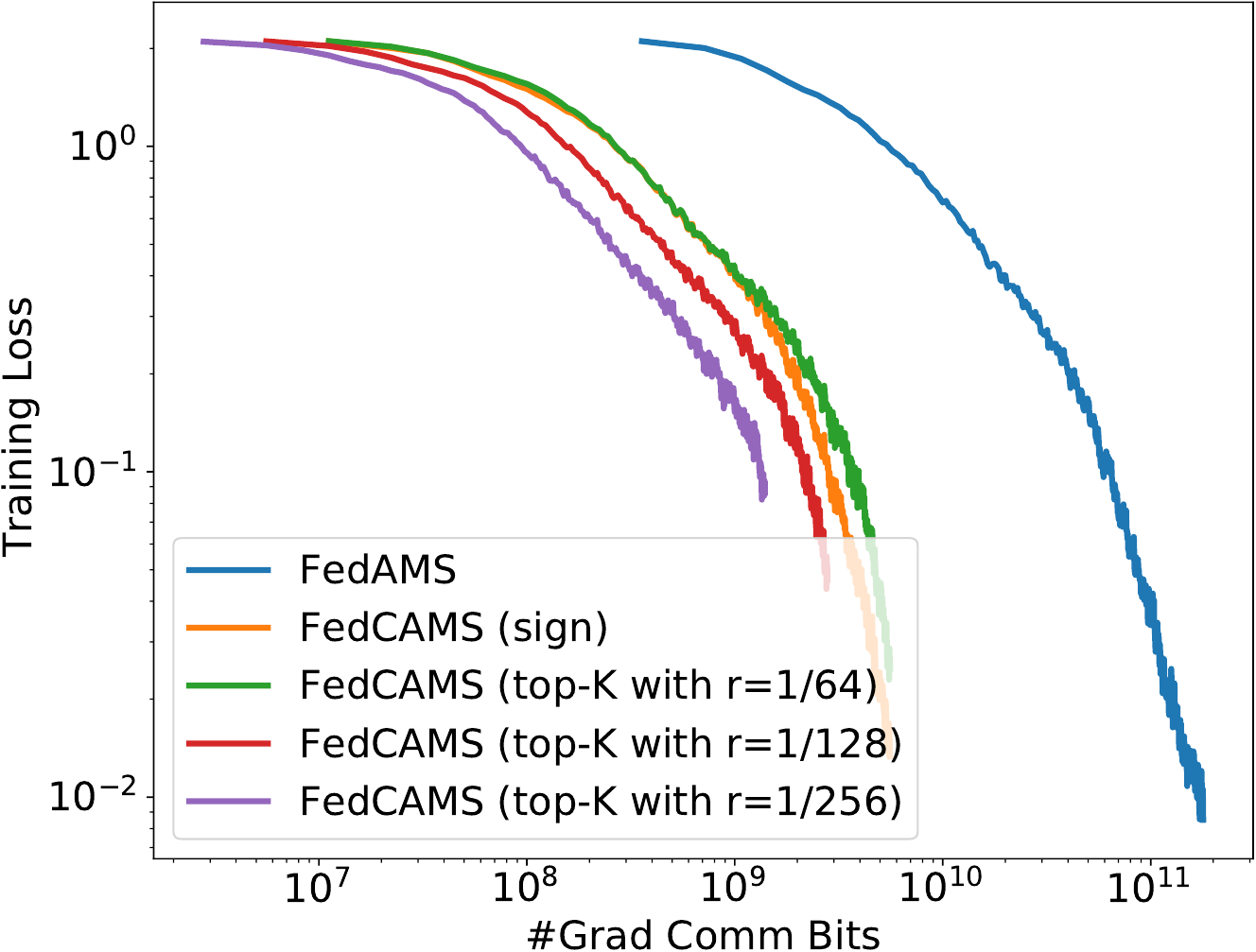}}
\subfigure[Test Accuracy]{\includegraphics[width=0.23\textwidth]{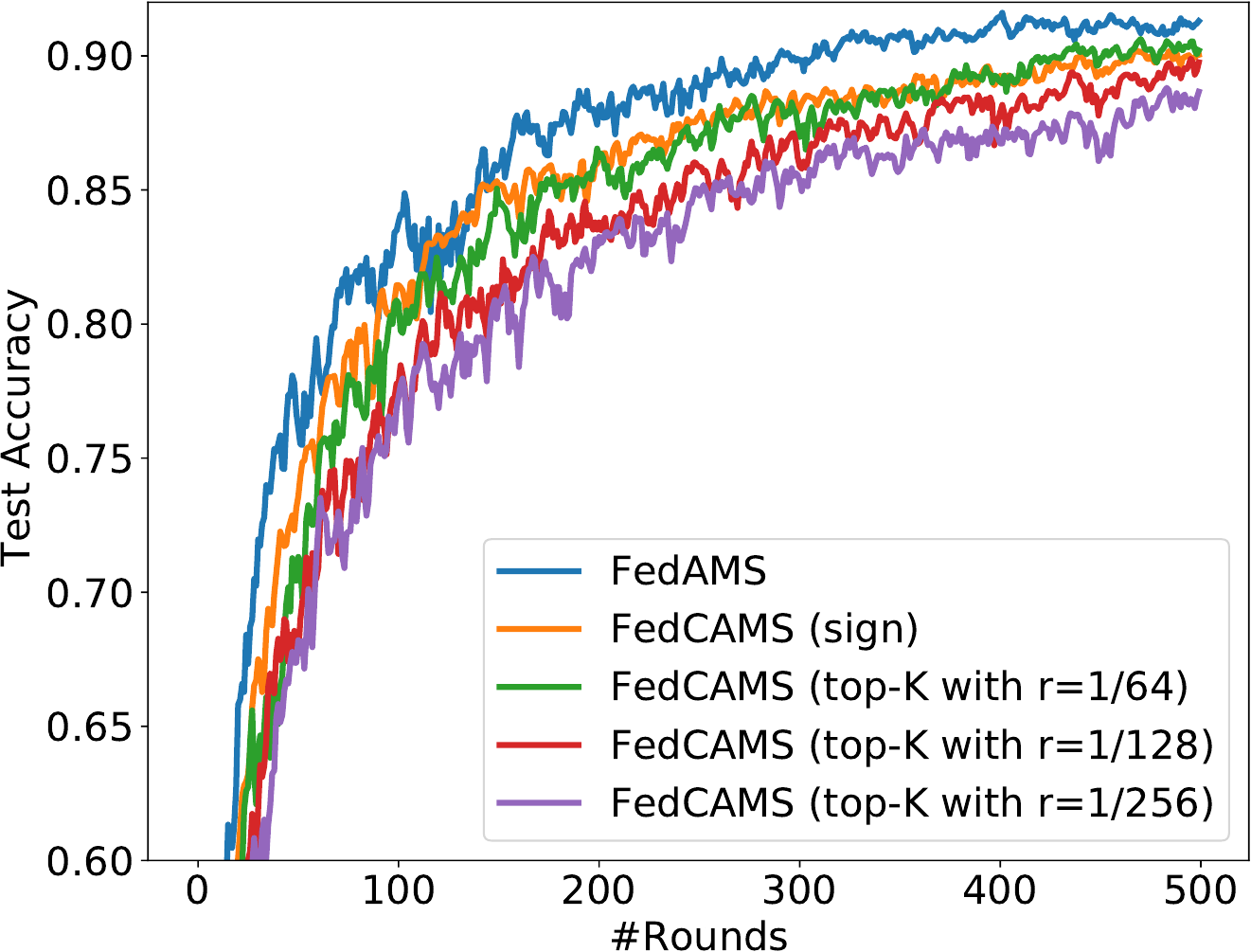}}
\subfigure[Test Accuracy]{\includegraphics[width=0.23\textwidth]{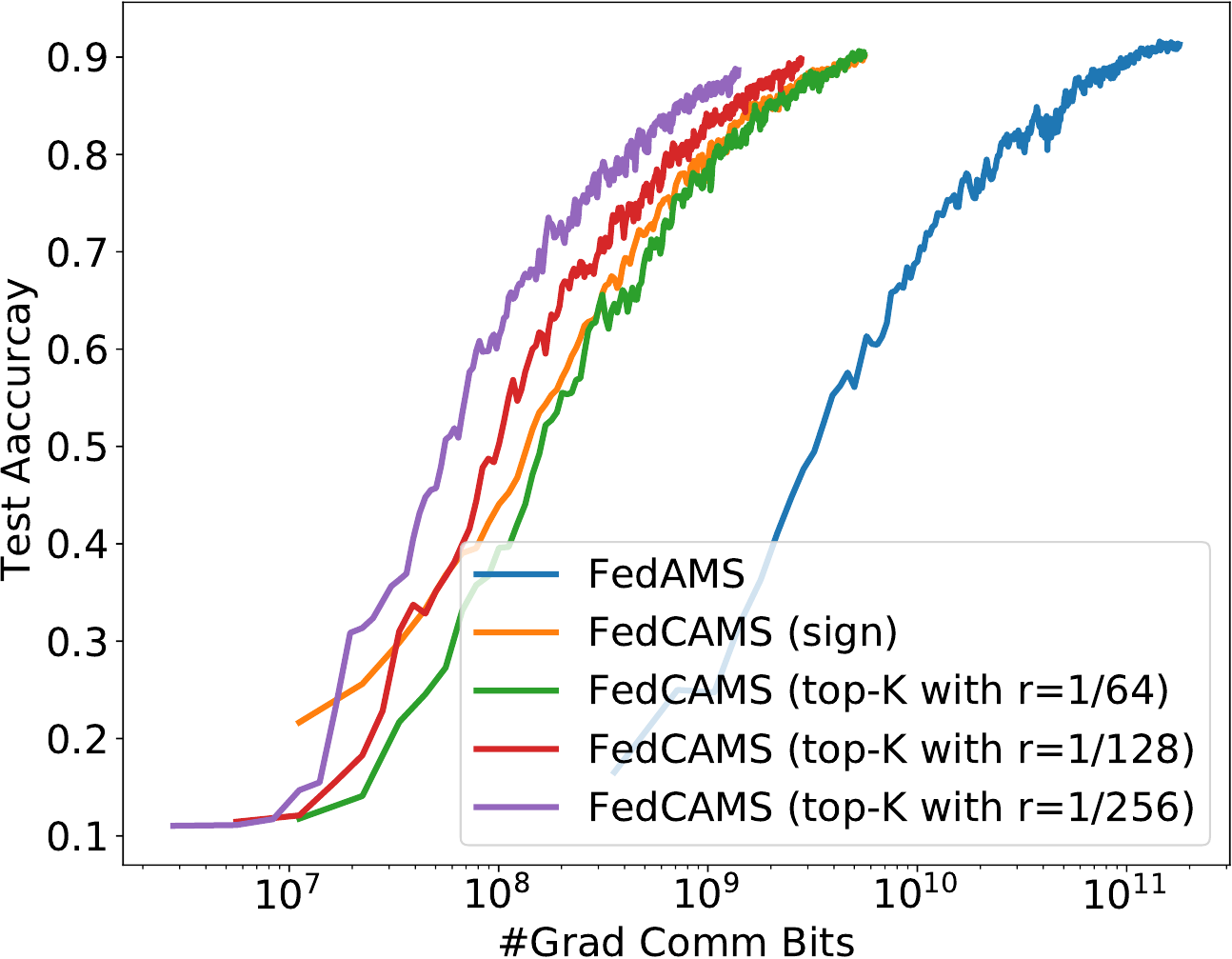}}
\caption{The learning curves for FedCAMS and uncompressed FedAMS on training CIFAR-10 data on the ResNet-18 model.}
\label{fig:cifar10_ef_resnet}
\end{figure}

Figure \ref{fig:cifar10_ef_convmixer} shows the convergence results of FedAMS and FedCAMS with the same compression strategies as in Figure \ref{fig:cifar10_ef_resnet} on training CIFAR-10 dataset with the ConvMixer-256-8 model. We notice that FedCAMS with the scaled sign compressor achieves roughly the same training loss and test accuracy as FedAMS but with a few orders of magnitude less in communication costs, while other top-$k$ compression models have significantly worse performance. Among the top-$k$ compressor trained models, the one with compression ratio $r=1/64$ still obtains better training loss and test accuracy but higher communication costs. These results suggest that our proposed FedCAMS is communication-efficient while maintaining high accuracy.

\begin{figure}[ht!]
\centering
\subfigure[Training Loss]{\includegraphics[width=0.23\textwidth]{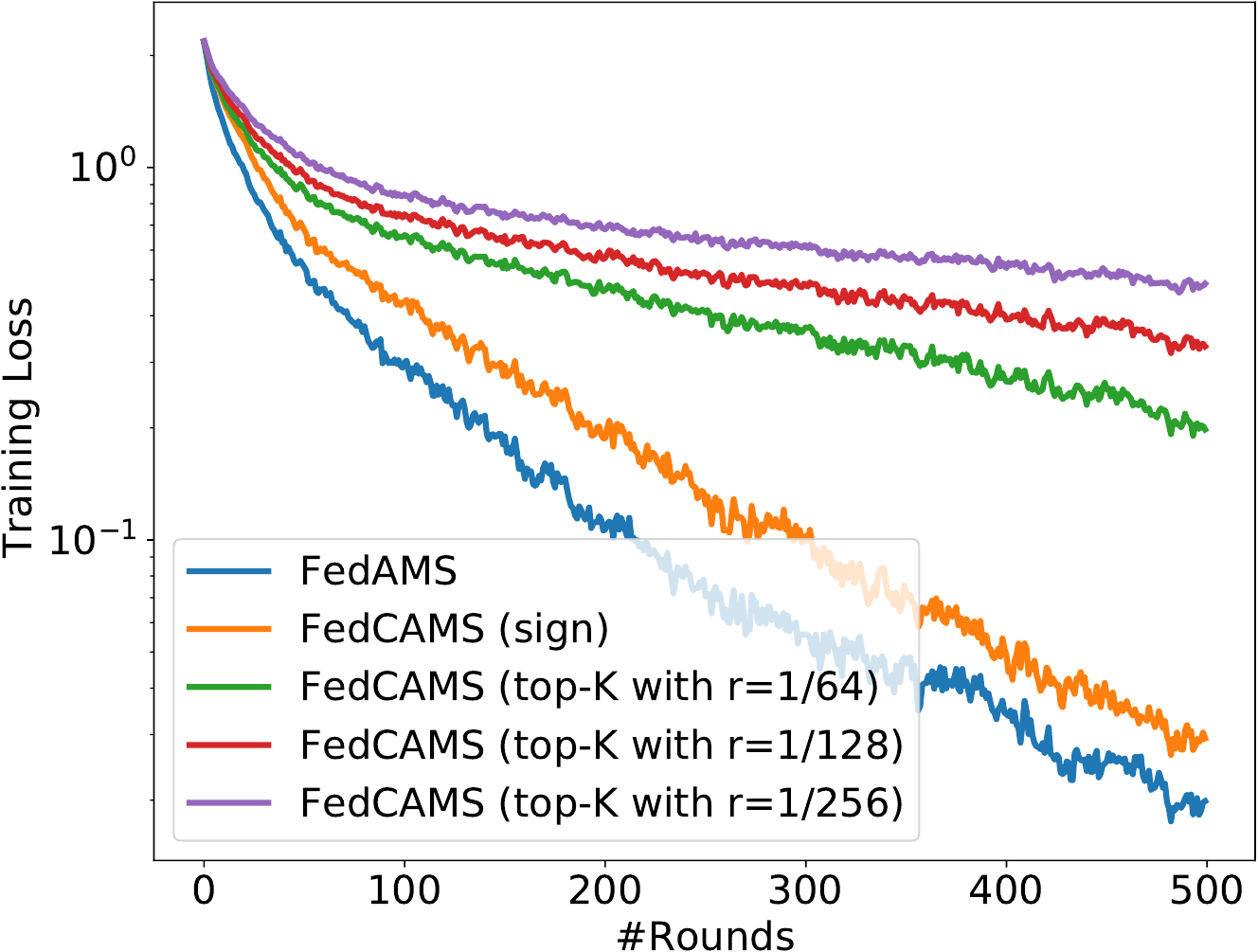}}
\subfigure[Training Loss]{\includegraphics[width=0.23\textwidth]{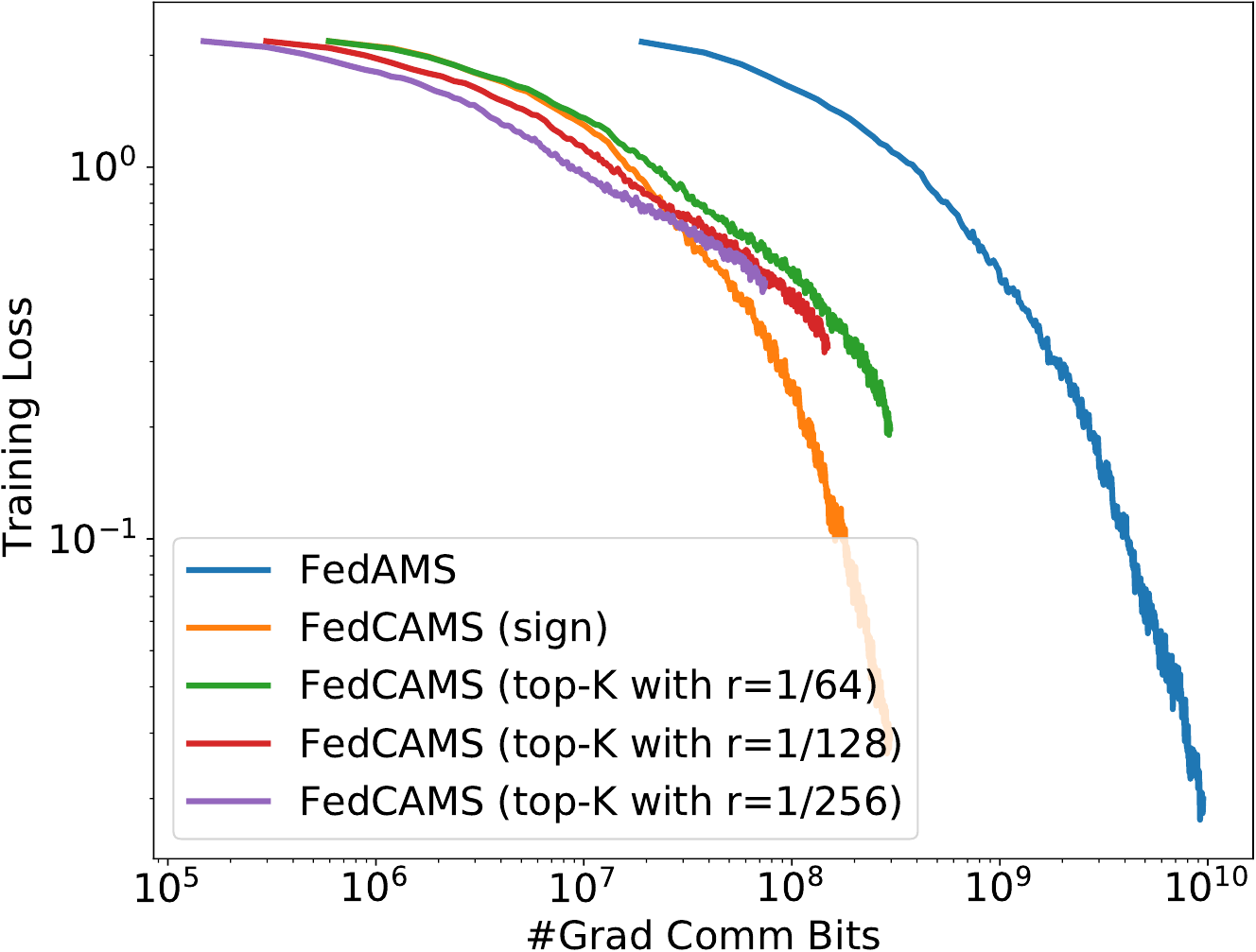}}
\subfigure[Test Accuracy]{\includegraphics[width=0.23\textwidth]{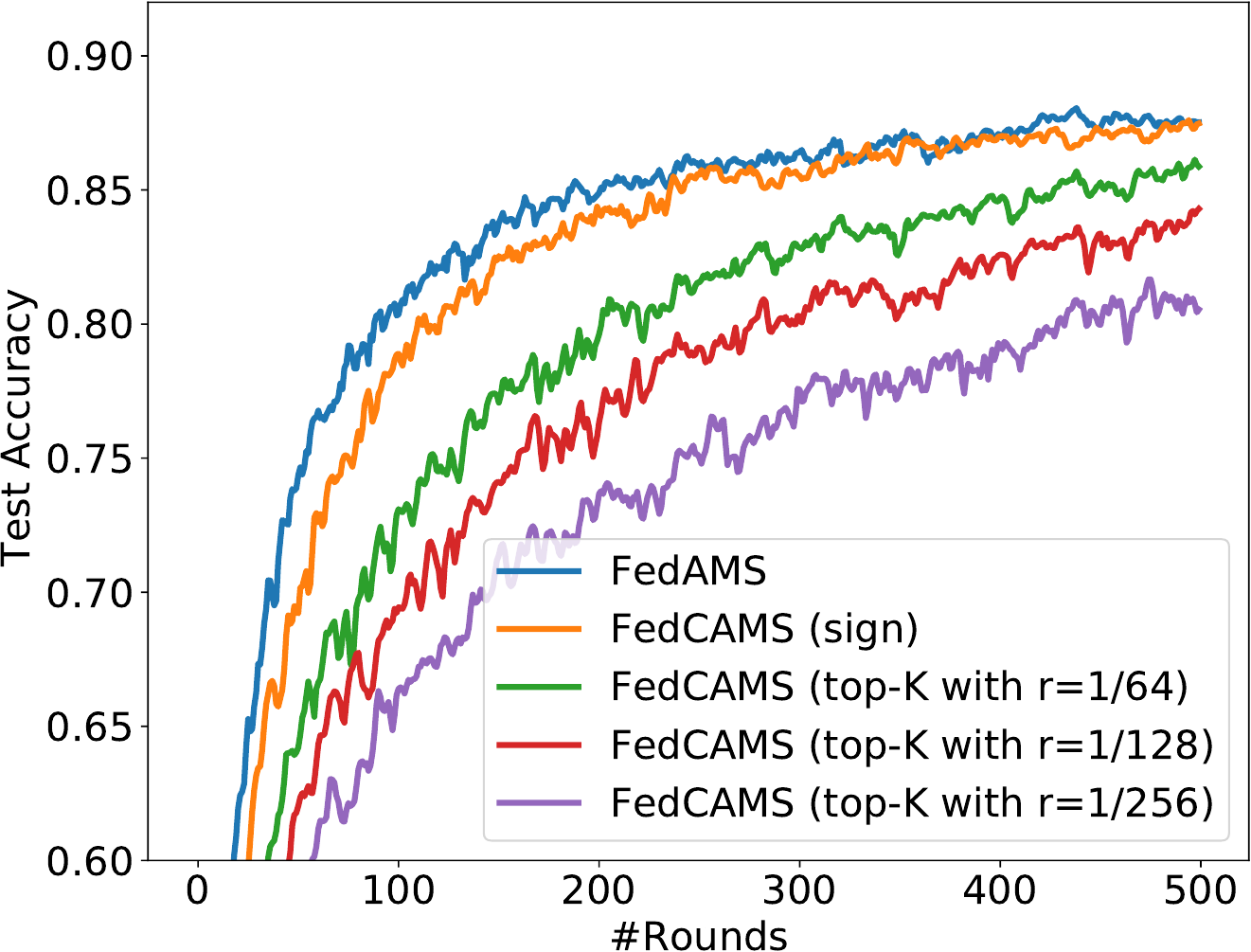}}
\subfigure[Test Accuracy]{\includegraphics[width=0.23\textwidth]{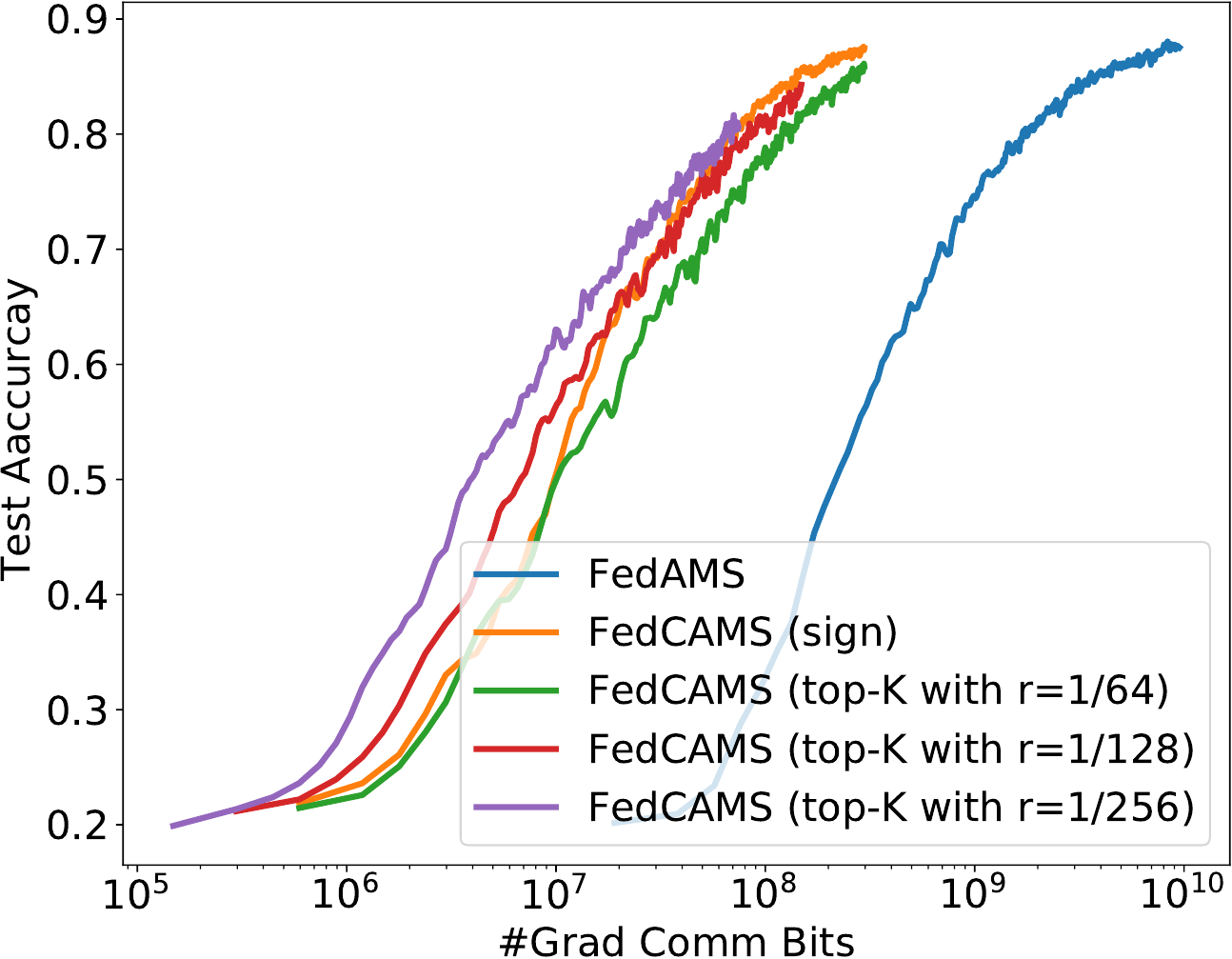}}

\caption{The learning curves for FedCAMS and  uncompressed FedAMS on training CIFAR-10 data on ConvMixer-256-8 model.}
\label{fig:cifar10_ef_convmixer}
\end{figure}

\vspace{-5pt}
\section{Conclusions and Future Work}
In this paper, we propose a communication-efficient compressed federated adaptive gradient optimization framework, FedCAMS, which largely reduces the communication overhead and addresses the adaptivity issue in federated optimization methods. FedCAMS is based on our proposed general adaptive federated optimization framework, FedAMS, which contains variants of FedAdam feature max stabilization mechanisms. We present an improved theoretical convergence analysis of adaptive federated optimization, based on which we prove that in the nonconvex stochastic optimization setting, our proposed FedCAMS achieves the same convergence rate as its uncompressed counterpart FedAMS with a few orders of magnitude less communication cost. Experiments on various benchmarks verified our theoretical results.

Our current analysis is limited to one-way communication compression from clients to the central server. However, extending our current analysis to two-way communication compression is highly non-trivial as it can be hard to guarantee the distributed global model from server to clients to stay synchronized due to error feedback and the biased compressor, especially in the partial participation setting. We leave it as future work.

\section*{Acknowledgements}
We thank the anonymous reviewers for their helpful comments.
This research was supported in part by a Seed Grant award from the Institute for Computational and Data Sciences at the Pennsylvania State University as well as Dell Technology AI Infrastructure Level Technologies Grant. 
The views and conclusions contained in this paper are those of the authors and should not be interpreted as representing any funding agencies.

\bibliography{ref}
\bibliographystyle{icml2022}

\newpage
\appendix
\onecolumn

\section{Proof in Section \ref{subsec:fedams}}
\subsection{Proof of Theorem \ref{thm:full-noncomp}} Similar to previous works studied adaptive methods \citep{chen2018convergence, zhou2018convergence, chen2020closing}, we introduce a Lyapunov sequence $\zb_t$: assume $\xb_0= \xb_1$, for each $t\geq 1$, we have
\begin{align}
    \zb_t = \xb_t + \frac{\beta_1}{1-\beta_1} (\xb_t-\xb_{t-1}) = \frac{1}{1-\beta_1} \xb_t - \frac{\beta_1}{1-\beta_1} \xb_{t-1}.
\end{align}
For the difference of sequence $\zb_t$, we have 
\begin{align*}
    \zb_{t+1} - \zb_t & = \frac{1}{1-\beta_1} (\xb_{t+1} - \xb_{t}) - \frac{\beta_1}{1-\beta_1} (\xb_{t} - \xb_{t-1}) \\
    & = \frac{1}{1-\beta_1}(\eta \Vbh_{t}^{-1/2} \mb_t) - \frac{\beta_1}{1-\beta_1} \eta \Vbh_{t-1}^{-1/2} \mb_{t-1} \\
    & = \frac{1}{1-\beta_1}\eta \Vbh_{t}^{-1/2} \bigg[\beta_1 \mb_{t-1} + (1-\beta_1) \Delta_t \bigg]- \frac{\beta_1}{1-\beta_1} \eta \Vbh_{t-1}^{-1/2} \mb_{t-1} \\
    & = \eta \Vbh_t^{-1/2} \Delta_{t} - \eta \frac{\beta_1}{1-\beta_1}\bigg(\Vbh_{t-1}^{-1/2} - \Vbh_{t}^{-1/2} \bigg)\mb_{t-1}.
\end{align*}
Since $f$ is $L$-smooth, taking conditional expectation at time $t$, we have
\begin{align}\label{eq:f-Lsmooth-noncomp}
& \EE[f(\zb_{t+1})]-f(\zb_{t}) \notag\\
& \leq \EE[\langle \nabla f(\zb_t), \zb_{t+1}-\zb_t \rangle] + \frac{L}{2} \EE[\|\zb_{t+1}-\zb_t\|^2] \notag\\
& \leq \EE\bigg[\bigg\langle\nabla f(\zb_{t}), \eta\Vbh_t^{-1/2} \Delta_{t}\bigg\rangle\bigg]-\EE\bigg[\bigg\langle\nabla f(\zb_{t}), \eta  \frac{\beta_{1}}{1-\beta_{1}} \bigg(\Vbh_{t-1}^{-1/2} - \Vbh_{t}^{-1/2}\bigg) \mb_{t-1}\bigg\rangle\bigg] \notag\\
& \quad +\frac{\eta^{2} L}{2} \EE\bigg[\bigg\|\Vbh_t^{-1/2} \Delta_{t}-\frac{\beta_{1}}{1-\beta_{1}} \bigg(\Vbh_{t-1}^{-1/2} - \Vbh_{t}^{-1/2}\bigg) \mb_{t-1}\bigg\|^{2}\bigg] \notag\\
& = \underbrace{\EE\bigg[\bigg\langle\nabla f(\xb_{t}), \eta\Vbh_t^{-1/2} \Delta_{t}\bigg\rangle\bigg]}_{I_1} \underbrace{-\eta \EE\bigg[\bigg\langle\nabla f(\zb_{t}), \frac{\beta_{1}}{1-\beta_{1}} \bigg(\Vbh_{t-1}^{-1/2} - \Vbh_{t}^{-1/2}\bigg) \mb_{t-1}\bigg\rangle\bigg]}_{I_2}\notag\\
& \quad + \underbrace{\frac{\eta^{2} L}{2} \EE\bigg[\bigg\|\Vbh_t^{-1/2} \Delta_{t}-\frac{\beta_{1}}{1-\beta_{1}} \bigg(\Vbh_{t-1}^{-1/2} - \Vbh_{t}^{-1/2}\bigg) \mb_{t-1}\bigg\|^{2}\bigg]}_{I_3}+  \underbrace{\EE\bigg[\bigg\langle\nabla f(\zb_{t})-\nabla f(\xb_t), \eta\Vbh_t^{-1/2} \Delta_{t}\bigg\rangle\bigg]}_{I_4},
\end{align}
here we recall the notation $\Vbh_t = \diag(\hat{\vb}_t) = \diag(\max(\hat{\vb}_{t-1}, \vb_t, \epsilon))$.

\textbf{Bounding $I_1$: }We have
\begin{align}\label{eq:I1-1}
    I_1 & = \EE\bigg[\bigg\langle\nabla f(\xb_{t}), \eta\frac{ \Delta_{t}}{\sqrt{\hat{\vb}_{t}}} \bigg\rangle\bigg] \notag\\
    & \leq \eta\EE\bigg[\bigg\langle\nabla f(\xb_{t}), \frac{\sqrt{2} \cdot \Delta_{t}}{\sqrt{\vb_{t}+\epsilon}} \bigg\rangle\bigg] \notag\\
    & = \sqrt{2} \eta\EE\bigg[\bigg\langle\nabla f(\xb_{t}), \frac{\Delta_{t}}{\sqrt{\beta_2 \vb_{t-1}+\epsilon}}\bigg\rangle\bigg] + \sqrt{2} \eta\EE\bigg[\bigg\langle\nabla f(\xb_{t}),\frac{ \Delta_{t}}{\sqrt{\vb_{t}+\epsilon}}- \frac{ \Delta_{t}}{\sqrt{\beta_2 \vb_{t-1}+\epsilon}}\bigg\rangle\bigg],
\end{align}
where the first inequality follows by the fact that $\hat{\vb}_t \geq \frac{\vb_t + \epsilon}{2} $. 
For the second term in \eqref{eq:I1-1}, we have 
\begin{align} \label{eq:I1-1-2}
    & \sqrt{2} \eta\EE\bigg[\bigg\langle\nabla f(\xb_{t}),\frac{\Delta_{t}}{\sqrt{\vb_{t}+\epsilon}}- \frac{\Delta_{t}}{\sqrt{\beta_2 \vb_{t-1}+\epsilon}}\bigg\rangle\bigg] \notag\\
    & \leq \sqrt{2} \eta \|\nabla f(\xb_t)\| \EE\bigg[\bigg\|\frac{1}{\sqrt{\vb_{t}+\epsilon}}- \frac{1}{\sqrt{\beta_2 \vb_{t-1}+\epsilon}}\bigg\| \cdot \|\Delta_t\| \bigg] \notag\\
    & \leq \frac{\eta \sqrt{2(1-\beta_2)} G}{\epsilon} \EE[\|\Delta_t\|^2],
\end{align}
where the second inequality follows from Lemma~\ref{lm:Wt} and \ref{lm:gmv-bound}, and we will further apply the bound for $\EE[\|\Delta_t\|^2]$ following from Lemma \ref{lm:EDelta}. For the first term in \eqref{eq:I1-1}, we have
\begin{align} \label{eq:I1-2}
    & \sqrt{2} \eta\EE\bigg[\bigg\langle\nabla f(\xb_{t}), \frac{ \Delta_{t}}{\sqrt{\beta_2 \vb_{t-1}+\epsilon}}\bigg\rangle\bigg] \notag\\
    & = \sqrt{2} \eta\EE\bigg[\bigg\langle \frac{\nabla f(\xb_t)}{\sqrt{\beta_2 \vb_{t-1}+\epsilon}} , \Delta_t + \eta_l K \nabla f(\xb_t) - \eta_l K \nabla f(\xb_t) \bigg\rangle\bigg] \notag\\
    & = - \sqrt{2} \eta\eta_l K\EE\bigg[ \bigg\|\frac{\nabla f(\xb_t)}{\sqrt[4]{\beta_2 \vb_{t-1} + \epsilon}} \bigg\|^2\bigg] + \sqrt{2} \eta\EE\bigg[\bigg\langle \frac{\nabla f(\xb_t)}{\sqrt{\beta_2 \vb_{t-1}+\epsilon}} , \Delta_t + \eta_l K \nabla f(\xb_t) \bigg\rangle\bigg] \notag\\
    & = - \sqrt{2} \eta\eta_l K\EE\bigg[ \bigg\|\frac{\nabla f(\xb_t)}{\sqrt[4]{\beta_2 \vb_{t-1} + \epsilon}} \bigg\|^2\bigg] + \sqrt{2}\eta \bigg\langle \frac{\nabla f(\xb_t)}{\sqrt{\beta_2 \vb_{t-1}+\epsilon}}, \EE \bigg[-\frac{1}{m}\sum_{i=1}^m \sum_{k=0}^{K-1} \eta_l \gb_{t,k}^i + \eta_l K \nabla f(\xb_t)\bigg] \bigg\rangle \notag\\
    & = - \sqrt{2} \eta\eta_l K\EE\bigg[ \bigg\|\frac{\nabla f(\xb_t)}{\sqrt[4]{\beta_2 \vb_{t-1} + \epsilon}} \bigg\|^2\bigg] + \sqrt{2}\eta \bigg\langle \frac{\nabla f(\xb_t)}{\sqrt{\beta_2 \vb_{t-1}+\epsilon}}, \EE \bigg[-\frac{\eta_l}{m}\sum_{i=1}^m \sum_{k=0}^{K-1}  \gb_{t,k}^i + \frac{\eta_l K}{m}\sum_{i=1}^m \nabla F_i(\xb_t)\bigg] \bigg\rangle,
\end{align}
where the third equality follows the local update rule. For the last term in \eqref{eq:I1-2}, we have 
\begin{align}
    & \sqrt{2} \eta \bigg\langle \frac{\nabla f(\xb_t)}{\sqrt{\beta_2 \vb_{t-1}+\epsilon}}, \EE \bigg[-\frac{\eta_l }{m}\sum_{i=1}^m \sum_{k=0}^{K-1} \gb_{t,k}^i + \frac{\eta_l K }{m}\sum_{i=1}^m \nabla F_i(\xb_t)\bigg] \bigg\rangle \notag\\
    & = \sqrt{2} \eta \bigg\langle \frac{\sqrt{\eta_l K}}{\sqrt[4]{\beta_2 \vb_{t-1} + \epsilon}} \nabla f(\xb_t),  - \frac{\sqrt{\eta_l K}}{K m} \frac{1}{\sqrt[4]{\beta_2 \vb_{t-1} + \epsilon}}\EE \bigg[\sum_{i=1}^m \sum_{k=0}^{K-1} (\nabla F_i(\xb_{t,k}^i) - \nabla F_i(\xb_t))\bigg] \bigg\rangle \notag\\
    & = \frac{\sqrt{2} \eta \eta_l K}{2} \EE\bigg[\bigg\|\frac{\nabla f(\xb_t)}{\sqrt[4]{\beta_2 \vb_{t-1} + \epsilon}} \bigg\|^2\bigg] + \frac{\sqrt{2} \eta\eta_l}{2 K m^2} \EE \bigg[\bigg\|\frac{1}{\sqrt[4]{\beta_2 \vb_{t-1} + \epsilon}}\sum_{i=1}^m \sum_{k=0}^{K-1} (\nabla F_i(\xb_{t,k}^i) - \nabla F_i(\xb_t))\bigg\|^2\bigg] \notag\\
    & \qquad - \frac{\sqrt{2} \eta\eta_l}{2 K m^2} \EE\bigg[ \bigg\|\frac{1}{\sqrt[4]{\beta_2 \vb_{t-1} + \epsilon}}\sum_{i=1}^m \sum_{k=0}^{K-1} \nabla F_i(\xb_{t,k}^i)\bigg\|^2\bigg] \notag\\
    & \leq \frac{\sqrt{2} \eta \eta_l K}{2} \EE\bigg[\bigg\|\frac{\nabla f(\xb_t)}{\sqrt[4]{\beta_2 \vb_{t-1} + \epsilon}} \bigg\|^2\bigg] + \frac{\sqrt{2} \eta\eta_l}{2m}\sum_{i=1}^m \sum_{k=0}^{K-1}\EE \bigg[\bigg\| \frac{\nabla F_i(\xb_{t,k}^i) - \nabla F_i(\xb_t)}{\sqrt[4]{\beta_2 \vb_{t-1} + \epsilon}} \bigg\|^2\bigg] \notag\\
    & \qquad - \frac{\sqrt{2} \eta\eta_l}{2 K m^2} \EE\bigg[ \bigg\|\frac{1}{\sqrt[4]{\beta_2 \vb_{t-1} + \epsilon}} \sum_{i=1}^m \sum_{k=0}^{K-1} \nabla F_i(\xb_{t,k}^i)\bigg\|^2\bigg] \notag\\ 
    & \leq \frac{\sqrt{2} \eta \eta_l K}{2} \EE\bigg[\bigg\|\frac{\nabla f(\xb_t)}{\sqrt[4]{\beta_2 \vb_{t-1} + \epsilon}} \bigg\|^2\bigg] + \frac{\sqrt{2} \eta\eta_l L^2}{2m}\sum_{i=1}^m \sum_{k=0}^{K-1}\EE\bigg[ \bigg\| \frac{\xb_{t,k}^i - \xb_t}{\sqrt[4]{\beta_2 \vb_{t-1} + \epsilon}} \bigg\|^2\bigg] \notag\\
    & \qquad - \frac{\sqrt{2} \eta\eta_l}{2 K m^2} \EE\bigg[ \bigg\|\frac{1}{\sqrt[4]{\beta_2 \vb_{t-1} + \epsilon}}\sum_{i=1}^m \sum_{k=0}^{K-1} \nabla F_i(\xb_{t,k}^i)\bigg\|^2\bigg],
\end{align}
where the second equation follows from $\langle \xb,\yb\rangle = \frac{1}{2}[\|\xb\|^2 + \|\yb\|^2 - \|\xb-\yb\|^2]$, the first inequality holds by applying Cauchy-Schwarz inequality, the second inequality follows from Assumption~\ref{as:smooth}. 

Hence by applying Lemma~\ref{lm:xikt-xt} with the local learning rate condition: $\eta_l \leq \frac{1}{8KL}$, we have
\begin{align}
    & \sqrt{2} \cdot \eta \bigg\langle \frac{\nabla f(\xb_t)}{\sqrt{\beta_2 \vb_{t-1}+\epsilon}}, \EE \bigg[-\frac{\eta_l}{m}\sum_{i=1}^m \sum_{k=0}^{K-1} \gb_{t,k}^i + \frac{\eta_l K}{m}\sum_{i=1}^m \nabla F_i(\xb_t)\bigg] \bigg\rangle \notag\\
    & \leq \frac{3\sqrt{2} \eta \eta_l K}{4} \EE\bigg[\bigg\|\frac{\nabla f(\xb_t)}{\sqrt[4]{\beta_2 \vb_{t-1} + \epsilon}} \bigg\|^2\bigg] + \frac{5\eta\eta_l^3 K^2 L^2}{\sqrt{2\epsilon}} (\sigma_l^2+6K \sigma_g^2) - \frac{\sqrt{2} \eta\eta_l}{2 K m^2} \EE \bigg[\bigg\|\frac{1}{\sqrt[4]{\beta_2 \vb_{t-1} + \epsilon}}\sum_{i=1}^m \sum_{k=0}^{K-1} \nabla F_i(\xb_{t,k}^i)\bigg\|^2\bigg].
\end{align}
Then merging pieces together, we have
\begin{align}\label{eq:I1}
    I_1 & \leq -\frac{\sqrt{2} \eta\eta_l K}{4} \EE\bigg[ \bigg\|\frac{\nabla f(\xb_t)}{\sqrt[4]{\beta_2 \vb_{t-1} + \epsilon}} \bigg\|^2\bigg] + \frac{5\eta\eta_l^3 K^2 L^2}{\sqrt{2 \epsilon}} (\sigma_l^2+6K \sigma_g^2) \notag\\
    & \quad - \frac{\sqrt{2} \eta\eta_l}{2 K m^2} \EE \bigg[ \bigg\|\frac{1}{\sqrt[4]{\beta_2 \vb_{t-1} + \epsilon}}\sum_{i=1}^m \sum_{k=0}^{K-1} \nabla F_i(\xb_{t,k}^i)\bigg\|^2\bigg] + \frac{\eta \sqrt{2(1-\beta_2)} G}{\epsilon} \EE[\|\Delta_t\|^2] \notag\\
    & \leq -\frac{\eta\eta_l K}{4} \EE\bigg[ \bigg\|\frac{\nabla f(\xb_t)}{\sqrt[4]{\beta_2 \vb_{t-1} + \epsilon}} \bigg\|^2\bigg] + \frac{5\eta\eta_l^3 K^2 L^2}{\sqrt{2 \epsilon}} (\sigma_l^2+6K \sigma_g^2)\notag\\
    & \quad - \frac{\eta\eta_l}{2 K m^2} \EE \bigg[ \bigg\|\frac{1}{\sqrt[4]{\beta_2 \vb_{t-1} + \epsilon}}\sum_{i=1}^m \sum_{k=0}^{K-1} \nabla F_i(\xb_{t,k}^i)\bigg\|^2\bigg] + \frac{\eta \sqrt{2(1-\beta_2)} G}{\epsilon} \EE[\|\Delta_t\|^2].
\end{align}
\textbf{Bounding $I_2$:} The bound for $I_2$ mainly follows by the update rule and definition of virtual sequence $\zb_t$,
\begin{align}\label{eq:I2}
    I_2 & = -\eta \EE\bigg[\bigg\langle\nabla f(\zb_{t}), \frac{\beta_{1}}{1-\beta_{1}} \bigg(\Vbh_{t-1}^{-1/2} - \Vbh_{t}^{-1/2}\bigg) \mb_{t-1}\bigg\rangle\bigg] \notag\\
    & \quad = -\eta \EE\bigg[\bigg\langle\nabla f(\zb_t) - \nabla f(\xb_{t}) + \nabla f(\xb_{t}), \frac{\beta_{1}}{1-\beta_{1}} \bigg(\Vbh_{t-1}^{-1/2} - \Vbh_{t}^{-1/2}\bigg) \mb_{t-1} \bigg\rangle\bigg] \notag\\
    & \leq \eta \EE\bigg[\|\nabla f(\xb_{t})\|\bigg
    \|\frac{\beta_{1}}{1-\beta_{1}} \bigg(\Vbh_{t-1}^{-1/2} - \Vbh_{t}^{-1/2}\bigg) \mb_{t-1}\bigg\|\bigg] \notag\\
    & \quad + \eta^2 L \EE \bigg[\bigg\|\frac{\beta_1}{1-\beta_1} \Vbh_{t-1}^{-1/2} \mb_{t-1}\bigg\| \bigg\|\frac{\beta_{1}}{1-\beta_{1}} \bigg(\Vbh_{t-1}^{-1/2} - \Vbh_{t}^{-1/2}\bigg) \mb_{t-1} \bigg\|\bigg] \notag \\
    & \leq \eta \frac{\beta_1}{1-\beta_1} \eta_l K G^2 \EE\bigg[\Big\|\Vbh_{t-1}^{-1/2} - \Vbh_{t}^{-1/2}\Big\|_1\bigg] + \eta^2  \frac{\beta_1^2}{(1-\beta_1)^2} L \eta_l^2 K^2 G^2 \epsilon^{-1/2} \EE\bigg[\Big\|\Vbh_{t-1}^{-1/2} - \Vbh_{t}^{-1/2}\Big\|_1\bigg],
\end{align}
where the last inequality holds by applying Lemma \ref{lm:gmv-bound} and the fact of $\hat{\vb}_{t-1} \geq \epsilon$.

\textbf{Bounding $I_3$:} It can be bounded as follows:
\begin{align}\label{eq:I3}
    I_3 & = \frac{\eta^2 L }{2} \EE\bigg[ \bigg\|\Vbh_t^{-1/2} \Delta_{t} + \frac{\beta_{1}}{1-\beta_{1}} \bigg(\Vbh_{t-1}^{-1/2} - \Vbh_{t}^{-1/2}\bigg) \mb_{t-1} \bigg\|^2\bigg] \notag\\
    & \leq \eta^2 L \EE\bigg[ \big\|\Vbh_t^{-1/2} \Delta_{t}\big\|^2\bigg]+ \eta^2 L \EE \bigg[ \bigg\|\frac{\beta_{1}}{1-\beta_{1}} \bigg(\Vbh_{t-1}^{-1/2} - \Vbh_{t}^{-1/2}\bigg) \mb_{t-1} \bigg\|^2\bigg] \notag\\
    & \leq \eta^2 L \EE  \bigg[\big\|\Vbh_t^{-1/2} \Delta_{t}\big\|^2 \bigg]+ \eta^2 L  \frac{\beta_1^2}{(1-\beta_1)^2} \eta_l^2 K^2 G^2 \EE \bigg [\Big\|\Vbh_{t-1}^{-1/2} - \Vbh_{t}^{-1/2}\Big\|^2\bigg],
\end{align}
where the first inequality follows by Cauchy-Schwarz inequality, and the second one follows by Lemma \ref{lm:gmv-bound}.

\textbf{Bounding $I_4$: }
\begin{align}
    I_4 & = \EE\bigg[\bigg\langle\nabla f(\zb_{t})-\nabla f(\xb_t), \eta \Vbh_t^{-1/2} \Delta_{t}\bigg\rangle\bigg] \notag\\
    & \leq \EE\bigg[\|\nabla f(\zb_t) - \nabla f(\xb_t)\| \big\|\eta \Vbh_t^{-1/2} \Delta_{t}\big\| \bigg] \notag\\
    & \leq L \EE\bigg[ \|\zb_t - \xb_t\|\big\|\eta \Vbh_t^{-1/2} \Delta_{t}\big\| \bigg] \notag\\
    & \leq \frac{\eta^2 L}{2} \EE \bigg[\big\|\Vbh_t^{-1/2} \Delta_{t}\big\|^2\bigg] + \frac{\eta^2 L}{2} \EE\bigg[\bigg\|\frac{\beta_1}{1-\beta_1} \Vbh_{t-1}^{-1/2} \mb_{t-1}\bigg\|^2 \bigg],\notag
\end{align}
where the first inequality holds by the fact of $\langle \ab, \bb \rangle \leq \|\ab\|\|\bb\|$, the second one follows from Assumption \ref{as:smooth} and the third one holds by the definition of virtual sequence $\zb_t$ and the fact of $\|\ab\|\|\bb\| \leq \frac{1}{2}\|\ab\|^2 + \frac{1}{2}\|\bb\|^2$. Then summing $I_4$ over $t=1,\cdots, T$, we have
\begin{align}\label{eq:I4-1}
    \sum_{t=1}^T I_4 & \leq \frac{\eta^2 L}{2\epsilon} \sum_{t=1}^T \EE [\|\Delta_{t}\|^2] + \frac{\eta^2 L}{2\epsilon}\sum_{t=1}^T  \EE\bigg[\bigg\|\frac{\beta_1}{1-\beta_1} \mb_t \bigg\|^2 \bigg] \notag\\
    & \leq \frac{\eta^2 L}{2\epsilon} \sum_{t=1}^T \EE[\|\Delta_t\|^2] + \frac{\eta^2 L}{2 \epsilon} \frac{\beta_1^2}{(1-\beta_1)^2} \sum_{t=1}^T \EE [\|\mb_t\|^2].
\end{align}
By Lemma \ref{lm:mE-bound}, we have
\begin{align}
    \sum_{t=1}^T \EE[\|\mb_t\|^2] \leq \frac{T K \eta_l^2}{m} \sigma_l^2 + \frac{\eta_l^2}{m^2} \sum_{t=1}^T \EE \bigg[\bigg\|\sum_{i=1}^m \sum_{k=0}^{K-1}\nabla F_i(\xb_{t,k}^i ) \bigg\|^2\bigg].\notag
\end{align}
Therefore, the summation of $I_4$ term is bounded by 
\begin{align}\label{eq:I4-2}
    \sum_{t=1}^T I_4 & \leq \frac{\eta^2 L}{2\epsilon}  \sum_{t=1}^T \EE [\|\Delta_t\|^2]  +  \frac{\beta_1^2}{(1-\beta_1)^2} \frac{\eta^2 L}{2\epsilon} \frac{\eta_l^2}{m^2} \sum_{t=1}^T \EE \bigg[\bigg\|\sum_{i=1}^m \sum_{k=0}^{K-1}\nabla F_i(\xb_{t,k}^i ) \bigg\|^2\bigg] \notag\\
    & \quad +  \frac{\beta_1^2}{(1-\beta_1)^2} \frac{\eta^2 L}{2\epsilon} \frac{TK \eta_l^2}{m} \sigma_l^2.
\end{align}
\\\textbf{Merging pieces together:}
Substituting \eqref{eq:I1}, \eqref{eq:I2} and \eqref{eq:I3} into \eqref{eq:f-Lsmooth-noncomp}, summing over from $t=1$ to $T$ and then adding \eqref{eq:I4-1}, we have
\begin{align}\label{eq:fzI-1}
    & \EE[f(\zb_{T+1})]-f(\zb_1) = \sum_{t=1}^{T} [I_1+I_2+I_3+I_4] \notag\\
    & \leq -\frac{\eta\eta_l K}{4} \sum_{t=1}^{T}\EE\bigg[ \bigg\|\frac{\nabla f(\xb_t)}{\sqrt[4]{\beta_2 \vb_{t-1} + \epsilon}} \bigg\|^2\bigg]+ \frac{5\eta\eta_l^3 K^2 L^2 T}{\sqrt{2\epsilon}} (\sigma_l^2+6K \sigma_g^2) + \frac{\sqrt{2(1-\beta_2)} \eta G}{\epsilon} \sum_{t=1}^{T}\EE[\|\Delta_t\|^2]  \notag\\
    & \quad  - \frac{\eta\eta_l}{2 K m^2} \sum_{t=1}^{T} \EE\bigg[ \bigg\|\frac{1}{\sqrt[4]{\beta_2 \vb_{t-1} + \epsilon}}\sum_{i=1}^m \sum_{k=0}^{K-1} \nabla F_i(\xb_{t,k}^i))\bigg\|^2\bigg] \notag\\
    & \quad + \frac{\beta_1}{1-\beta_1} \eta\eta_l K G^2\sum_{t=1}^{T}\EE\bigg[\Big\|\Vbh_{t-1}^{-1/2} - \Vbh_{t}^{-1/2}\Big\|_1\bigg] + \frac{\beta_1^2}{(1-\beta_1)^2} \frac{\eta^2\eta_l^2 K^2 G^2}{\sqrt{\epsilon}} \sum_{t=1}^{T}\EE\bigg[\Big\|\Vbh_{t-1}^{-1/2} - \Vbh_{t}^{-1/2}\Big\|_1\bigg] \notag\\
    & \quad + \frac{\beta_1^2}{(1-\beta_1)^2} \eta^2\eta_l^2K^2 L G^2 \sum_{t=1}^{T}\EE\bigg[\Big\|\Vbh_{t-1}^{-1/2} - \Vbh_{t}^{-1/2}\Big\|^2\bigg] + \eta^2 L \sum_{t=1}^{T} \EE \bigg[\big\|\Vbh_t^{-1/2} \Delta_{t}\big\|^2 \bigg] \notag\\
    & \quad + \frac{\eta^2 L}{2\epsilon} \sum_{t=1}^{T} \EE [\|\Delta_{t}\|^2]  + \frac{\eta^2 L}{2\epsilon} \frac{\beta_1^2}{(1-\beta_1)^2} \sum_{t=1}^{T} \EE[\|\mb_t\|^2].
\end{align}
By applying Lemma \ref{lm:EDelta} into all terms containing the second moment estimate of model difference $\Delta_t$ in \eqref{eq:fzI-1}, and using the fact that $(\sqrt{\beta_2 K^2 G^2 + \epsilon})^{-1} \|\xb\| \leq(\sqrt{\beta_2 \eta_l^2 K^2 G^2 + \epsilon})^{-1}\|\xb\| \leq \big\|\frac{\xb}{\sqrt{\beta_2 \vb_t + \epsilon}}\big\| \leq \epsilon^{-1/2} \|\xb\|$, we have
\begin{align} \label{eq:fzI-2}
    & \EE[f(\zb_{T+1})]-f(\zb_1) \notag\\
    & \leq -\frac{\eta\eta_l K}{4} \sum_{t=1}^{T} \EE\bigg[\bigg\|\frac{\nabla f(\xb_t)}{\sqrt[4]{\beta_2 \vb_{t-1} + \epsilon}} \bigg\|^2\bigg]+ \frac{5\eta\eta_l^3 K^2 L^2 T}{\sqrt{2\epsilon}} (\sigma_l^2+6K \sigma_g^2)  + \frac{\beta_1}{1-\beta_1} \frac{\eta \eta_l K G^2 d}{\sqrt{\epsilon}} \notag\\
    & \quad + \frac{\beta_1^2}{(1-\beta_1)^2} \frac{2 \eta^2 \eta_l^2 K^2 L G^2 d}{\epsilon} - \frac{\eta\eta_l}{2 K m^2} \sum_{t=1}^{T} \EE \bigg[ \bigg\|\frac{1}{\sqrt[4]{\beta_2 \vb_{t-1} + \epsilon}} \sum_{i=1}^m \sum_{k=0}^{K-1} \nabla F_i(\xb_{t,k}^i))\bigg\|^2\bigg] \notag\\
    & \quad + \bigg(\eta^2 L + \frac{\eta^2 L}{2} + \sqrt{2(1-\beta_2)} \eta G \bigg) \bigg[ \frac{K T \eta_l^2}{m\epsilon} \sigma_l^2 + \frac{ \eta_l^2}{m^2 \epsilon} \sum_{t=1}^{T} \EE \bigg[\bigg\|\sum_{i=1}^m \sum_{k=0}^{K-1} \nabla F_i(\xb_{t,k}^i)\bigg\|^2 \bigg] \notag\\
    & \quad + \frac{\beta_1^2}{(1-\beta_1)^2}\frac{\eta^2 L}{2\epsilon}\frac{\eta_l^2}{m^2} \sum_{t=1}^{T} \EE \bigg[\bigg\|\sum_{i=1}^m \sum_{k=0}^{K-1} \nabla F_i(\xb_{t,k}^i)\bigg\|^2 \bigg] + \frac{\beta_1^2}{(1-\beta_1)^2} \frac{\eta^2 L}{2\epsilon}\frac{T K \eta_l^2}{m}\sigma_l^2  \notag\\
    & \leq -\frac{\eta\eta_l K}{4 \sqrt{\beta_2 \eta_l^2 K^2 G^2 + \epsilon}} \sum_{t=1}^{T} \EE[\|\nabla f(\xb_t)\|^2]+ \frac{5\eta\eta_l^3 K^2 L^2 T}{\sqrt{2\epsilon}} (\sigma_l^2+6K \sigma_g^2)  \notag\\
    & \quad + \frac{\beta_1}{1-\beta_1} \frac{\eta \eta_l K G^2 d}{\sqrt{\epsilon}} + \frac{\beta_1^2}{(1-\beta_1)^2}\frac{2\eta^2 \eta_l^2 K^2 L G^2 d}{\epsilon} \notag\\
    & \quad + \bigg(\eta^2 L + \frac{\eta^2 L}{2} + \sqrt{2(1-\beta_2)} \eta G + \frac{\beta_1^2}{(1-\beta_1)^2} \frac{\eta^2 L}{2} \bigg) \frac{KT\eta_l^2}{m \epsilon} \sigma_l^2 - \sum_{t=1}^{T} \EE \bigg[\bigg\|\sum_{i=1}^m \sum_{k=0}^{K-1} \nabla F_i(\xb_{t,k}^i)\bigg\|^2 \bigg] \notag\\
    & \quad \cdot \bigg[\frac{\eta\eta_l}{2\sqrt{\beta_2 K^2 G^2 + \epsilon} K m^2} -\bigg(\eta^2 L + \frac{\eta^2 L}{2} + \eta \sqrt{2(1-\beta_2)} G + \frac{\beta_1^2}{(1-\beta_1)^2} \frac{\eta^2 L}{2} \bigg)\frac{\eta_l^2}{m^2\epsilon}\bigg] \notag\\
    & \leq -\frac{\eta\eta_l K}{4 \sqrt{\beta_2 \eta_l^2 K^2 G^2 + \epsilon}} \sum_{t=1}^{T} \EE[\|\nabla f(\xb_t)\|^2]+ \frac{5\eta\eta_l^3 K^2 L^2 T}{\sqrt{2\epsilon}} (\sigma_l^2+6K \sigma_g^2)  \notag\\
    & \quad + \frac{\beta_1}{1-\beta_1} \frac{\eta \eta_l K G^2 d}{\sqrt{\epsilon}} + \frac{\beta_1^2}{(1-\beta_1)^2}\frac{2 \eta^2 \eta_l^2 K^2 L G^2 d}{\epsilon} \notag\\
    & \quad + \bigg(\eta^2 L + \frac{\eta^2 L}{2} + \eta \sqrt{2(1-\beta_2)}G +\frac{\beta_1^2}{(1-\beta_1)^2} \frac{\eta^2 L}{2} \bigg) \frac{KT\eta_l^2}{m \epsilon} \sigma_l^2.
\end{align}
The last inequality holds due to additional constraint of local learning rate $\eta_l$ with the inequality $\frac{\eta\eta_l}{2 \sqrt{\beta_2 K^2 G^2 + \epsilon} K m^2} - (\frac{3 \eta^2 L}{2} + \eta \sqrt{2(1-\beta_2)} G + \frac{\beta_1^2}{(1-\beta_1)^2} \frac{\eta^2 L}{2} )\frac{\eta_l^2}{m^2\epsilon} \geq 0$, thus we obtain the constraint $\eta_l \leq \frac{\epsilon}{K \sqrt{\beta_2 K^2 G^2 + \epsilon} [(3+C_1^2)\eta L+2\sqrt{2(1-\beta_2)}G]}$. Hence we have
\begin{align}\label{eq:fzI-3}
    & \frac{\eta\eta_l K}{4 \sqrt{\beta_2 \eta_l^2 K^2 G^2 + \epsilon} \cdot T} \sum_{t=1}^T \EE[\|\nabla f(\xb_t)\|^2] \notag\\
    & \leq \frac{f(\zb_0)-\EE [f(\zb_T)]}{T} + \frac{5\eta\eta_l^3 K^2 L^2}{\sqrt{2\epsilon}} (\sigma_l^2+6K \sigma_g^2) + [(3 + C_1^2) \eta^2 L + 2\sqrt{2(1-\beta_2)} \eta G ] \frac{K\eta_l^2}{2m\epsilon} \sigma_l^2 \notag\\
    & \quad + \frac{C_1 \eta \eta_l K G^2 d}{T \sqrt{\epsilon}} + \frac{2  C_1^2  \eta^2 \eta_l^2 K^2 L G^2 d}{T \epsilon}.
\end{align}
Therefore, 
\begin{align}
    \min \EE[\|\nabla f(\xb_t)\|^2] & \leq 4 \sqrt{\beta_2 \eta_l^2 K^2 G^2 + \epsilon}\cdot\bigg[\frac{f_0-f_*}{\eta \eta_l K T} + \frac{\Psi}{T} + \Phi \bigg],
\end{align}
where $\Psi = \frac{C_1 G^2 d}{\sqrt{\epsilon}}+ \frac{2C_1^2 \eta\eta_l K L G^2 d}{\epsilon}, \Phi = \frac{5 \eta_l^2 K L^2}{\sqrt{2 \epsilon}} (\sigma_l^2+6K \sigma_g^2)+ [(3 + C_1^2) \eta L + 2\sqrt{2(1-\beta_2)} G ] \frac{\eta_l}{2m\epsilon} \sigma_l^2$, where $C_1 = \frac{\beta_1}{1-\beta_1}$. 

\subsection{Proof of Corollary \ref{cor:fedams-full}}

If we pick $\eta_l= \Theta( \frac{1}{\sqrt{T}K})$ and $\eta = \Theta(\sqrt{Km})$, we have $\min_{t \in[T]} \EE[\|\nabla f(\xb_t)\|^2] = \cO(\frac{1}{\sqrt{TKm}})$.


\subsection{Proof of Theorem \ref{thm:part-fedams}} 

\textit{Notations and equations:} For partial participation, i.e. $ |\cS_t|=n, \forall t\in[T]$. The global model difference is the average of local model difference from the subset $\cS_t$, i.e., $\Delta_t = \frac{1}{n} \sum_{i \in \cS_t} \Delta_i^t$. Denote $ \Bar{\Delta}_t = \frac{1}{m} \sum_{i=1}^m \Delta_i^t$, and for convenience, we follow the previous notation of $\Vbh_t = \diag (\hat{\vb}_t + \epsilon)$. Next we show that the global model difference $\Delta_t$ is an unbiased estimator of $\Bar{\Delta}_t$:
\begin{align}\label{eq: Delta-unbiased}
    \EE_{\cS_t}[\Delta_t] = \frac{1}{n} \EE_{\cS_t}[\sum_{i=1}^n \Delta_t^{w_i}] = \EE_{\cS_t}[\Delta_t^{w_1}] = \frac{1}{m} \sum_{i=1}^m \Delta_t^i = \Bar{\Delta}_t.
\end{align}
Define the virtual sequence $\zb_t$ same as previous: assume $\xb_0 = \xb_1$, for each $t \geq 1$, we have 
\begin{align}
    & \zb_t = \xb_t + \frac{\beta_1}{1-\beta_1} (\xb_t - \xb_{t-1}) = \frac{1}{1-\beta_1} \xb_t - \frac{\beta_1}{1-\beta_1} \xb_{t-1}, \\
    & \zb_{t+1} - \zb_{t} = \eta \Vbh_t^{-1/2} \Delta_t - \eta \frac{\beta_1}{1-\beta_1}\bigg(\Vbh_{t-1}^{-1/2} - \Vbh_t^{-1/2}\bigg) \mb_{t-1}.
\end{align}
By Assumption \ref{as:smooth}, we have 
\begin{align}\label{eq:f-Lsmooth-noncomp'}
& \EE[f(\zb_{t+1})]-f(\zb_{t}) \notag\\
& \quad \leq \EE\bigg[\bigg\langle\nabla f(\zb_{t}), \eta \Vbh_t^{-1/2} \Delta_{t}\bigg\rangle\bigg]-\EE\bigg[\bigg\langle\nabla f(\zb_{t}), \eta \frac{\beta_{1}}{1-\beta_{1}} \bigg(\Vbh_{t-1}^{-1/2} - \Vbh_{t}^{-1/2}\bigg) \mb_{t-1}\bigg\rangle\bigg] \notag\\
& \quad +\frac{\eta^{2} L}{2} \EE\bigg[\bigg\|\Vbh_t^{-1/2} \Delta_{t}-\frac{\beta_{1}}{1-\beta_{1}} \bigg(\Vbh_{t-1}^{-1/2} - \Vbh_{t}^{-1/2}\bigg) \mb_{t-1}\bigg\|^{2}\bigg] \notag\\
& = \underbrace{\EE\bigg[\bigg\langle\nabla f(\xb_{t}), \eta\Vbh_t^{-1/2} \Delta_{t}\bigg\rangle\bigg]}_{I'_1} \underbrace{-\eta \EE\bigg[\bigg\langle\nabla f(\zb_{t}), \frac{\beta_{1}}{1-\beta_{1}} \bigg(\Vbh_{t-1}^{-1/2} - \Vbh_{t}^{-1/2}\bigg) \mb_{t-1}\bigg\rangle\bigg]}_{I'_2}\notag\\
& \quad + \underbrace{\frac{\eta^{2} L}{2} \EE\bigg[\bigg\|\Vbh_t^{-1/2} \Delta_{t}-\frac{\beta_{1}}{1-\beta_{1}} \bigg(\Vbh_{t-1}^{-1/2} - \Vbh_{t}^{-1/2}\bigg) \mb_{t-1}\bigg\|^{2}\bigg]}_{I'_3}+  \underbrace{\EE\bigg[\bigg\langle\nabla f(\zb_{t})-\nabla f(\xb_t), \eta\Vbh_t^{-1/2} \Delta_{t}\bigg\rangle\bigg]}_{I'_4}.
\end{align}
Since $\Delta_t$ is an unbiased estimator of $\Bar{\Delta}_t$, the main difference of convergence analysis for partial participation cases is bounding $\EE[\|\Delta_t\|^2]$. 

Note that the bound for $I'_2$ is exactly the same as the bound for $I_2$. For the corresponding three terms, $I'_1$, $I'_3$ and $I'_4$ which include the second-order momentum estimate of $\Delta_t$. For $I'_1$, we have
\begin{align}\label{eq:I1'-1}
    I'_1 & = \EE\bigg[\bigg\langle\nabla f(\xb_{t}), \eta\frac{ \Delta_{t}}{\sqrt{\hat{\vb}_{t}}} \bigg\rangle\bigg] \notag\\
    & \leq \eta\EE\bigg[\bigg\langle\nabla f(\xb_{t}), \frac{\sqrt{2} \cdot \Delta_{t}}{\sqrt{\vb_{t} + \epsilon}} \bigg\rangle\bigg] \notag\\
    & = \eta\EE\bigg[\bigg\langle\nabla f(\xb_{t}), \frac{\sqrt{2} \cdot \Delta_{t}}{\sqrt{\beta_2 \vb_{t-1}+\epsilon}}\bigg\rangle\bigg] + \eta\EE\bigg[\bigg\langle\nabla f(\xb_{t}),\frac{\sqrt{2} \cdot \Delta_{t}}{\sqrt{\vb_{t} + \epsilon}}- \frac{ \sqrt{2} \cdot \Delta_{t}}{\sqrt{\beta_2 \vb_{t-1}+\epsilon}}\bigg\rangle\bigg].
\end{align}
The first term in \eqref{eq:I1'-1} does not change in partial participation scheme. The second term is changed due to the variance of $\Delta_t$ changes. For the second term of $I'_1$, we have
\begin{align}\label{eq:I1'-2}
    \sqrt{2} \eta\EE\bigg[\bigg\langle\nabla f(x_{t}),\frac{\Delta_{t}}{\sqrt{\vb_{t} + \epsilon}}- \frac{\Delta_{t}}{\sqrt{\beta_2 \vb_{t-1}+\epsilon}}\bigg\rangle\bigg] \leq \frac{\sqrt{2(1-\beta_2)}\eta G}{\epsilon} \EE[\|\Delta_t\|^2].
\end{align}
For $I'_3$, we have
\begin{align}\label{eq:I3'}
    \sum_{t=1}^T I'_3 \leq \frac{\eta^2 L}{\epsilon} \sum_{t=1}^T \EE [\|\Delta_t\|^2] + \eta^2 L \frac{\beta_1^2}{(1-\beta_1)^2} \eta_l^2 K^2 G^2 \sum_{t=1}^T \EE \bigg[\bigg\|\bigg(\Vbh_{t-1}^{-1/2} - \Vbh_{t}^{-1/2}\bigg)\bigg\|^2\bigg],
\end{align}
and for $I'_4$, similar to \eqref{eq:I4-1}, we have
\begin{align}\label{eq:I4'-partial}
    \sum_{t=1}^T I'_4 \leq \frac{\eta^2 L}{2\epsilon} \sum_{t=1}^T \EE [\|\Delta_t\|^2] + \frac{\eta^2 L}{2\epsilon} \frac{\beta_1^2}{(1-\beta_1)^2} \sum_{t=1}^{T}\EE [\|\mb_t\|^2].
\end{align}
From Lemma \ref{lm:mE-bound-partial}, we have
\begin{align} \label{eq:m'-partial}
    \sum_{t=1}^{T}\EE [\|\mb_t\|^2] \leq \frac{K T \eta_l^2}{n} \sigma_l^2 + \frac{\eta_l^2}{n^2}  \sum_{t=1}^{T} \EE \bigg[\bigg\|\sum_{i \in \cS_t} \sum_{k=0}^{K-1} \nabla F_i(\xb_{t,k}^i) \bigg\|^2\bigg].
\end{align}
Then substituting \eqref{eq:m'-partial} into \eqref{eq:I4'-partial}, we have
\begin{align}\label{eq:I4'}
    \sum_{t=1}^T I'_4 & \leq \frac{\eta^2 L}{2\epsilon} \sum_{t=1}^T \EE [\|\Delta_t\|^2] + \frac{\beta_1^2}{(1-\beta_1)^2} \frac{\eta^2 \eta_l^2 L}{2n^2 \epsilon} \sum_{t=1}^T \EE\bigg[\bigg\|\sum_{i \in \cS_t} \sum_{k=0}^{K-1} \nabla F_i (\xb_{t,k}^i) \bigg\|^2\bigg] + \frac{\beta_1^2}{(1-\beta_1)^2} \frac{\eta^2 \eta_l^2 K T L}{2n \epsilon}\sigma_l^2 \notag\\
    & \leq \frac{\eta^2 L}{2\epsilon} \sum_{t=1}^T \EE [\|\Delta_t\|^2] + \frac{\beta_1^2}{(1-\beta_1)^2} \frac{\eta^2 \eta_l^2 L}{2n^2 \epsilon} \sum_{t=1}^T \EE\bigg[\bigg\|\sum_{i=1}^m \sum_{k=0}^{K-1} \PP\{i \in \cS_t \} \nabla F_i (\xb_{t,k}^i) \bigg\|^2\bigg] \notag\\
    & \quad + \frac{\beta_1^2}{(1-\beta_1)^2} \frac{\eta^2 \eta_l^2 K T L}{2n \epsilon}\sigma_l^2,
\end{align}
where we will further apply the bound for $\EE[\|\Delta_t\|^2]$ following by Lemma \ref{lm:EDelta'}. The second term in \eqref{eq:I4'} can be bounded from \eqref{eq:i-in-St-norm-noncomp}. Therefore, summing up \eqref{eq:I1'-2}, \eqref{eq:I3'} and \eqref{eq:I2}, summing over from $t = 1$ to $T$, then adding \eqref{eq:I4'-partial}, we have
\begin{align}\label{eq:fzI-1'}
    & \EE[f(\zb_{T+1})]-f(\zb_1) = \sum_{t=1}^{T} [I'_1+I_2+I'_3+I'_4] \notag\\
    & \leq -\frac{\eta\eta_l K}{4} \sum_{t=1}^{T}\EE\bigg[ \bigg\|\frac{\nabla f(\xb_t)}{\sqrt[4]{\beta_2 \vb_{t-1} + \epsilon}} \bigg\|^2\bigg]+ \frac{5\eta\eta_l^3 K^2 L^2 T}{\sqrt{2\epsilon}} (\sigma_l^2+6K \sigma_g^2) + \frac{\sqrt{2(1-\beta_2)} \eta G}{\epsilon} \sum_{t=1}^{T}\EE[\|\Delta_t\|^2]  \notag\\
    & \quad  - \frac{\eta\eta_l}{2 K m^2} \sum_{t=1}^{T} \EE\bigg[ \bigg\|\frac{1}{\sqrt[4]{\beta_2 \vb_{t-1} + \epsilon}}\sum_{i=1}^m \sum_{k=0}^{K-1} \nabla F_i(\xb_{t,k}^i)\bigg\|^2\bigg] \notag\\
    & \quad + \frac{\beta_1}{1-\beta_1} \eta\eta_l K G^2\sum_{t=1}^{T}\EE\bigg[\Big\|\Vbh_{t-1}^{-1/2} - \Vbh_{t}^{-1/2}\Big\|_1\bigg] + \frac{\beta_1^2}{(1-\beta_1)^2} \frac{\eta^2\eta_l^2 K^2 G^2}{\sqrt{\epsilon}} \sum_{t=1}^{T}\EE\bigg[\Big\|\Vbh_{t-1}^{-1/2} - \Vbh_{t}^{-1/2}\Big\|_1\bigg] \notag\\
    & \quad + \frac{\beta_1^2}{(1-\beta_1)^2} \eta^2\eta_l^2K^2 L G^2 \sum_{t=1}^{T}\EE\bigg[\Big\|\Vbh_{t-1}^{-1/2} - \Vbh_{t}^{-1/2}\Big\|^2\bigg] + \eta^2 L \sum_{t=1}^{T} \EE \bigg[\big\|\Vbh_t^{-1/2} \Delta_{t}\big\|^2 \bigg] \notag\\
    & \quad + \frac{\eta^2 L}{2\epsilon} \sum_{t=1}^{T} \EE [\|\Delta_{t}\|^2] + \frac{\eta^2 L}{2\epsilon} \frac{\beta_1^2}{(1-\beta_1)^2} \sum_{t=1}^{T} \EE[\|\mb_t\|^2].
\end{align}
By applying Lemma \ref{lm:EDelta} into all terms containing the second moment estimate of model difference $\Delta_t$ in \eqref{eq:fzI-1'}, using the fact that $(\sqrt{\beta_2 K^2 G^2 + \epsilon})^{-1} \|\xb\| \leq(\sqrt{\beta_2 \eta_l^2 K^2 G^2 + \epsilon})^{-1}\|\xb\| \leq \big\|\frac{\xb}{\sqrt{\beta_2 \vb_t + \epsilon}}\big\| \leq \epsilon^{-1/2} \|\xb\|$, and applying Lemma \ref{lm:mE-bound-partial}, we have
\begin{align}
    & \EE[f(\zb_{T+1})]-f(\zb_1) \notag\\
    & \leq -\frac{\eta\eta_l K}{4 \sqrt{\beta_2 \eta_l^2 K^2 G^2 + \epsilon}} \sum_{t=1}^{T} \EE[\|\nabla f(\xb_t)\|^2]+ \frac{5\eta\eta_l^3 K^2 L^2 T}{\sqrt{2\epsilon}} (\sigma_l^2+6K \sigma_g^2)  \notag\\
    & \quad + \frac{\beta_1}{1-\beta_1} \frac{\eta \eta_l K G^2 d}{\sqrt{\epsilon}} + \frac{\beta_1^2}{(1-\beta_1)^2}\frac{2\eta^2 \eta_l^2 K^2 L G^2 d}{\epsilon} \notag\\
    & \quad + \bigg(\frac{3 \eta^2 L}{2} + \frac{\beta_1^2}{2(1-\beta_1)^2} \eta^2 L + \sqrt{2(1-\beta_2)} \eta G\bigg) \frac{KT\eta_l^2}{n \epsilon} \sigma_l^2 - \sum_{t=1}^{T} \EE \bigg[\bigg\|\sum_{i=1}^m \sum_{k=0}^{K-1} \nabla F_i(\xb_{t,k}^i)\bigg\|^2 \bigg] \notag\\
    & \quad \cdot \bigg[\frac{\eta\eta_l}{2 \sqrt{\beta_2 K^2 G^2 + \epsilon} K m^2} - \bigg(\frac{3 \eta^2 L}{2} + \frac{\beta_1^2}{2(1-\beta_1)^2} \eta^2 L + \sqrt{2(1-\beta_2)} \eta G\bigg) \frac{\eta_l^2 (n-1)}{mn(m-1)\epsilon} \bigg]\notag\\
    & \quad +\bigg(\frac{3 \eta^2 L}{2} + \frac{\beta_1^2}{2(1-\beta_1)^2} \eta^2 L + \sqrt{2(1-\beta_2)} \eta G\bigg) \frac{\eta_l^2 (m-n)}{mn(m-1)\epsilon} \bigg[15mK^3 L^3 \eta_l^2(\sigma_l^2 + 6K\sigma_g^2)T \notag\\
    & \quad + (90mK^4 L^2\eta_l^2 + 3mK^2) \sum_{t=1}^{T} \EE[\|\nabla f(\xb_t)\|^2] + 3mK^2 T \sigma_g^2\bigg],
\end{align}
then we have 
\begin{align}
    & \EE[f(\zb_{T+1})] - f(\zb_1) \notag\\
    & \leq -\frac{\eta\eta_l K}{4 \sqrt{\beta_2 \eta_l^2 K^2 G^2 + \epsilon}} \sum_{t=1}^{T} \EE[\|\nabla f(\xb_t)\|^2]+ \frac{5\eta\eta_l^3 K^2 L^2 T}{\sqrt{2\epsilon}} (\sigma_l^2+6K \sigma_g^2) + \frac{\beta_1}{1-\beta_1} \frac{\eta \eta_l K G^2 d}{\sqrt{\epsilon}} \notag\\
    & \quad + \frac{\beta_1^2}{(1-\beta_1)^2}\frac{2 \eta^2 \eta_l^2 K^2 L G^2 d}{\epsilon} + \bigg(\frac{3 \eta^2 L}{2} + \frac{\beta_1^2}{2(1-\beta_1)^2} \eta^2 L + \sqrt{2(1-\beta_2)} \eta G\bigg) \frac{KT\eta_l^2}{n \epsilon} \sigma_l^2\notag\\
    & \quad + \bigg(\frac{3 \eta^2 L}{2} + \frac{\beta_1^2}{2(1-\beta_1)^2} \eta^2 L + \sqrt{2(1-\beta_2)} \eta G\bigg) \frac{\eta_l^2 (m-n)}{mn(m-1)\epsilon} \bigg[15mK^3 L^3 \eta_l^2(\sigma_l^2 + 6K\sigma_g^2)T \notag\\
    & \quad + (90mK^4 L^2\eta_l^2 + 3mK^2) \sum_{t=1}^{T} \EE[\|\nabla f(\xb_t)\|^2] + 3mK^2 T \sigma_g^2\bigg].
\end{align}
By adopting additional constraint of local learning rate $\eta_l$ with the inequality $\Big(\frac{3 \eta^2 L}{2} + \frac{\beta_1^2}{2(1-\beta_1)^2} \eta^2 L + \sqrt{2(1-\beta_2)} \eta G\Big)  \frac{\eta_l^2 (n-1)}{mn(m-1)\epsilon} - \frac{\eta\eta_l}{2\sqrt{\beta_2 K^2 G^2 + \epsilon} K m^2} \leq 0$, thus we obtain the constraint $\eta_l \leq \frac{n(m-1)}{m(n-1)} \frac{\epsilon}{\sqrt{\beta_2 K^2 G^2 + \epsilon} K (3\eta L+C_1^2\eta L+2\sqrt{2(1-\beta_2)}G)}$, and we further need $\eta_l$ satisfies $\frac{\eta\eta_l K}{4 \sqrt{\beta_2 \eta_l^2 K^2 G^2 + \epsilon}} - \Big(\frac{3 \eta^2 L}{2} + \frac{\beta_1^2}{2(1-\beta_1)^2} \eta^2 L + \sqrt{2(1-\beta_2)} \eta G\Big) \frac{\eta_l^2 (m-n)}{mn(m-1)\epsilon} (90mK^4L^2\eta_l^2 + 3mK^2) \geq \frac{\eta\eta_l K}{8 \sqrt{\beta_2 \eta_l^2 K^2 G^2 + \epsilon}}$. Hence we have the following condition on local learning rate $\eta_l$,
\begin{align}
    \eta_l \leq \frac{n(m-1)\epsilon}{48m (n-1)} \bigg[K \sqrt{\beta_2 K^2 G^2 + \epsilon} \cdot \bigg(\frac{3 \eta L}{2} + \frac{\beta_1^2}{2(1-\beta_1)^2} \eta L + \sqrt{2(1-\beta_2)} G\bigg) \bigg]^{-1},
\end{align}
then we have
\begin{align}\label{eq:fzI-3'}
    & \frac{\eta\eta_l K}{8 \sqrt{\beta_2 \eta_l^2 K^2 G^2 + \epsilon} \cdot T} \sum_{i=1}^T \EE[\|\nabla f(\xb_t)\|^2] \notag\\
    & \leq \frac{f(\zb_0)-\EE [f(\zb_T)]}{T} + \frac{C_1 \eta \eta_l K G^2 d}{T \sqrt{\epsilon}} + \frac{2  C_1^2  \eta^2 \eta_l^2 K^2 L G^2 d}{T \epsilon} \notag\\
    & + \frac{5\eta\eta_l^3 K^2 L^2}{\sqrt{2\epsilon}} (\sigma_l^2+6K \sigma_g^2) + \bigg(\frac{3 \eta^2 L}{2} + \frac{\beta_1^2}{2(1-\beta_1)^2} \eta^2 L + \sqrt{2(1-\beta_2)}\eta G\bigg) \frac{K\eta_l^2}{n\epsilon} \sigma_l^2 \notag\\
    & + \bigg(\frac{3 \eta^2 L}{2} + \frac{\beta_1^2}{2(1-\beta_1)^2} \eta^2 L + \sqrt{2(1-\beta_2)}\eta G\bigg) \frac{\eta_l^2(m-n)}{mn(m-1)\epsilon}[ 15 m K^3 L^2 \eta_l^2(\sigma_l^2+ 6K \sigma_g^2) + 3m K^2\sigma_g^2].
\end{align}
Therefore, 
\begin{align}
    \min \EE[\|\nabla f(\xb_t)\|^2] & \leq 8 \sqrt{\beta_2 \eta_l^2 K^2 G^2 + \epsilon} \bigg[\frac{f_0-f_*}{\eta \eta_l K T} + \frac{\Psi}{T} + \Phi \bigg],
\end{align}
where $\Psi = \frac{C_1 G^2 d}{\sqrt{\epsilon}}+ \frac{2C_1^2 \eta\eta_l K L G^2 d}{\epsilon}, \Phi = \frac{5 \eta_l^2 K L^2}{\sqrt{2\epsilon}} (\sigma_l^2+6K \sigma_g^2)+ [(3+C_1^2) \eta L + 2\sqrt{2(1-\beta_2)} G] \frac{\eta_l}{2n\epsilon} \sigma_l^2 + [(3+C_1^2)\eta L + 2\sqrt{2(1-\beta_2)} G] \frac{\eta_l(m-n)}{2n(m-1)\epsilon} [15 K^2L^2 \eta_l^2 (\sigma_l^2 + 6K\sigma_g^2) + 3K\sigma_g^2]$ and $C_1 = \frac{\beta_1}{1-\beta_1}$.

\subsection{Proof of Corollary \ref{cor:fedams-part}}

If we choose $\eta_l= \Theta( \frac{1}{\sqrt{T} K})$ and $\eta = \Theta(\sqrt{Kn})$, we have $\min_{t \in[T]} \EE[\|\nabla f(\xb_t)\|^2] = \cO(\frac{\sqrt{K}}{\sqrt{Tn}})$.


\section{Proof of Theorems in Section \ref{subsec:cfedams} and Partial Participation Setting for FedCAMS}\label{appendix:fedcams}
\subsection{Compression Dissimilarity}\label{appendix:fedcams-assumption}

\begin{assumption}[Rewrite Assumption \ref{as:compressor-adapt}] \label{as:compressor-adapt-appendix} For the biased compressor satisfies \ref{as:compressor}, there exists a constant $\xi$ such that, for each iteration $t \geq 0$, we have 
\begin{align*}
    & \bigg\|\cC\bigg( \frac{1}{m} \sum_{i=1}^m [\Delta_t^i + \eb_t^i] \bigg) - \frac{1}{m} \sum_{i=1}^m \cC(\Delta_t^i + \eb_t^i)\bigg\| \leq \gamma \bigg\| \frac{1}{m} \sum_{i=1}^m \Delta_t^i \bigg\|.
\end{align*}

\end{assumption}
\vspace{-3pt}
Note that Assumption \ref{as:compressor-adapt} implies that the overlap between ``the average of compression'' and ``the compression of average'' leads to a bounded loss of information at each step. A similar assumption has been adopted in \citet{alistarh2018convergence}, and \citet{haddadpour2021federated} assumed a constant bound for the unbiased compressor (such as quantization compressor) to measure the gap between these terms. We also empirically justify this assumption in practice to validate it on training the CIFAR-10 dataset with ResNet-18 and ConvMixer-256-8 models. Figure \ref{fig:compressor} shows that the $\gamma$ value during training varies over time but maintains a bounded $\gamma$. 

\begin{figure*}[ht!]
    \centering
    
    \subfigure[Scaled sign compressor]{\includegraphics[width=0.36\textwidth]{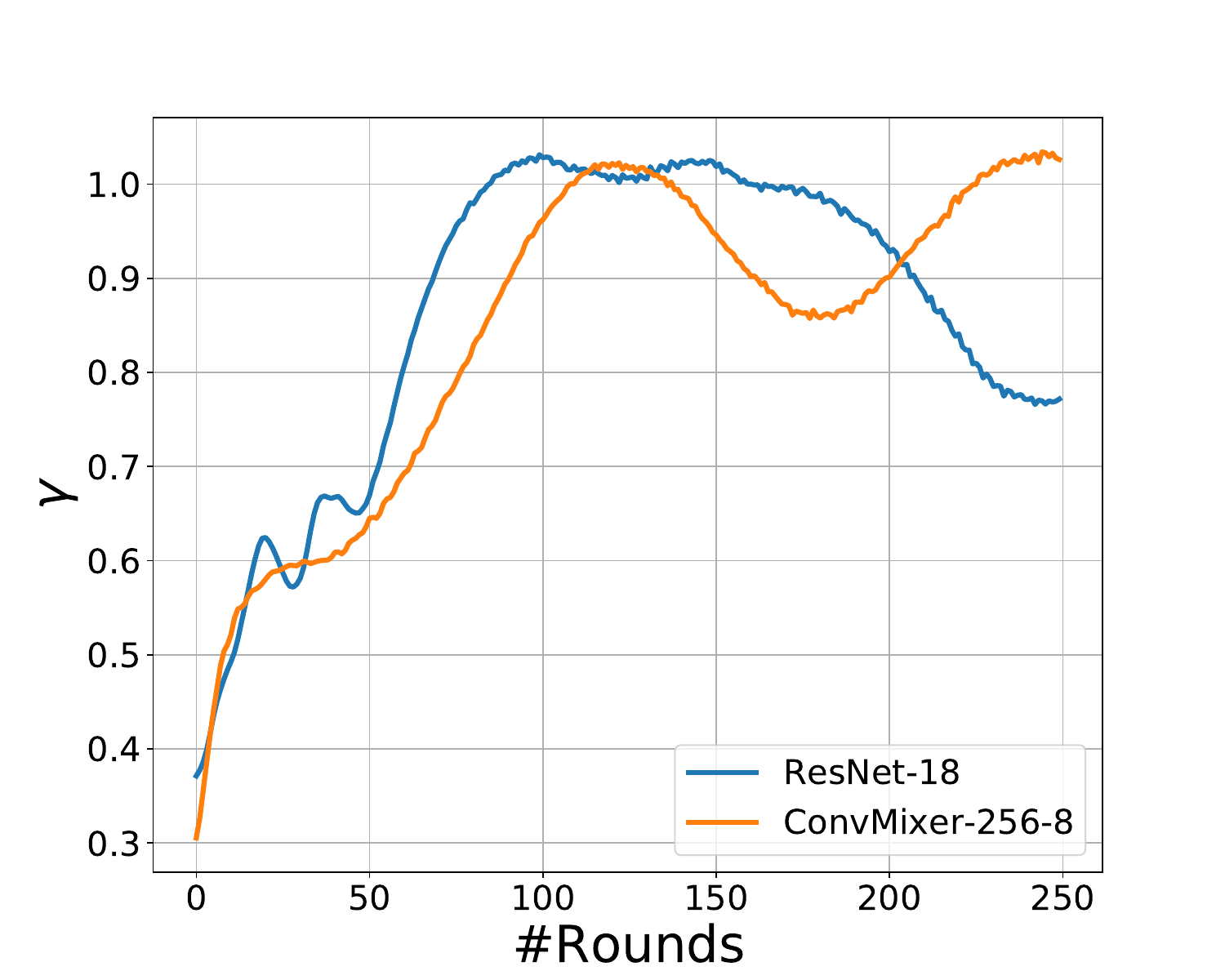}}
    \subfigure[Top-k with $r = 1/64$ compressor]{\includegraphics[width=0.36\textwidth]{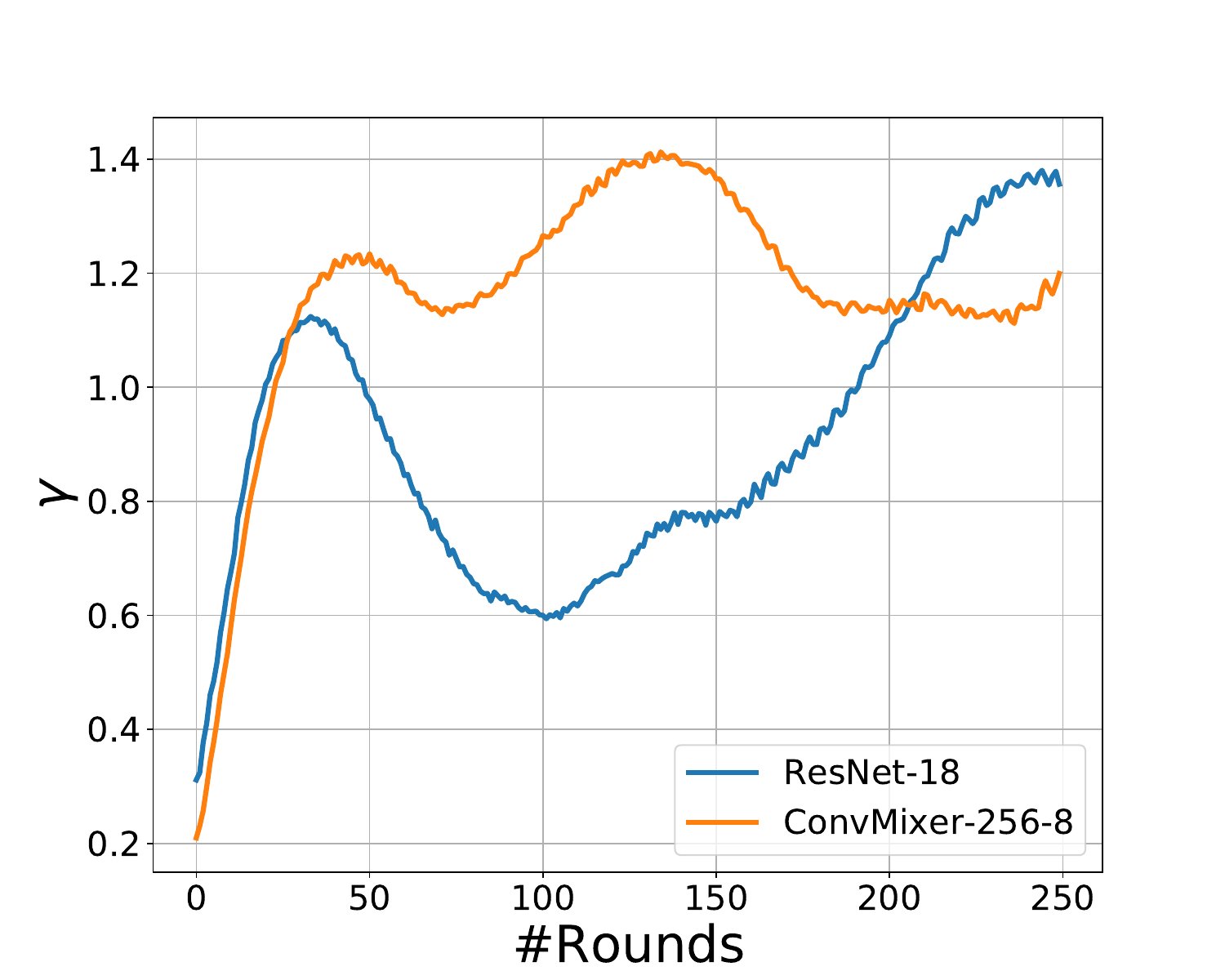}}
    \caption{Empirical justification for Assumption \ref{as:compressor-adapt} on various models and compressors training on the CIFAR-10 dataset. }
    \label{fig:compressor}
\end{figure*}

\subsection{Proof of Theorem \ref{thm:full-cfedams}}

\textit{Notations and equations:} From the update rule of Algorithm \ref{alg:compfedams}, we have $\eb_1 = 0$, $\eb_t= \frac{1}{m} \sum_{i=1}^m \eb_t^i$ and $\mb_{t}=(1-\beta_1) \sum_{i=1}^{t} \beta_{1}^{t-i} \hat{\Delta}_{i}$. Denote a global uncompressed difference $\Delta_t= \frac{1}{m} \sum_{i=1}^m \Delta_i^t$. Denote a virtual momentum sequence: $\mb_{t}^{\prime}=\beta_{1} \mb_{t-1}^{\prime}+(1-\beta_1) \Delta_t$, hence we have $\mb_{t}^{\prime}=(1-\beta_1) \sum_{i=1}^{t} \beta_{1}^{t-i} \Delta_{i}$. By the aforementioned definition and notation, we have 
\begin{align}  \label{eq:Dhat-D}
    \hat{\Delta}_t -\Delta_t & = \frac{1}{m} \sum_{i=1}^m ( \hat{\Delta}_t^i-\Delta_t^i)= \frac{1}{m} \sum_{i=1}^m (\eb_t^i-\eb_{t+1}^i) = \eb_t-\eb_{t+1}.
\end{align}
Denote the weighted averaging error sequence $\bGamma_t = (1-\beta_1)\sum_{\tau=1}^{t} \beta_1^{t-\tau} \eb_\tau$, with the imput $\eb_1 = 0$, we obtain the relation between $\bGamma_t$ and $\mb_t$ as follows
\begin{align}\label{eq:mGamma}
    \mb_t - \mb'_t = (1-\beta_1)\sum_{\tau=1}^t \beta_1^{t-\tau}(\hat{\Delta}_\tau - \Delta_\tau) = (1-\beta_1)\sum_{\tau=1}^t \beta_1^{t-\tau}(\eb_\tau - \eb_{\tau+1}) = \bGamma_t - \bGamma_{t+1},
\end{align}
where the last step holds due to $\bGamma_{t+1} = (1-\beta_1)\sum_{\tau=1}^{t+1} \beta_1^{t-\tau} \eb_{\tau+1} = (1-\beta_1)\sum_{\tau=1}^{t} \beta_1^{t-\tau} \eb_{\tau+1} + \beta_1^t \eb_1$. 

Similar to previous works studied adaptive methods \citep{chen2020closing,zhou2018convergence,chen2018convergence}, we introduce a Lyapunov sequence $\zb_t$: assume $\xb_0 = \xb_1$, for each $t \geq 1$, we have
\begin{align*}
    \yb_t = \xb_t + \frac{\beta_1}{1-\beta_1} (\xb_t - \xb_{t-1})  = \frac{1}{1-\beta_1} \xb_t- \frac{\beta_1}{1-\beta_1}\xb_{t-1}.
\end{align*}
Therefore, by the update rule of $\xb_t$, we have
\begin{align}
    \yb_{t+1} & = \xb_{t+1} + \eta \frac{\beta_1}{1-\beta_1}\Vbh_t^{-1/2} \mb_{t} \notag\\
    & = \xb_{t+1} + \eta \frac{\beta_1}{1-\beta_1}\Vbh_t^{-1/2} [\mb'_{t} + \bGamma_t - \bGamma_{t+1}] \notag\\
    & = \xb_{t+1} + \eta \frac{\beta_1}{1-\beta_1}\Vbh_t^{-1/2} \mb'_{t} + \eta \frac{\beta_1}{1-\beta_1}\Vbh_t^{-1/2} \bigg[\frac{\bGamma_{t+1} - (1-\beta_1)\eb_{t+1}}{\beta_1}-\bGamma_{t+1} \bigg]\notag\\
    & = \xb_{t  +1} + \eta \frac{\beta_1}{1-\beta_1} \Vbh_t^{-1/2} \mb'_t + \eta \Vbh_t^{-1/2} \bGamma_{t+1}- \eta \Vbh_t^{-1/2}\eb_{t+1}.
\end{align}
The third equation holds due to the fact that $\bGamma_{t+1} = \beta_1\bGamma_t + (1-\beta_1)\eb_{t+1}$.
We then introduce a new sequence based on the previous Lyapunov sequence $\yb_t$ as follows
\begin{align}\label{eq:lyapunov-z}
    \zb_{t+1} = \yb_{t+1} + \eta \Vbh_t^{-1/2} \eb_{t+1} = \xb_{t  +1} + \eta \frac{\beta_1}{1-\beta_1} \Vbh_t^{-1/2} \mb'_t + \eta \Vbh_t^{-1/2} \bGamma_{t+1}. 
\end{align}
The sequence difference $\zb_{t+1}- \zb_t$ can be represented by
\begin{align}
    \zb_{t+1} - \zb_t & = \xb_{t+1} - \xb_t + \eta\frac{\beta_1}{1-\beta_1} \Vbh_{t}^{-1/2} \mb'_t - \eta \frac{\beta_1}{1-\beta_1}\Vbh_{t-1}^{-1/2} \mb'_{t-1} + \eta\Vbh_{t}^{-1/2} \bGamma_{t+1} - \eta\Vbh_{t-1}^{-1/2} \bGamma_{t} \notag\\
    & = \eta \Vbh_t^{-1/2} \mb_t + \eta\Vbh_{t}^{-1/2} \bGamma_{t+1} + \eta\frac{\beta_1}{1-\beta_1} \Vbh_{t}^{-1/2} \mb'_t - \eta \frac{\beta_1}{1-\beta_1}\Vbh_{t-1}^{-1/2} \mb'_{t-1}- \eta\Vbh_{t-1}^{-1/2} \bGamma_{t},
\end{align}
where the second equation follows the update rule of $\xb_{t+1}$. Following \eqref{eq:mGamma}, then combining likely terms and applying the definition of $\mb'_t$, we have 
\begin{align}
    \zb_{t+1} - \zb_t & = \eta \Vbh_t^{-1/2} \mb'_t +\eta \Vbh_t^{-1/2} \bGamma_t + \eta\frac{\beta_1}{1-\beta_1} \Vbh_{t}^{-1/2} \mb'_t - \eta \frac{\beta_1}{1-\beta_1}\Vbh_{t-1}^{-1/2} \mb'_{t-1} - \eta\Vbh_{t-1}^{-1/2} \bGamma_{t} \notag\\
    & = \eta\frac{1}{1-\beta_1} \Vbh_{t}^{-1/2} \mb'_t - \eta \frac{\beta_1}{1-\beta_1}\Vbh_{t-1}^{-1/2} \mb'_{t-1} +\eta \Vbh_t^{-1/2} \bGamma_t - \eta\Vbh_{t-1}^{-1/2} \bGamma_{t} \notag\\
    & = \eta\frac{1}{1-\beta_1} \Vbh_{t}^{-1/2} [\beta_1\mb'_{t-1} + (1-\beta_1)\Delta_t] - \eta \frac{\beta_1}{1-\beta_1}\Vbh_{t-1}^{-1/2} \mb'_{t-1} +\eta \Vbh_t^{-1/2} \bGamma_t - \eta\Vbh_{t-1}^{-1/2} \bGamma_{t} \notag\\
    & = \eta \Vbh_{t}^{-1/2} \Delta_t - \eta \frac{\beta_1}{1-\beta_1}\bigg(\Vbh_{t-1}^{-1/2}-\Vbh_{t}^{-1/2}\bigg) \mb'_{t-1} - \eta \bigg(\Vbh_{t-1}^{-1/2} - \Vbh_t^{-1/2}\bigg) \bGamma_t.
\end{align}
Therefore, we obtain a helpful Lyapunov sequence for our proof of FedCAMS. The proof of FedCAMS in full participation settings has a similar outline with the proof of FedAMS. By Assumption \ref{as:smooth}, we have
\begin{align}\label{eq:f-Lsmooth}
& \EE[f(\zb_{t+1})]-f(\zb_{t}) \notag\\
& \leq \EE[\langle \nabla f(\zb_t), \zb_{t+1}-\zb_t \rangle] + \frac{L}{2} \EE[\|\zb_{t+1}-\zb_t\|^2] \notag\\
& \leq \EE\bigg[\bigg\langle\nabla f(\zb_{t}), \eta \Vbh_{t}^{-1/2}  \Delta_{t}\bigg\rangle\bigg]\notag\\
& \quad -\EE\bigg[\bigg\langle\nabla f(\zb_{t}), \eta \frac{\beta_{1}}{1-\beta_{1}} \bigg(\Vbh_{t-1}^{-1/2} - \Vbh_{t}^{-1/2}\bigg) \mb_{t-1}^{\prime}+ \eta \bigg(\Vbh_{t-1}^{-1/2} - \Vbh_{t}^{-1/2}\bigg) \bGamma_{t}\bigg\rangle\bigg] \notag\\
& \quad +\frac{\eta^{2} L}{2} \EE\bigg[\bigg\|\Vbh_{t}^{-1/2}  \Delta_{t}-\frac{\beta_{1}}{1-\beta_{1}} \bigg(\Vbh_{t-1}^{-1/2} - \Vbh_{t}^{-1/2}\bigg) \mb_{t-1}^{\prime}-\bigg(\Vbh_{t-1}^{-1/2} - \Vbh_{t}^{-1/2}\bigg) \bGamma_{t}\bigg\|^{2}\bigg] \notag\\
& = \underbrace{\EE\bigg[\bigg\langle\nabla f(\xb_{t}), \eta \Vbh_{t}^{-1/2}  \Delta_{t}\bigg\rangle\bigg]}_{T_1} \underbrace{-\eta \EE\bigg[\bigg\langle\nabla f(\zb_{t}), \frac{\beta_{1}}{1-\beta_{1}} \bigg(\Vbh_{t-1}^{-1/2} - \Vbh_{t}^{-1/2}\bigg) \mb_{t-1}^{\prime}+\bigg(\Vbh_{t-1}^{-1/2} - \Vbh_{t}^{-1/2}\bigg) \bGamma_{t}\bigg\rangle\bigg]}_{T_2}\notag\\
& \quad + \underbrace{\frac{\eta^{2} L}{2} \EE\bigg[\bigg\|\Vbh_{t}^{-1/2}  \Delta_{t}-\frac{\beta_{1}}{1-\beta_{1}} \bigg(\Vbh_{t-1}^{-1/2} - \Vbh_{t}^{-1/2}\bigg) \mb_{t-1}^{\prime}-\bigg(\Vbh_{t-1}^{-1/2} - \Vbh_{t}^{-1/2}\bigg) \bGamma_{t}\bigg\|^{2}\bigg]}_{T_3} \notag\\
& \quad +  \underbrace{\EE\bigg[\bigg\langle\nabla f(\zb_{t})-\nabla f(\xb_t), \eta \Vbh_{t}^{-1/2} \Delta_{t}\bigg\rangle\bigg]}_{T_4},
\end{align}
here we recall the notation $\Vbh_t = \diag(\hat{\vb}_t) = \diag(\max(\hat{\vb}_{t-1}, \vb_t, \epsilon))$.

\textbf{Bounding $T_1$:}We have
\begin{align}\label{eq:T1-1}
    T_1 & = \EE\bigg[\bigg\langle\nabla f(\xb_{t}), \eta\frac{ \Delta_{t}}{\sqrt{\hat{\vb}_{t}}} \bigg\rangle\bigg] \notag\\
    & \leq \eta\EE\bigg[\bigg\langle\nabla f(\xb_{t}), \frac{\sqrt{2} \cdot \Delta_{t}}{\sqrt{\vb_{t}+\epsilon}} \bigg\rangle\bigg] \notag\\
    & = \sqrt{2} \eta\EE\bigg[\bigg\langle\nabla f(\xb_{t}), \frac{\Delta_{t}}{\sqrt{\beta_2 \vb_{t-1}+\epsilon}}\bigg\rangle\bigg] + \sqrt{2} \eta\EE\bigg[\bigg\langle\nabla f(\xb_{t}),\frac{ \Delta_{t}}{\sqrt{\vb_{t}+\epsilon}} - \frac{\Delta_{t}}{\sqrt{\beta_2 \vb_{t-1}+\epsilon}}\bigg\rangle\bigg],
\end{align}
where the first inequality follows by the fact that $\hat{\vb}_t \geq \frac{\vb_t + \epsilon}{2} $. 
For the second term in \eqref{eq:T1-1}, we have 
\begin{align} \label{eq:T1-1-2}
    & \sqrt{2}\cdot \eta\EE\bigg[\bigg\langle\nabla f(\xb_{t}),\frac{\Delta_{t}}{\sqrt{\vb_{t}+\epsilon}} - \frac{\Delta_{t}}{\sqrt{\beta_2 \vb_{t-1}+\epsilon}}\bigg\rangle\bigg] \notag\\
    & \leq \sqrt{2}\cdot\eta \cdot \|\nabla f(\xb_t)\| \EE\bigg[\bigg\|\frac{1}{\sqrt{\vb_{t}+\epsilon}}- \frac{1}{\sqrt{\beta_2 \vb_{t-1}+\epsilon}}\bigg\| \cdot \|\Delta_t\| \bigg] \notag\\
    & \leq \frac{\eta \sqrt{2(1-\beta_2)} G}{\epsilon} \EE[\|\Delta_t\|^2],
\end{align}
where the second inequality follows from Lemma~\ref{lm:Wt} and \ref{lm:gmv-bound}, and we will further apply the bound for $\EE[\|\Delta_t\|^2]$ by applying Lemma \ref{lm:EDelta}. For the first term in \eqref{eq:T1-1}, we have
\begin{align} \label{eq:T1-2}
    & \sqrt{2}\cdot\eta\EE\bigg[\bigg\langle\nabla f(\xb_{t}), \frac{\Delta_{t}}{\sqrt{\beta_2 \vb_{t-1}+\epsilon}}\bigg\rangle\bigg] \notag\\
    & = \sqrt{2}\cdot\eta\EE\bigg[\bigg\langle \frac{\nabla f(\xb_t)}{\sqrt{\beta_2 \vb_{t-1}+\epsilon}} , \Delta_t + \eta_l K \nabla f(\xb_t) - \eta_l K \nabla f(\xb_t) \bigg\rangle\bigg] \notag\\
    & = - \sqrt{2} \eta\eta_l K\EE\bigg[ \bigg\|\frac{\nabla f(\xb_t)}{\sqrt[4]{\beta_2 \vb_{t-1} + \epsilon}} \bigg\|^2\bigg] + \sqrt{2} \eta\EE\bigg[\bigg\langle \frac{\nabla f(\xb_t)}{\sqrt{\beta_2 \vb_{t-1}+\epsilon}} , \Delta_t + \eta_l K \nabla f(\xb_t) \bigg\rangle\bigg] \notag\\
    & = - \sqrt{2}\eta\eta_l K\EE\bigg[ \bigg\|\frac{\nabla f(\xb_t)}{\sqrt[4]{\beta_2 \vb_{t-1} + \epsilon}} \bigg\|^2\bigg] + \sqrt{2}\eta \bigg\langle \frac{\nabla f(\xb_t)}{\sqrt{\beta_2 \vb_{t-1}+\epsilon}}, \EE \bigg[-\frac{\eta_l }{m}\sum_{i=1}^m \sum_{k=0}^{K-1} \gb_{t,k}^i + \eta_l K \nabla f(\xb_t)\bigg] \bigg\rangle \notag\\
    & = - \sqrt{2}\eta\eta_l K\EE\bigg[ \bigg\|\frac{\nabla f(\xb_t)}{\sqrt[4]{\beta_2 \vb_{t-1} + \epsilon}} \bigg\|^2\bigg] + \sqrt{2}\eta \bigg\langle \frac{\nabla f(\xb_t)}{\sqrt{\beta_2 \vb_{t-1}+\epsilon}}, \EE \bigg[-\frac{\eta_l}{m}\sum_{i=1}^m \sum_{k=0}^{K-1} \gb_{t,k}^i + \frac{\eta_l K}{m}\sum_{i=1}^m \nabla F_i(\xb_t)\bigg] \bigg\rangle.
\end{align}
For the last term in \eqref{eq:T1-2}, we have 
\begin{align}
    & \sqrt{2} \eta \bigg\langle \frac{\nabla f(\xb_t)}{\sqrt{\beta_2 \vb_{t-1}+\epsilon}}, \EE \bigg[-\frac{\eta_l}{m}\sum_{i=1}^m \sum_{k=0}^{K-1} \gb_{t,k}^i + \frac{\eta_l K}{m}\sum_{i=1}^m \nabla F_i(\xb_t)\bigg] \bigg\rangle \notag\\
    & = \sqrt{2} \eta \bigg\langle \frac{\sqrt{\eta_l K}}{\sqrt[4]{\beta_2 \vb_{t-1} + \epsilon}} \nabla f(\xb_t),  - \frac{\sqrt{\eta_l K}}{K m} \frac{1}{\sqrt[4]{\beta_2 \vb_{t-1} + \epsilon}}\EE \bigg[\sum_{i=1}^m \sum_{k=0}^{K-1} (\nabla F_i(\xb_{t,k}^i) - \nabla F_i(\xb_t))\bigg] \bigg\rangle \notag\\
    & = \frac{\sqrt{2}\eta \eta_l K}{2} \bigg\|\frac{\nabla f(\xb_t)}{\sqrt[4]{\beta_2 \vb_{t-1} + \epsilon}} \bigg\|^2 + \frac{\sqrt{2} \eta\eta_l}{2 K m^2} \EE \bigg[ \bigg\|\frac{1}{\sqrt[4]{\beta_2 \vb_{t-1} + \epsilon}}\sum_{i=1}^m \sum_{k=0}^{K-1} (\nabla F_i(\xb_{t,k}^i) - \nabla F_i(\xb_t))\bigg\|^2 \bigg]\notag\\
    & \quad - \frac{\sqrt{2} \eta\eta_l}{2 K m^2} \EE \bigg[ \bigg\|\frac{1}{\sqrt[4]{\beta_2 \vb_{t-1} + \epsilon}}\sum_{i=1}^m \sum_{k=0}^{K-1} \nabla F_i(\xb_{t,k}^i)\bigg\|^2 \bigg] \notag\\
    & \leq \frac{\sqrt{2}\eta \eta_l K}{2} \bigg\|\frac{\nabla f(\xb_t)}{\sqrt[4]{\beta_2 \vb_{t-1} + \epsilon}} \bigg\|^2 + \frac{\sqrt{2}\eta\eta_l}{2m}\sum_{i=1}^m \sum_{k=0}^{K-1}\EE \bigg[ \bigg\| \frac{\nabla F_i(\xb_{t,k}^i) - \nabla F_i(\xb_t)}{\sqrt[4]{\beta_2 \vb_{t-1} + \epsilon}} \bigg\|^2\bigg] \notag\\
    & \quad - \frac{\sqrt{2}\eta\eta_l}{2 K m^2} \EE \bigg[ \bigg\|\frac{1}{\sqrt[4]{\beta_2 \vb_{t-1} + \epsilon}} \sum_{i=1}^m \sum_{k=0}^{K-1} \nabla F_i(\xb_{t,k}^i)\bigg\|^2\bigg],
\end{align}
where the second equation follows from $\langle \xb,\yb\rangle = \frac{1}{2}[\|\xb\|^2 + \|\yb\|^2 - \|\xb-\yb\|^2]$, and the inequality holds by applying Cauchy-Schwarz inequality. Then by Assumption~\ref{as:smooth}, we have 
\begin{align}
    & \sqrt{2} \eta \bigg\langle \frac{\nabla f(\xb_t)}{\sqrt{\beta_2 \vb_{t-1}+\epsilon}}, \EE \bigg[-\frac{\eta_l}{m}\sum_{i=1}^m \sum_{k=0}^{K-1} \gb_{t,k}^i + \frac{\eta_l K}{m}\sum_{i=1}^m \nabla F_i(\xb_t)\bigg] \bigg\rangle \notag\\
    & \leq \frac{\sqrt{2} \eta \eta_l K}{2} \bigg\|\frac{\nabla f(\xb_t)}{\sqrt[4]{\beta_2 \vb_{t-1} + \epsilon}} \bigg\|^2 + \frac{\sqrt{2}\eta\eta_l L^2}{2m}\sum_{i=1}^m \sum_{k=0}^{K-1}\EE \bigg[\bigg\| \frac{\xb_{t,k}^i - \xb_t}{\sqrt[4]{\beta_2 \vb_{t-1} + \epsilon}} \bigg\|^2\bigg] \notag\\
    & \quad - \frac{\sqrt{2}\eta\eta_l}{2 K m^2} \EE\bigg[ \bigg\|\frac{1}{\sqrt[4]{\beta_2 \vb_{t-1} + \epsilon}}\sum_{i=1}^m \sum_{k=0}^{K-1} \nabla F_i(\xb_{t,k}^i)\bigg\|^2\bigg] \notag\\ 
    & \leq \frac{3\sqrt{2}\eta \eta_l K}{4} \bigg\|\frac{\nabla f(\xb_t)}{\sqrt[4]{\beta_2 \vb_{t-1} + \epsilon}} \bigg\|^2 + \frac{5\eta\eta_l^3 K^2 L^2}{\sqrt{2\epsilon}} (\sigma_l^2+6K \sigma_g^2) \notag\\
    & \quad - \frac{\sqrt{2}\eta\eta_l}{2 K m^2} \EE\bigg[ \bigg\|\frac{1}{\sqrt[4]{\beta_2 \vb_{t-1} + \epsilon}}\sum_{i=1}^m \sum_{k=0}^{K-1} \nabla F_i(\xb_{t,k}^i)\bigg\|^2\bigg],
\end{align}
where the last inequality holds by applying Lemma~\ref{lm:xikt-xt} and the constraint of local learning rate $\eta_l \leq \frac{1}{8KL}$. Then we have
\begin{align}\label{eq:T1}
    T_1 & \leq -\frac{\sqrt{2} \cdot\eta\eta_l K}{4} \EE\bigg[ \bigg\|\frac{\nabla f(\xb_t)}{\sqrt[4]{\beta_2 \vb_{t-1} + \epsilon}} \bigg\|^2\bigg] + \frac{5\eta\eta_l^3 K^2 L^2}{\sqrt{2 \epsilon}} (\sigma_l^2+6K \sigma_g^2) \notag\\
    & \quad - \frac{\sqrt{2} \cdot\eta\eta_l}{2 K m^2} \EE \bigg[ \bigg\|\frac{1}{\sqrt[4]{\beta_2 \vb_{t-1} + \epsilon}}\sum_{i=1}^m \sum_{k=0}^{K-1} \nabla F_i(\xb_{t,k}^i)\bigg\|^2\bigg] + \frac{\eta \sqrt{2(1-\beta_2)} G}{\epsilon} \EE[\|\Delta_t\|^2] \notag\\
    & \leq -\frac{\eta\eta_l K}{4} \EE \bigg[\bigg\|\frac{\nabla f(\xb_t)}{\sqrt[4]{\beta_2 \vb_{t-1} + \epsilon}} \bigg\|^2\bigg] + \frac{5\eta\eta_l^3 K^2 L^2}{\sqrt{2\epsilon}} (\sigma_l^2+6K \sigma_g^2) \notag\\
    & \quad - \frac{\eta\eta_l}{2 K m^2} \EE \bigg[ \bigg\|\frac{1}{\sqrt[4]{\beta_2 \vb_{t-1} + \epsilon}}\sum_{i=1}^m \sum_{k=0}^{K-1} \nabla F_i(\xb_{t,k}^i)\bigg\|^2\bigg] + \frac{\eta \sqrt{2(1-\beta_2)} G}{\epsilon} \EE[\|\Delta_t\|^2].
\end{align}

\textbf{Bounding $T_2$:}
The bound for $T_2$ mainly follows by the update rule and definition of virtual sequence $\zb_t$.
\begin{align}\label{eq:T2}
    T_2 & = -\eta \EE\bigg[\bigg\langle\nabla f(\zb_{t}), \frac{\beta_{1}}{1-\beta_{1}} \bigg(\Vbh_{t-1}^{-1/2} - \Vbh_{t}^{-1/2}\bigg) \mb_{t-1}^{\prime}+\bigg(\Vbh_{t-1}^{-1/2} - \Vbh_{t}^{-1/2}\bigg) \bGamma_{t}\bigg\rangle\bigg] \notag\\
    & = \eta \EE\bigg[\bigg\langle -\nabla f(\xb_{t}) + \nabla f(\xb_{t}) - \nabla f(\zb_{t}), \bigg(\Vbh_{t-1}^{-1/2} - \Vbh_{t}^{-1/2}\bigg) \bigg(\frac{\beta_{1}}{1-\beta_{1}} \mb_{t-1}^{\prime}+\bGamma_{t}\bigg) \bigg\rangle\bigg] \notag\\
    & \leq \eta \EE\bigg[\|\nabla f(\xb_{t})\|\bigg
    \|\bigg(\Vbh_{t-1}^{-1/2} - \Vbh_{t}^{-1/2}\bigg) \bigg(\frac{\beta_{1}}{1-\beta_{1}} \mb_{t-1}^{\prime}+\bGamma_{t}\bigg)\bigg\|\bigg] \notag\\
    & \quad + \eta^2 L \EE \bigg[\bigg\| \Vbh_{t-1}^{-1/2}\bigg(\frac{\beta_1}{1-\beta_1} \mb_{t-1}^{\prime}+ \bGamma_{t}\bigg)\bigg\| \bigg\|\bigg(\Vbh_{t-1}^{-1/2} - \Vbh_{t}^{-1/2}\bigg) \bigg(\frac{\beta_{1}}{1-\beta_{1}} \mb_{t-1}^{\prime}+\bGamma_{t}\bigg)\bigg\|\bigg] \notag \\
    & \leq \eta C_1 \eta_l K G^2 \EE\bigg[\Big\| \Vbh_{t-1}^{-1/2} - \Vbh_{t}^{-1/2}\Big\|_1\bigg] + \eta^2 C_1^2 L \eta_l^2 K^2 G^2 \epsilon^{-1/2} \EE\bigg[\Big\|\Vbh_{t-1}^{-1/2} - \Vbh_{t}^{-1/2}\Big\|_1\bigg],
\end{align}
where the last inequality holds by Lemma \ref{lm:gmv-bound}, here $C_1= \frac{\beta_1}{1-\beta_1}+ \frac{2q}{1-q^2}$.

\textbf{Bounding $T_3$:} It can be bounded as follows:
\begin{align}\label{eq:T3}
    T_3 & = \frac{\eta^2 L }{2} \EE\bigg[ \bigg\|\Vbh_t^{-1/2} \Delta_{t} + \frac{\beta_{1}}{1-\beta_{1}} \bigg(\Vbh_{t-1}^{-1/2} - \Vbh_{t}^{-1/2}\bigg) \mb_{t-1}^{\prime}+\bigg(\Vbh_{t-1}^{-1/2} - \Vbh_{t}^{-1/2}\bigg) \bGamma_{t}\bigg\|^2\bigg] \notag\\
    & \leq \eta^2 L \EE\bigg[ \big\|\Vbh_{t}^{-1/2}  \Delta_{t}\big\|^2\bigg]+ \eta^2 L \EE\bigg[ \bigg\|\frac{\beta_{1}}{1-\beta_{1}} \bigg(\Vbh_{t-1}^{-1/2} - \Vbh_{t}^{-1/2}\bigg) \mb_{t-1}^{\prime}+\bigg(\Vbh_{t-1}^{-1/2} - \Vbh_{t}^{-1/2}\bigg) \bGamma_{t}\bigg\|^2\bigg] \notag\\
    & \leq \eta^2 L \EE  \bigg[\big\|\Vbh_{t}^{-1/2}  \Delta_{t}\big\|^2 \bigg]+ \eta^2 L C_1^2 \eta_l^2 K^2 G^2 \EE\bigg[\Big\|\Vbh_{t-1}^{-1/2} - \Vbh_{t}^{-1/2}\Big\|^2\bigg],
\end{align}
where the first inequality follows by Cauchy-Schwarz inequality, and the second one follows by Lemma \ref{lm:gmv-bound}, here $C_1= \frac{\beta_1}{1-\beta_1}+ \frac{2q}{1-q^2}$. 

\textbf{Bounding $T_4$: }
\begin{align}
    T_4 & = \EE\bigg[\bigg\langle\nabla f(\zb_{t})-\nabla f(\xb_t), \eta \Vbh_{t}^{-1/2}  \Delta_{t}\bigg\rangle\bigg] \notag\\
    & \leq \EE\bigg[\|\nabla f(\zb_{t})-\nabla f(\xb_t)\|\big\|\eta \Vbh_{t}^{-1/2}  \Delta_{t}\big\|\bigg] \notag\\
    & \leq L \EE\bigg[\|\zb_{t}-\xb_t\|\big\|\eta \Vbh_{t}^{-1/2}  \Delta_{t}\big\|\bigg] \notag\\
    & \leq \frac{\eta^2 L}{2} \EE \bigg[\big\|\Vbh_{t}^{-1/2}  \Delta_{t}\big\|^2\bigg] + \frac{\eta^2 L}{2} \EE\bigg[\bigg\|\frac{\beta_1}{1-\beta_1} \Vbh_{t-1}^{-1/2}\mb_{t-1}^{\prime} + \Vbh_{t-1}^{-1/2}\bGamma_t \bigg\|^2 \bigg],\notag
\end{align}
where the first inequality holds by the fact of $\langle \ab, \bb \rangle \leq \|\ab\|\|\bb\|$, the second one follows from Assumption \ref{as:smooth} and the third one holds by the definition of virtual sequence $\zb_t$ and the fact of $\|\ab\|\|\bb\| \leq \frac{1}{2}\|\ab\|^2 + \frac{1}{2}\|\bb\|^2$. Then summing $T_4$ over $t=1,\cdots, T$, we have
\begin{align}\label{eq:T4-1}
    \sum_{t=1}^T T_4 & \leq \frac{\eta^2 L}{2} \sum_{t=1}^T \EE \bigg[\big\|\Vbh_{t}^{-1/2}  \Delta_{t}\big\|^2\bigg] + \frac{\eta^2 L}{2\epsilon}\sum_{t=1}^T  \EE\bigg[\bigg\|\frac{\beta_1}{1-\beta_1} \mb_{t-1}^{\prime} + \bGamma_t \bigg\|^2 \bigg] \notag\\
    & \leq \frac{\eta^2 L}{2\epsilon} \sum_{t=1}^T \EE [\| \Delta_{t}\|^2] + \frac{\eta^2 L}{\epsilon} \bigg[ \frac{\beta_1^2}{(1-\beta_1)^2} \sum_{t=1}^T \EE \|\mb_{t-1}^{\prime}\|^2 +\sum_{t=1}^T \EE\|\bGamma_t\|^2 \bigg].
\end{align}
By Lemma \ref{lm:mE-bound}, we have
\begin{align}
    \sum_{t=1}^T \EE[\|\mb_{t-1}^{\prime}\|^2] \leq \frac{T K \eta_l^2}{m} \sigma_l^2 + \frac{\eta_l^2}{m^2} \sum_{t=1}^T \EE \bigg[\bigg\|\sum_{i=1}^m \sum_{k=0}^{K-1}\nabla F_i(\xb_{t,k}^i ) \bigg\|^2\bigg],\notag
\end{align}
and 
\begin{align}
    \sum_{t=1}^T \EE [\|\bGamma_t\|^2] & \leq \frac{4T(q+\gamma)^2}{(1-q^2)^2} \frac{K \eta_l^2}{m} \sigma_l^2 + \frac{\eta_l^2}{m^2} \frac{4(q+\gamma)^2}{(1-q^2)^2} \sum_{t=1}^T \EE \bigg[\bigg\|\sum_{i=1}^m \sum_{k=0}^{K-1}\nabla F_i(\xb_{t,k}^i ) \bigg\|^2\bigg].\notag
\end{align}
Therefore, the $T_4$ term is bounded by 
\begin{align}\label{eq:T4-2}
    \sum_{t=1}^T T_4 & \leq \frac{\eta^2 L}{2\epsilon}  \sum_{t=1}^T \EE [\|\Delta_{t}\|^2] + \frac{C_2 \eta^2 L}{\epsilon} \frac{\eta_l^2}{m^2} \sum_{t=1}^T \EE \bigg[\bigg\|\sum_{i=1}^m \sum_{k=0}^{K-1}\nabla F_i(\xb_{t,k}^i ) \bigg\|^2\bigg] + \frac{C_2 \eta^2 L}{\epsilon} \frac{TK \eta_l^2}{m} \sigma_l^2,
\end{align}
where $C_2 = \frac{4(q+\gamma)^2}{(1-q^2)^2} + \frac{\beta_1^2}{(1-\beta_1)^2}$.

\textbf{Merging pieces together:}
Substituting \eqref{eq:T1}, \eqref{eq:T2} and \eqref{eq:T3} into \eqref{eq:f-Lsmooth}, summing over from $t=1$ to $T$ and then adding \eqref{eq:T4-2}, we have
\begin{align}\label{eq:fzT-1}
    & \EE[f(\zb_{T+1})]-f(\zb_1) = \sum_{t=1}^{T} [T_1+T_2+T_3+T_4] \notag\\
    & \leq -\frac{\eta\eta_l K}{4} \sum_{t=1}^{T}\EE\bigg[ \bigg\|\frac{\nabla f(\xb_t)}{\sqrt[4]{\beta_2 \vb_{t-1} + \epsilon}} \bigg\|^2\bigg]+ \frac{5\eta\eta_l^3 K^2 L^2 T}{\sqrt{2\epsilon}} (\sigma_l^2+6K \sigma_g^2) + \frac{\sqrt{2(1-\beta_2)} \eta G}{\epsilon} \sum_{t=1}^{T}\EE[\|\Delta_t\|^2]  \notag\\
    & \quad  - \frac{\eta\eta_l}{2 K m^2} \sum_{t=1}^{T} \EE\bigg[ \bigg\|\frac{1}{\sqrt[4]{\beta_2 \vb_{t-1} + \epsilon}}\sum_{i=1}^m \sum_{k=0}^{K-1} \nabla F_i(\xb_{t,k}^i))\bigg\|^2\bigg] + C_1 \eta\eta_l K G^2\sum_{t=1}^{T}\EE\bigg[\Big\|\Vbh_{t-1}^{-1/2} - \Vbh_{t}^{-1/2}\Big\|_1\bigg] \notag\\
    & \quad + \frac{C_1^2\eta^2\eta_l^2 K^2 G^2}{\sqrt{\epsilon}} \sum_{t=1}^{T}\EE\bigg[\Big\|\Vbh_{t-1}^{-1/2} - \Vbh_{t}^{-1/2}\Big\|_1\bigg] + C_1^2\eta^2\eta_l^2K^2 L G^2 \sum_{t=1}^{T}\EE\bigg[\Big\|\Vbh_{t-1}^{-1/2} - \Vbh_{t}^{-1/2}\Big\|^2\bigg] \notag\\
    & \quad + \eta^2 L \sum_{t=1}^{T} \EE \bigg[\big\|\Vbh_{t}^{-1/2}  \Delta_{t}\big\|^2 \bigg] + \frac{\eta^2 L}{2} \sum_{t=1}^{T} \EE \bigg[\big\|\Vbh_{t}^{-1/2} \Delta_{t}\big\|^2 \bigg] + \frac{\eta^2 L}{2} \frac{\beta_1^2}{(1-\beta_1)^2} \sum_{t=1}^{T} \EE[\|\mb_t^\prime\|^2] + \frac{\eta^2 L}{2} \sum_{t=1}^{T} \EE [\|\bGamma_t\|^2].
\end{align}
Hence by organizing and applying Lemmas, we have
\begin{align}
    & \EE[f(\zb_{T+1})]-f(\zb_1) \notag\\
    & \leq -\frac{\eta\eta_l K}{4} \sum_{t=1}^{T} \EE\bigg[\bigg\|\frac{\nabla f(\xb_t)}{\sqrt[4]{\beta_2 \vb_{t-1} + \epsilon}} \bigg\|^2\bigg]+ \frac{5\eta\eta_l^3 K^2 L^2 T}{\sqrt{2\epsilon}} (\sigma_l^2+6K \sigma_g^2)  \notag\\
    & \quad - \frac{\eta\eta_l}{2 K m^2} \sum_{t=1}^{T} \EE\bigg[ \bigg\|\frac{1}{\sqrt[4]{\beta_2 \vb_{t-1} + \epsilon}} \sum_{i=1}^m \sum_{k=0}^{K-1} \nabla F_i(\xb_{t,k}^i))\bigg\|^2\bigg] +  \frac{C_1 \eta \eta_l K G^2 d}{\sqrt{\epsilon}} + \frac{2  C_1^2  \eta^2 \eta_l^2 K^2 L G^2 d}{\epsilon} \notag\\
    & \quad + \bigg(\eta^2 L + \frac{\eta^2 L}{2} + \sqrt{2(1-\beta_2)} \eta G \bigg) \bigg[ \frac{K T \eta_l^2}{m\epsilon} \sigma_l^2 + \frac{ \eta_l^2}{m^2 \epsilon} \sum_{t=1}^{T} \EE \bigg[\bigg\|\sum_{i=1}^m \sum_{k=0}^{K-1} \nabla F_i(\xb_{t,k}^i)\bigg\|^2 \bigg]\bigg] \notag\\
    & \quad + \frac{\eta^2 L}{\epsilon}\frac{\eta_l^2 C_2}{m^2} \sum_{t=1}^{T} \EE \bigg[\bigg\|\sum_{i=1}^m \sum_{k=0}^{K-1} \nabla F_i(\xb_{t,k}^i)\bigg\|^2 \bigg] + \frac{\eta^2 L}{\epsilon}\frac{T K \eta_l^2 C_2}{m}\sigma_l^2,
\end{align}
by applying Lemma \ref{lm:EDelta} into all terms containing the second moment estimate of model difference $\Delta_t$ in \eqref{eq:fzT-1}, using the fact that $\Big(\sqrt{\beta_2 \frac{(1+q^2)^3}{(1-q^2)^2} K^2 G^2 + \epsilon}\Big)^{-1} \|\xb\| \leq \Big(\sqrt{\beta_2 \frac{(1+q^2)^3}{(1-q^2)^2} \eta_l^2 K^2 G^2 + \epsilon}\Big)^{-1}\|\xb\| \leq \big\|\frac{\xb}{\sqrt{\beta_2 \vb_t + \epsilon}}\big\| \leq \epsilon^{-1/2} \|\xb\|$, and applying Lemma \ref{lm:Dt} and \ref{lm:mE-bound-partial}, we have
\begin{align}
    & \EE[f(\zb_{T+1})]-f(\zb_1) \notag\\
    & \leq -\frac{\eta\eta_l K}{4 \sqrt{4 \beta_2 \frac{(1+q^2)^3}{(1-q^2)^2}\eta_l^2 K^2 G^2+\epsilon}} \sum_{t=1}^{T} \EE[\|\nabla f(\xb_t)\|^2]+ \frac{5\eta\eta_l^3 K^2 L^2 T}{\sqrt{2\epsilon}} (\sigma_l^2+6K \sigma_g^2) \notag\\
    & \quad + \frac{C_1 \eta \eta_l K G^2 d}{\sqrt{\epsilon}} + \frac{2  C_1^2  \eta^2 \eta_l^2 K^2 L G^2 d}{\epsilon} + \bigg(\frac{3\eta^2 L}{2} + C_2 \eta^2 L  + \sqrt{2(1-\beta_2)} \eta G\bigg) \frac{KT\eta_l^2}{m \epsilon} \sigma_l^2 \notag\\
    & \quad - \sum_{t=1}^{T} \EE \bigg[\bigg\|\sum_{i=1}^m \sum_{k=0}^{K-1} \nabla F_i(\xb_{t,k}^i)\bigg\|^2 \bigg] \bigg[\frac{\eta\eta_l}{2 \sqrt{4 \beta_2 \frac{(1+q^2)^3}{(1-q^2)^2}\eta_l^2 K^2 G^2+\epsilon} K m^2} - \bigg(\frac{3\eta^2 L}{2} + C_2 \eta^2 L  + \sqrt{2(1-\beta_2)} \eta G\bigg) \frac{\eta_l^2}{m^2 \epsilon} \bigg]\notag\\
    & \leq -\frac{\eta\eta_l K}{4 C_0} \sum_{t=1}^{T} \EE[\|\nabla f(\xb_t)\|^2]+ \frac{5\eta\eta_l^3 K^2 L^2 T}{\sqrt{2\epsilon}} (\sigma_l^2+6K \sigma_g^2)  \notag\\
    & \quad + \frac{C_1 \eta \eta_l K G^2 d}{\sqrt{\epsilon}} + \frac{2  C_1^2  \eta^2 \eta_l^2 K^2 L G^2 d}{\epsilon} + \bigg(\frac{3\eta^2 L}{2} + C_2 \eta^2 L + \sqrt{2(1-\beta_2)} \eta G \bigg) \frac{KT\eta_l^2}{m \epsilon} \sigma_l^2,
\end{align}
where the last inequality holds by $\eta_l \leq \frac{\epsilon}{\sqrt{4 \beta_2 (1+q^2)^3 (1-q^2)^{-2} K^2 G^2+\epsilon} \cdot K (3\eta L + 2C_2 \eta L + 2\sqrt{2(1-\beta_2)} G)}$.
Hence we have 
\begin{align}\label{eq:fzT-3}
    & \frac{\eta\eta_l K}{4 \sqrt{4 \beta_2 \frac{(1+q^2)^3}{(1-q^2)^2}\eta_l^2 K^2 G^2+\epsilon} \cdot T} \sum_{t=1}^{T} \EE[\|\nabla f(x_t)\|^2] \notag\\
    & \leq \frac{f(\zb_0)-\EE [f(\zb_T)]}{T} + \frac{5\eta\eta_l^3 K^2 L^2}{\sqrt{2\epsilon}} (\sigma_l^2+6K \sigma_g^2) + \frac{C_1 \eta \eta_l K G^2 d}{T \sqrt{\epsilon}} + \frac{2  C_1^2  \eta^2 \eta_l^2 K^2 L G^2 d}{T \epsilon} \notag\\
    & \quad + \big[3\eta^2 L+ 2C_2 \eta^2 L + 2\sqrt{2(1-\beta_2)} \eta G\big] \frac{K\eta_l^2}{2m \epsilon} \sigma_l^2,
\end{align}
where $C_1= \frac{\beta_1}{1-\beta_1}+ \frac{2q}{1-q^2}$ and $C_2 = \frac{\beta_1^2}{(1-\beta_1)^2}+ \frac{4(q+\gamma)^2}{(1-q^2)^2}$. \eqref{eq:fzT-3} also implies, 
\begin{align}
    \min \EE [\|\nabla f(\xb_t)\|^2] \leq 4 \sqrt{4 \beta_2 \frac{(1+q^2)^3}{(1-q^2)^2}\eta_l^2 K^2 G^2+\epsilon} \bigg[\frac{f_0-f_*}{\eta \eta_l K T} + \frac{\Psi}{T} + \Phi \bigg],
\end{align}
where $\Psi = \frac{C_1 G^2 d}{\sqrt{\epsilon}}+ \frac{2C_1^2 \eta\eta_l K L G^2 d}{\epsilon}, \Phi = \frac{5 \eta_l^2 K L^2}{\sqrt{2\epsilon}} (\sigma_l^2+6K \sigma_g^2)+ [(3+2C_2)\eta L + 2\sqrt{2(1-\beta_2)} G] \frac{\eta_l}{2m \epsilon} \sigma_l^2$, $C_1 = \frac{\beta_1}{1-\beta_1} + \frac{2q}{1-q^2}$ and $C_2 = \frac{\beta_1^2}{(1-\beta_1)^2} + \frac{4(q+\gamma)^2}{(1-q^2)^2}$.

\subsection{Proof of Corollary \ref{cor:cfedams-full}}

Let $\eta_l = \Theta(\frac{1}{\sqrt{T} K})$ and $\eta = \Theta(\sqrt{Km}) $, the convergence rate under full participation scheme is $\cO(\frac{1}{\sqrt{TKm}})$.

\subsection{Analysis on the Partial Participation Setting for FedCAMS} \label{sec:ext}
Let us present the theoretical analysis of the partial participation scheme of FedCAMS (Algorithm \ref{alg:compfedams}). Similar to partial participation scheme in Section \ref{subsec:fedams}, we have the following convergence analysis. 
\begin{theorem} \label{thm:part-cfedams}
Under Assumption~\ref{as:smooth}-\ref{as:bounded-v} and \ref{as:compressor}, if the local learning rate $\eta_l$ satisfies the following condition: $\eta_l \leq \min\Big\{\frac{1}{8KL}, \frac{n(m-1)\epsilon}{48m(n-1)}[K\sqrt{4\beta_2 (1+q^2)^3(1-q^2)^{-2} K^2 G^2+\epsilon} (\eta L + \sqrt{2(1-\beta_2)}G)]^{-1} \Big\}$ , then the iterates of Algorithm \ref{alg:compfedams} under partial participation scheme satisfy
\begin{align}\label{eq:part-cfedams}
    \min_{t\in [T]} \EE \|\nabla f(\xb_t)\|^2 \leq 8 \sqrt{4 \beta_2 \frac{(1+q^2)^3}{(1-q^2)^2} \eta_l^2 K^2 G^2 + \epsilon} \bigg[\frac{f_0-f_*}{\eta \eta_l K T} + \frac{\Psi}{T} + \Phi \bigg],
\end{align}
where $\Psi = \frac{C_1 G^2 d}{\sqrt{\epsilon}}+ \frac{2C_1^2 \eta\eta_l K L G^2 d}{\epsilon}$, $\Phi = \frac{C_1\eta\eta_l KLG^2}{\epsilon} + \frac{5 \eta_l^2 K L^2}{ \sqrt{2\epsilon}} (\sigma_l^2+6K \sigma_g^2)+ [\eta L + \sqrt{2(1-\beta_2)} G] \frac{\eta_l}{n\epsilon}\sigma_l^2 + [\eta L + \sqrt{2(1-\beta_2)} G]\frac{\eta_l(m-n)}{n(m-1)\epsilon}  [15 K^2L^2 \eta_l^2 (\sigma_l^2 + 6K\sigma_g^2) + 3K\sigma_g^2]$ and $C_1 = \frac{\beta_1}{1-\beta_1} + \frac{2q}{1-q^2}$.
\end{theorem}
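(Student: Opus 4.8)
The plan is to merge the two ingredients already developed in the excerpt: the compression-aware Lyapunov construction from the full-participation FedCAMS proof (Theorem~\ref{thm:full-cfedams}) and the sampling-variance bookkeeping from the partial-participation FedAMS proof (Theorem~\ref{thm:part-fedams}). First I would reuse verbatim the auxiliary error sequence $\bGamma_t = (1-\beta_1)\sum_{\tau=1}^t \beta_1^{t-\tau}\eb_\tau$ and the shifted Lyapunov sequence $\zb_{t+1} = \yb_{t+1} + \eta\Vbh_t^{-1/2}\eb_{t+1}$ of \eqref{eq:lyapunov-z}, so that the difference $\zb_{t+1}-\zb_t$ again takes the clean form involving $\eta\Vbh_t^{-1/2}\Delta_t$ together with the two $(\Vbh_{t-1}^{-1/2}-\Vbh_t^{-1/2})$ correction pieces acting on $\mb_{t-1}'$ and $\bGamma_t$. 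The only structural change is that $\Delta_t = \frac{1}{n}\sum_{i\in\cS_t}\Delta_t^i$ is now the partial average, which (exactly as in \eqref{eq: Delta-unbiased}) is an unbiased estimator of $\bar{\Delta}_t = \frac{1}{m}\sum_{i=1}^m \Delta_t^i$; the stale-error rule for $j\notin\cS_t$ guarantees that $\eb_t$ and hence $\bGamma_t$ remain well defined globally.

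Next I would invoke $L$-smoothness of $f$ along $\zb_t$ and split $\EE[f(\zb_{t+1})]-f(\zb_t)$ into the same four terms $T_1',\dots,T_4'$ as in \eqref{eq:f-Lsmooth}. The middle terms $T_2'$ and $T_3'$ are insensitive to how $\Delta_t$ is formed: they only see the correction pieces, so their bounds \eqref{eq:T2}--\eqref{eq:T3} carry over unchanged with the compression-inflated constants $C_1=\frac{\beta_1}{1-\beta_1}+\frac{2q}{1-q^2}$ and $C_2=\frac{\beta_1^2}{(1-\beta_1)^2}+\frac{4q^2}{(1-q^2)^2}$. For $T_1'$ I would repeat the descent manipulation of \eqref{eq:T1-1}--\eqref{eq:T1}, producing the negative gradient term with coefficient $\frac{\eta\eta_l K}{4}$, the drift term $\frac{5\eta\eta_l^3K^2L^2}{\sqrt{2\epsilon}}(\sigma_l^2+6K\sigma_g^2)$, the negative cross term with coefficient $\frac{\eta\eta_l}{2Km^2}$, and the residual $\frac{\eta\sqrt{2(1-\beta_2)}G}{\epsilon}\EE[\|\Delta_t\|^2]$; for $T_4'$ I would follow \eqref{eq:T4-1}--\eqref{eq:T4-2} to bound it in terms of $\EE[\|\Delta_t\|^2]$, $\EE[\|\mb_t\|^2]$ and $\EE[\|\bGamma_t\|^2]$.

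The crux is bounding $\EE[\|\Delta_t\|^2]$, $\EE[\|\mb_t\|^2]$ and $\EE[\|\bGamma_t\|^2]$ under partial sampling \emph{with} compression. Here I would need the partial-participation analogue of Lemma~\ref{lm:EDelta} (paralleling Lemma~\ref{lm:EDelta'} and Lemma~\ref{lm:Dt}): decompose $\Delta_t = (\Delta_t-\bar{\Delta}_t)+\bar{\Delta}_t$ and use the without-replacement identities $\PP\{i\in\cS_t\}=n/m$ and $\PP\{i,j\in\cS_t\}=\frac{n(n-1)}{m(m-1)}$ to extract the sampling-variance contribution carrying the factor $\frac{m-n}{n(m-1)}$ together with its $\sigma_g^2$ dependence, while the error-feedback second moment $\EE[\|\bGamma_t\|^2]$ injects the compression factor $\frac{4q^2}{(1-q^2)^2}$ and the aggregated compressed difference inherits the inflation $\frac{4(1+q^2)^3}{(1-q^2)^2}$ that surfaces inside the denominator $\sqrt{4\beta_2\frac{(1+q^2)^3}{(1-q^2)^2}\eta_l^2K^2G^2+\epsilon}$. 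I expect this combined second-moment estimate to be the main obstacle, since the sampling noise and the compression-error recursion interact and must both be controlled without letting the accumulated compression error diverge --- precisely the difficulty the $\bGamma_t$ sequence was introduced to resolve.

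Finally I would sum over $t=1,\dots,T$, substitute all moment bounds, and impose the stated two-part constraint on $\eta_l$: the $\frac{1}{8KL}$ part validates the drift estimate via Lemma~\ref{lm:xikt-xt}, while the $\frac{n(m-1)\epsilon}{48m(n-1)}[\cdots]^{-1}$ part makes the coefficient of the $\sum_{i}\sum_{k}\nabla F_i(\xb_{t,k}^i)$ cross term nonnegative and simultaneously halves the leading gradient coefficient (the prefactor $8$ rather than $4$ reflects exactly this halving, as in the FedAMS partial case). Dividing through by $\frac{\eta\eta_l K}{8\sqrt{4\beta_2\frac{(1+q^2)^3}{(1-q^2)^2}\eta_l^2K^2G^2+\epsilon}\,T}$ and collecting the $T$-decaying pieces into $\Psi$ and the variance pieces into $\Phi$ then yields the claimed bound, with the extra $\frac{m-n}{n(m-1)}$ sampling term appearing inside $\Phi$ exactly as in Theorem~\ref{thm:part-fedams} and the compression corrections absorbed into $C_1,C_2$.
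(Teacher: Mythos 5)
Your outline matches the paper's proof almost step for step: same shifted Lyapunov sequence $\zb_t$ with the auxiliary error average $\bGamma_t$, same four-term smoothness split, $T_2'$ and $T_3'$ carried over with the compression-inflated constants, the same descent manipulation for $T_1'$, the without-replacement second-moment lemma for $\EE[\|\Delta_t\|^2]$ producing the $\frac{m-n}{n(m-1)}$ sampling-variance block, and the same two-part learning-rate constraint that halves the leading gradient coefficient and yields the prefactor $8$. The one place you genuinely diverge is $T_4'$. The paper does \emph{not} replicate the full-participation treatment \eqref{eq:T4-1}--\eqref{eq:T4-2} here; instead it bounds $T_4'$ crudely by the uniform bounds $\|\mb_{t-1}'\|\le \eta_l K G$ and $\|\bGamma_t\|\le \frac{2q}{1-q^2}\eta_l K G$, giving the per-step estimate $T_4'\le \frac{C_1\eta^2\eta_l^2K^2LG^2}{\epsilon}$. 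That is precisely where the non-vanishing term $\frac{C_1\eta\eta_l K L G^2}{\epsilon}$ in the stated $\Phi$ comes from, and it also explains why the $\sigma_l^2$ coefficients in this theorem read $[\eta L+\sqrt{2(1-\beta_2)}G]$ rather than the $[(3+2C_2)\eta L+2\sqrt{2(1-\beta_2)}G]$ pattern of the full-participation result. If you instead push $T_4'$ through $\EE[\|\mb_t\|^2]$ and $\EE[\|\bGamma_t\|^2]$ as you propose, you would need a partial-participation analogue of the $\sum_t\EE[\|\bGamma_t\|^2]$ bound (the paper's Lemma on $\bGamma_t$ is proved only for the full average $\eb_t=\frac{1}{m}\sum_i\eb_t^i$, whereas here $\eb_t'=\frac{1}{n}\sum_{i=1}^m\eb_t^i$ mixes stale errors of non-participating clients and picks up an extra $m/n$ scaling), and you would land on a bound of the same order but with different constants --- in particular without the $\frac{C_1\eta\eta_l KLG^2}{\epsilon}$ term and with heavier coefficients elsewhere. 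So your route is viable and arguably sharper in one place, but to recover the theorem exactly as stated you should swap in the paper's absolute bound for $T_4'$.
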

\begin{remark}
The upper bound for $\min_{t\in [T]} \EE \|\nabla f(\xb_t)\|^2$ of partial participation is similar to full participation case but with a larger variance term $\Phi$. This is due to the fact that random sampling of participating workers introduces an additional variance during sampling. 
\end{remark}

\begin{remark}
From Theorem \ref{thm:part-cfedams}, constants $C_1$ is related to the compressor constant $q$. The stronger compression we apply to the model difference $\Delta_t^i$ corresponding to larger $q$ ($q \to 1$) leads to worse convergence due to larger information losses. 
\end{remark}

Next, we provide theoretical proofs for the partial participation analysis for FedCAMS.

\textbf{Proof of Theorem \ref{thm:part-cfedams}: }

\textit{Notations and equations:} From the update rule of Algorithm \ref{alg:compfedams}, we have $\eb_1 = 0$, $\eb_t= \frac{1}{m} \sum_{i=1}^m \eb_t^i$ and $\mb_{t}=(1-\beta_1) \sum_{i=1}^{t} \beta_{1}^{t-i} \hat{\Delta}_{i}$. Denote a global uncompressed difference $\Delta_t= \frac{1}{|\cS_t|} \sum_{i \in \cS_t} \Delta_i^t$. Denote a virtual momentum sequence: $\mb_{t}^{\prime}=\beta_{1} \mb_{t-1}^{\prime}+(1-\beta_1) \Delta_t$, hence we have $\mb_{t}^{\prime}=(1-\beta_1) \sum_{i=1}^{t} \beta_{1}^{t-i} \Delta_{i}$. 
Define additional two virtual sequences $\Delta'_t = \frac{1}{n} \sum_{i=1}^m \Delta_t^i$ and $\hat{\Delta}'_t = \frac{1}{n} \sum_{i=1}^m \hat{\Delta}_t^i$. Note that when the client $i$ does not take part in the round of participation at step $t$, we have $\Delta_t^i = \hat{\Delta}_t^i = 0$, therefore, $\Delta'_t = \Delta_t$ and $\hat{\Delta}'_t = \hat{\Delta}_t$.

By the aforementioned definition and notation, define a subset $\cS_t=\{w_1^t, w_2^t,..., w_n^t \}$, we have
\begin{align}
    \hat{\Delta}_t - \Delta_t = \frac{1}{|\cS_t|} \sum_{i \in \cS_t} ( \hat{\Delta}_t^i - \Delta_t^i) = \frac{1}{n} \sum_{i=1}^m (\hat{\Delta}_t^i - \Delta_t^i) = \frac{1}{n} \sum_{i=1}^m (\eb_t^i - \eb_{t+1}^i) = \eb'_t - \eb'_{t+1},
\end{align}
where the compression errors have the same structure, $\eb'_t = \frac{1}{n} \sum_{i=1}^m \eb_t^i$. Similar to the previous analysis, we define the following sequence:
\begin{align*}
    \bGamma_{t+1} & := (1-\beta_1) \sum_{\tau=1}^{t+1} \beta_1^{t+1-\tau} \eb'_\tau,
\end{align*}
and keep using the Lyapunov function $\zb_t$ from \eqref{eq:lyapunov-z}.
For the expectation of model difference $\Delta_t$, we have 
\begin{align}
    \EE_{\cS_t}[\Delta_t] = \frac{1}{n} \EE_{\cS_t}\bigg[\sum_{i=1}^n \Delta_t^{w_i}\bigg] = \EE_{\cS_t}[\Delta_t^{w_1}] = \frac{1}{m} \sum_{i=1}^m \Delta_t^i = \Bar{\Delta}_t.
\end{align}
The proof of FedCAMS in partial participation settings has a similar outline combing the proof of partial participation in FedAMS and full participation in FedCAMS. By Assumption \ref{as:smooth}, we have
\begin{align}\label{eq:f-Lsmooth'}
& \EE[f(\zb_{t+1})]-f(\zb_{t}) \notag\\
& \leq \underbrace{\EE\bigg[\bigg\langle\nabla f(\xb_{t}), \eta\Vbh_{t}^{-1/2}  \Delta_{t}\bigg\rangle\bigg]}_{T'_1} \notag\\
& \quad \underbrace{- \EE\bigg[\bigg\langle\nabla f(\zb_{t}), \eta\frac{\beta_{1}}{1-\beta_{1}} \bigg(\Vbh_{t-1}^{-1/2} - \Vbh_{t}^{-1/2}\bigg) m_{t-1}^{\prime}+\bigg(\Vbh_{t-1}^{-1/2} - \Vbh_{t}^{-1/2}\bigg) \bGamma_{t}\bigg\rangle\bigg]}_{T'_2}\notag\\
& \quad + \underbrace{\frac{\eta^{2} L}{2} \EE\bigg[\bigg\|\Vbh_{t}^{-1/2}  \Delta_{t}-\frac{\beta_{1}}{1-\beta_{1}} \bigg(\Vbh_{t-1}^{-1/2} - \Vbh_{t}^{-1/2}\bigg) m_{t-1}^{\prime}-\bigg(\Vbh_{t-1}^{-1/2} - \Vbh_{t}^{-1/2}\bigg) \bGamma_{t}\bigg\|^{2}\bigg]}_{T'_3}\notag\\
& \quad +  \underbrace{\EE\bigg[\bigg\langle\nabla f(\zb_{t})-\nabla f(\xb_t), \eta\Vbh_{t}^{-1/2}  \Delta_{t}\bigg\rangle\bigg]}_{T'_4}.
\end{align}

Note that the bound for $T'_2$ is exactly the same as the bound for $T_2$. For the three corresponding terms, $T'_1$, $T'_3$ and $T'_4$ which include the second-order momentum estimate of $\Delta_t$. For $T'_1$, similar to the full participation settings, we have
\begin{align}\label{eq:T1'-1}
    T'_1 & \leq \sqrt{2} \EE\bigg[\bigg\langle\nabla f(\xb_{t}), \eta\frac{\Delta_{t}}{\sqrt{\beta_2 \vb_{t-1}+\epsilon}}\bigg\rangle\bigg] + \sqrt{2}\eta\EE\bigg[\bigg\langle\nabla f(\xb_{t}),\frac{\Delta_{t}}{\sqrt{\vb_t + \epsilon}}- \frac{\Delta_{t}}{\sqrt{\beta_2 \vb_{t-1}+\epsilon}}\bigg\rangle\bigg].
\end{align}
The first term in \eqref{eq:T1'-1} does not change in partial participation scheme. The second term is changed due to the variance of $\Delta_t$ changes. For the second term of $T'_1$, we have
\begin{align}\label{eq:T1'-2}
    \sqrt{2}\eta\EE\bigg[\bigg\langle\nabla f(\xb_{t}),\frac{\Delta_{t}}{\sqrt{\vb_t + \epsilon}} - \frac{\Delta_{t}}{\sqrt{\beta_2 \vb_{t-1}+\epsilon}}\bigg\rangle\bigg] \leq \frac{\sqrt{2(1-\beta_2)}\eta G}{\epsilon} \EE[\|\Delta_t\|^2].
\end{align}
For $T'_3$, similar to the proof of $T_3$, we have
\begin{align}
    \sum_{t=1}^T T'_3 \leq \frac{\eta^2 L}{\epsilon} \sum_{t=1}^T \EE [\|\Delta_t\|^2] + \eta^2 L C_1^2 \eta_l^2 K^2 G^2 \sum_{t=1}^T \EE \bigg[\Big\|\Vbh_{t-1}^{-1/2} - \Vbh_{t}^{-1/2}\Big\|^2\bigg],
\end{align}
where $C_1 = \frac{\beta_1}{1-\beta_1} + \frac{m}{n}\frac{2q}{1-q^2}$. 
For $T_4'$ in partial participation, we have
\begin{align}
    T'_4 & = \eta \EE\bigg[\bigg\langle f(\zb_t) - f(\xb_t), \Vbh_t^{-1/2} \Delta_t \bigg\rangle\bigg] \notag\\
    & \leq \eta \EE\bigg[\|f(\zb_t) - f(\xb_t)\| \Big\|\Vbh_t^{-1/2} \Delta_t \Big\|\bigg] \notag\\
    & \leq \eta^2 L \EE \bigg[\bigg\|\frac{\beta_1}{1-\beta_1} \Vbh_{t-1}^{-1/2} \mb'_{t-1}+\Vbh_{t-1}^{-1/2} \bGamma_t \bigg\| \Big\| \Vbh_t^{-1/2} \Delta_t \Big\| \bigg] \notag\\
    & \leq \frac{C_1 \eta^2 \eta_l^2 K^2 L G^2}{\epsilon}.
\end{align}
Hence, the summation from $T'_1$ to $T'_4$ over total iteration $T$ is:
\begin{align}\label{eq:fzT-1''}
    & \EE[f(\zb_{T+1})]-f(\zb_1) = \sum_{t=1}^{T} [T'_1+T'_2+T'_3+T'_4] \notag\\
    & \leq -\frac{\eta\eta_l K}{4} \sum_{t=1}^{T}\EE \bigg[\bigg\|\frac{\nabla f(\xb_t)}{\sqrt[4]{\beta_2 \vb_{t-1}+\epsilon}}\bigg\|^2\bigg]+ \frac{5\eta\eta_l^3 K^2 L^2 T}{\sqrt{2\epsilon}} (\sigma_l^2+6K \sigma_g^2) +\frac{\sqrt{2(1-\beta_2)}\eta G}{\epsilon} \sum_{t=1}^{T} \EE[\|\Delta_t\|^2] \notag\\
    & \quad - \frac{\eta\eta_l}{2 K m^2} \sum_{t=1}^{T}\EE \bigg[ \bigg\|\frac{1}{\sqrt[4]{\beta_2 \vb_{t-1}+\epsilon}}\sum_{i=1}^m \sum_{k=0}^{K-1} \nabla F_i(\xb_t))\bigg\|^2\bigg] + C_1\eta \eta_l K G^2 \sum_{t=1}^{T}\EE\bigg[\Big\|\Vbh_{t-1}^{-1/2} - \Vbh_{t}^{-1/2}\Big\|_1\bigg] \notag\\
    & \quad + C_1^2 \eta^2 \eta_l^2 K^2 L G^2 \epsilon^{-1/2} \sum_{t=1}^{T}\EE\bigg[\Big\|\Vbh_{t-1}^{-1/2} - \Vbh_{t}^{-1/2}\Big\|_1\bigg] + C_1^2 \eta^2 \eta_l^2 K^2 L G^2 \sum_{t=1}^T \EE \bigg[\Big\|\Vbh_{t-1}^{-1/2} - \Vbh_{t}^{-1/2}\Big\|^2\bigg] \notag\\
    & \quad + \frac{\eta^2 L}{\epsilon} \sum_{t=1}^T \EE [\|\Delta_t\|^2] + \frac{C_1 T \eta^2 \eta_l^2 K^2 L G^2}{\epsilon} \notag\\
    & \leq -\frac{\eta\eta_l K}{4\sqrt{4 \beta_2 \frac{(1+q^2)^3}{(1-q^2)^2} \eta_l^2 K^2 G^2 + \epsilon}} \sum_{t=1}^{T}\EE [\|\nabla f(x_t)\|^2]+ \frac{5\eta\eta_l^3 K^2 L^2 T}{\sqrt{2\epsilon}} (\sigma_l^2+6K \sigma_g^2) + \frac{C_1 \eta \eta_l K G^2 d}{T \sqrt{\epsilon}} \notag\\
    & \quad + \frac{2  C_1^2  \eta^2 \eta_l^2 K^2 L G^2 d}{T \epsilon} - \frac{\eta\eta_l}{2 \sqrt{4 \beta_2 \frac{(1+q^2)^3}{(1-q^2)^2} \eta_l^2 K^2 G^2 + \epsilon} K m^2} \sum_{t=1}^{T}\EE \bigg[ \bigg\|\sum_{i=1}^m \sum_{k=0}^{K-1} \nabla F_i(\xb_t))\bigg\|^2\bigg]\notag\\
    & \quad + \bigg( \frac{\eta^2\eta_l^2 L K T}{n \epsilon}+ \frac{\sqrt{2(1-\beta_2)} \eta\eta_l^2 K T G}{n\epsilon} \bigg) \sigma_l^2 + \frac{C_1 T \eta^2 \eta_l^2 K^2 L G^2}{\epsilon} \notag\\
    & \quad + \bigg(\frac{\eta^2\eta_l^2 L}{\epsilon} + \frac{\sqrt{2(1-\beta_2)}\eta\eta_l^2 G}{\epsilon} \bigg)\frac{m-n}{mn(m-1)} \bigg[ 15 m K^3 L^3 \eta_l^2(\sigma_l^2+ 6K \sigma_g^2)T \notag\\
    & \quad + (90mK^4L^2\eta_l^2+3mK^2)\sum_{t=1}^{T} \EE[\|\nabla f(\xb_t)\|^2] +3mK^2 T \sigma_g^2 \bigg] \notag\\
    & \quad + \bigg(\frac{\eta^2\eta_l^2 L}{\epsilon} + \frac{\sqrt{2(1-\beta_2)}\eta\eta_l^2 G}{\epsilon}\bigg) \frac{n-1}{mn(m-1)}\sum_{t=1}^T \EE \bigg[ \bigg\|\sum_{i=1}^m \sum_{k=0}^{K-1} \nabla F_i(\xb_t))\bigg\|^2\bigg].
\end{align}
The proof outline is similar with previous proof. We take the use of Lemma  \ref{lm:Dt}, \ref{lm:EDelta'}, \ref{lm:mE-bound-partial} for corresponding terms. 
By additional constraints of local learning rate $\eta_l$ with the inequality $\Big[\eta^2 L + \sqrt{2(1-\beta_2)}\eta G \Big] \frac{\eta_l^2 (n-1)}{mn(m-1)\epsilon} - \frac{\eta\eta_l}{2 K m^2} \Big[\sqrt{4 \beta_2 \frac{(1+q^2)^3}{(1-q^2)^2} \eta_l^2 K^2 G^2 + \epsilon}\Big]^{-1} \leq 0$, we obtain the constraint $\eta_l \leq \frac{n(m-1)}{m(n-1)}\frac{\epsilon}{2 K \sqrt{4\beta_2 (1+q^2)^3 (1-q^2)^{-2} K^2 G^2 + \epsilon}[\eta L + \sqrt{2(1-\beta_2)} G]}$, and we further need $\eta_l$ satisfies $\frac{\eta\eta_l K}{4\sqrt{4 \beta_2 (1+q^2)^3 (1-q^2)^{-2}\eta_l^2 K^2 G^2 + \epsilon}} - (\eta^2 L + \sqrt{2(1-\beta_2)} \eta G )\frac{\eta_l^2 (m-n)}{mn(m-1)\epsilon} (90mK^4L^2\eta_l^2 + 3mK^2) \geq \frac{\eta\eta_l K}{8\sqrt{4 \beta_2 (1+q^2)^3 (1-q^2)^{-2} \eta_l^2 K^2 G^2 + \epsilon}}$. Hence for the convergence rate, we have
\begin{align}\label{eq:fzT-3''}
    & \frac{\eta\eta_l K}{8 \sqrt{4 \beta_2 \frac{(1+q^2)^3}{(1-q^2)^2} \eta_l^2 K^2 G^2 + \epsilon}\cdot T} \sum_{i=1}^T \EE[\|\nabla f(x_t)\|^2] \notag\\
    & \leq \frac{f(\zb_0)-\EE [f(\zb_T)]}{T} + \frac{5\eta\eta_l^3 K^2 L^2}{\sqrt{2\epsilon}} (\sigma_l^2+6K \sigma_g^2) + \big(\eta L + \sqrt{2(1-\beta_2)} G \big) \frac{\eta \eta_l^2 K}{n\epsilon}\sigma_l^2 \notag\\
    & + \frac{C_1 \eta \eta_l K G^2 d}{T \sqrt{\epsilon}} + \frac{2  C_1^2  \eta^2 \eta_l^2 K^2 L G^2 d}{T \epsilon} + \frac{C_1 \eta^2 \eta_l^2 K^2 L G^2}{\epsilon}\notag\\
    & + \bigg(\frac{\eta^2\eta_l^2 L}{\epsilon} + \frac{\sqrt{2(1-\beta_2)}\eta\eta_l^2 G}{\epsilon} \bigg)\frac{m-n}{mn(m-1)}[ 15 m K^3 L^2 \eta_l^2(\sigma_l^2+ 6K \sigma_g^2) + 3m K^2\sigma_g^2].
\end{align}
Therefore, 
\begin{align}
    \min \EE[\|\nabla f(x_t)\|^2] & \leq 8 \sqrt{4 \beta_2 \frac{(1+q^2)^3}{(1-q^2)^2} \eta_l^2 K^2 G^2 + \epsilon} \bigg[\frac{f_0-f_*}{\eta \eta_l K T} + \frac{\Psi}{T} + \Phi \bigg],
\end{align}
where $\Psi = \frac{C_1 G^2 d}{\sqrt{\epsilon}}+ \frac{2C_1^2 \eta\eta_l K L G^2 d}{\epsilon}, \Phi = \frac{C_1\eta\eta_l KLG^2}{\epsilon} + \frac{5 \eta_l^2 K L^2}{ \sqrt{2\epsilon}} (\sigma_l^2+6K \sigma_g^2)+ [\eta L + \sqrt{2(1-\beta_2)} G] \frac{\eta_l}{n\epsilon}\sigma_l^2 + [\eta L + \sqrt{2(1-\beta_2)} G]\frac{\eta_l(m-n)}{n(m-1)\epsilon}  [15 K^2L^2 \eta_l^2 (\sigma_l^2 + 6K\sigma_g^2) + 3K\sigma_g^2]$ and $C_1 = \frac{\beta_1}{1-\beta_1} + \frac{m}{n}\frac{2q}{1-q^2}$.


\section{Supporting Lemmas}
\begin{lemma} \label{lm:Wt}
    For the element-wise difference, $W_t = \frac{1}{\sqrt{\vb_{t}+\epsilon}}- \frac{1}{\sqrt{\beta_2\vb_{t-1}+\epsilon}}$, we have $\|W_t\| \leq \frac{\sqrt{1-\beta_2}}{\epsilon} \|\Delta_t\|$.
\end{lemma}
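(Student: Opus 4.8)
The plan is to argue coordinate-wise, reduce the estimate to a clean single-variable inequality, and then lift back to the $\ell_2$ norm. First I would recall the variance update rule $\vb_t = \beta_2 \vb_{t-1} + (1-\beta_2)\Delta_t^2$, which yields the element-wise identity
\[
    (\vb_t + \epsilon) - (\beta_2 \vb_{t-1} + \epsilon) = (1-\beta_2)\Delta_t^2 \geq 0 .
\]
Writing $\ab = \vb_t + \epsilon$ and $\bbb = \beta_2 \vb_{t-1} + \epsilon$ for the two quantities under the square roots, I would rationalize each coordinate $j$ via $\tfrac{1}{\sqrt{s}}-\tfrac{1}{\sqrt{r}} = \tfrac{r-s}{\sqrt{rs}(\sqrt{s}+\sqrt{r})}$, giving
\[
    W_{t,j} = \frac{1}{\sqrt{a_j}} - \frac{1}{\sqrt{b_j}} = \frac{b_j - a_j}{\sqrt{a_j b_j}\,(\sqrt{a_j}+\sqrt{b_j})} = \frac{-(1-\beta_2)\Delta_{t,j}^2}{\sqrt{a_j b_j}\,(\sqrt{a_j}+\sqrt{b_j})} .
\]

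The crux, and the only step that requires a little care, is to lower bound the denominator so that it absorbs the \emph{square} in the numerator and leaves only a linear factor. Since $\vb_t \geq 0$ and $\vb_{t-1}\geq 0$ hold element-wise, both $a_j \geq \epsilon$ and $b_j \geq \epsilon$, hence $\sqrt{a_j b_j} \geq \epsilon$. For the remaining factor, the key observation is that $a_j = \beta_2 \vb_{t-1,j} + (1-\beta_2)\Delta_{t,j}^2 + \epsilon \geq (1-\beta_2)\Delta_{t,j}^2$, so that $\sqrt{a_j}+\sqrt{b_j} \geq \sqrt{a_j} \geq \sqrt{1-\beta_2}\,|\Delta_{t,j}|$. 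Combining the two bounds yields, for each coordinate,
\[
    |W_{t,j}| = \frac{(1-\beta_2)\Delta_{t,j}^2}{\sqrt{a_j b_j}\,(\sqrt{a_j}+\sqrt{b_j})} \leq \frac{(1-\beta_2)\Delta_{t,j}^2}{\epsilon\,\sqrt{1-\beta_2}\,|\Delta_{t,j}|} = \frac{\sqrt{1-\beta_2}}{\epsilon}\,|\Delta_{t,j}| ,
\]
where the bound holds trivially (both sides vanish) in the degenerate case $\Delta_{t,j}=0$, since then $a_j = b_j$.

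Finally I would lift this pointwise estimate to the $\ell_2$ norm by squaring and summing over coordinates:
\[
    \|W_t\|^2 = \sum_j W_{t,j}^2 \leq \frac{1-\beta_2}{\epsilon^2}\sum_j \Delta_{t,j}^2 = \frac{1-\beta_2}{\epsilon^2}\,\|\Delta_t\|^2 ,
\]
and taking square roots gives the claim $\|W_t\| \leq \tfrac{\sqrt{1-\beta_2}}{\epsilon}\|\Delta_t\|$. There is no real obstacle here beyond the rationalization trick and the bound $\sqrt{a_j}\geq\sqrt{1-\beta_2}\,|\Delta_{t,j}|$, which is precisely what collapses the quadratic numerator to a linear one and produces the clean constant $\sqrt{1-\beta_2}/\epsilon$.
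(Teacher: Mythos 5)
Your proposal is correct and follows essentially the same route as the paper's proof: rationalize the difference of reciprocal square roots, substitute the update rule $\vb_t = \beta_2\vb_{t-1}+(1-\beta_2)\Delta_t^2$ into the numerator, and lower-bound the denominator by $\epsilon\cdot\sqrt{1-\beta_2}\,|\Delta_{t,j}|$ to collapse the quadratic numerator to a linear one. Your coordinate-wise treatment and the explicit handling of the $\Delta_{t,j}=0$ case are in fact cleaner than the paper's element-wise shorthand, but the argument is the same.
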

\begin{proof}
    Note that we have:
\begin{align} \label{eq:Wt}
    \|W_t\| & =  \bigg\|\frac{1}{\sqrt{\vb_{t}+\epsilon}}- \frac{1}{\sqrt{\beta_2 \vb_{t-1} + \epsilon}} \bigg\| \notag\\
    & =  \bigg\|\frac{(\sqrt{\beta_2 \vb_{t-1} + \epsilon}-\sqrt{\vb_{t}+\epsilon}) (\sqrt{\beta_2 \vb_{t-1} + \epsilon}+\sqrt{\vb_{t}+\epsilon}) }{\sqrt{\vb_{t}+\epsilon} \sqrt{\beta_2 \vb_{t-1} + \epsilon}(\sqrt{\beta_2 \vb_{t-1} + \epsilon}+\sqrt{\vb_{t}+\epsilon})} \bigg\| \notag\\
    & =  \bigg\|\frac{\beta_2 \vb_{t-1}- \vb_t }{\sqrt{\vb_{t}+\epsilon} \sqrt{\beta_2 \vb_{t-1}+\epsilon}(\sqrt{\beta_2 \vb_{t-1} + \epsilon}+\sqrt{\vb_{t}+\epsilon})} \bigg\| \notag\\
    & =  \bigg\|\frac{-(1-\beta_2)\Delta_t^2}{\sqrt{\vb_{t}+\epsilon} \sqrt{\beta_2 \vb_{t-1}+\epsilon}(\sqrt{\beta_2 \vb_{t-1} + \epsilon}+\sqrt{\vb_{t}+\epsilon})} \bigg\| \notag\\
    & \leq  \bigg\|\frac{(1-\beta_2)\Delta_t^2}{\sqrt{\vb_{t}+\epsilon} \sqrt{\beta_2 \vb_{t-1} + \epsilon} \sqrt{1-\beta_2} \Delta_t} \bigg\| \notag\\
    & \leq \frac{\sqrt{1-\beta_2} }{\epsilon}  \|\Delta_t\|,
\end{align}
where the forth equation holds by the update rule of $\vb_t$, i.e., $\vb_t = \beta_2 \vb_{t-1} + (1-\beta_2)\Delta_t^2$,
and the first inequality holds due $\sqrt{\vb_t + \epsilon} \geq \sqrt{\vb_t} \geq \sqrt{1-\beta_2} \Delta_t$ and $\sqrt{\beta_2 \vb_{t-1} +\epsilon} \geq 0$. This concludes the proof.
\end{proof}

\begin{lemma}\label{lm:Dt}
    For the variance difference sequence $\Vbh_{t-1}^{-1/2} - \Vbh_{t}^{-1/2}$, we have 
\begin{align}\label{eq:Dt}
    \sum_{t=1}^{T} \Big\|\Vbh_{t-1}^{-1/2} - \Vbh_{t}^{-1/2}\Big\|_1 \leq \frac{d}{\sqrt{\epsilon}}, \sum_{t=1}^{T} \Big\|\Vbh_{t-1}^{-1/2} - \Vbh_{t}^{-1/2}\Big\|^2 \leq \frac{d}{\epsilon}.
\end{align}
\end{lemma}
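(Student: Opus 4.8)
The plan is to exploit the monotonicity that the max operation $\hat{\vb}_t=\max(\hat{\vb}_{t-1},\vb_t,\epsilon)$ forces on the sequence. First I would record two coordinate-wise facts: for every coordinate $j$ and every $t\geq 1$ we have $\hat{v}_{t,j}\geq \hat{v}_{t-1,j}$ and $\hat{v}_{t,j}\geq\epsilon$ (taking the natural convention $\hat{v}_{0,j}=\epsilon$, consistent with the max). Because $\Vbh_t=\diag(\hat{\vb}_t)$ is diagonal, the matrix $\Vbh_{t-1}^{-1/2}-\Vbh_t^{-1/2}$ is diagonal with $j$-th entry $\hat{v}_{t-1,j}^{-1/2}-\hat{v}_{t,j}^{-1/2}$, which is \emph{nonnegative} since $x\mapsto x^{-1/2}$ is decreasing. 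This is the crucial structural observation: it means the relevant norms are just the vector $\ell_1$/$\ell_2$ norms of the diagonal, and every summand is sign-definite, so the absolute values disappear and the differences telescope without cancellation.

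For the $\ell_1$ bound I would exchange the order of summation and telescope in $t$:
\begin{align*}
\sum_{t=1}^{T}\big\|\Vbh_{t-1}^{-1/2}-\Vbh_{t}^{-1/2}\big\|_1
=\sum_{j=1}^{d}\sum_{t=1}^{T}\Big(\frac{1}{\sqrt{\hat{v}_{t-1,j}}}-\frac{1}{\sqrt{\hat{v}_{t,j}}}\Big)
=\sum_{j=1}^{d}\Big(\frac{1}{\sqrt{\hat{v}_{0,j}}}-\frac{1}{\sqrt{\hat{v}_{T,j}}}\Big)
\leq\sum_{j=1}^{d}\frac{1}{\sqrt{\epsilon}}=\frac{d}{\sqrt{\epsilon}},
\end{align*}
where I use $\hat{v}_{0,j}\geq\epsilon$ and drop the nonnegative subtracted tail. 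For the squared $\ell_2$ bound the key trick is that each difference $a_{t,j}:=\hat{v}_{t-1,j}^{-1/2}-\hat{v}_{t,j}^{-1/2}$ is not only nonnegative but also bounded by $\hat{v}_{t-1,j}^{-1/2}\leq\epsilon^{-1/2}$, so that $a_{t,j}^2\leq\epsilon^{-1/2}\,a_{t,j}$. Summing this elementary inequality over $t,j$ and invoking the telescoping bound just obtained gives
\begin{align*}
\sum_{t=1}^{T}\big\|\Vbh_{t-1}^{-1/2}-\Vbh_{t}^{-1/2}\big\|^2
\leq\frac{1}{\sqrt{\epsilon}}\sum_{t=1}^{T}\big\|\Vbh_{t-1}^{-1/2}-\Vbh_{t}^{-1/2}\big\|_1
\leq\frac{1}{\sqrt{\epsilon}}\cdot\frac{d}{\sqrt{\epsilon}}=\frac{d}{\epsilon}.
\end{align*}

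There is no genuinely hard computation here; the entire content is the monotonicity observation that lets the sign-definite differences telescope, together with the uniform lower bound $\hat{v}_{t,j}\geq\epsilon$ that simultaneously controls the telescoped head term and powers the $a^2\leq\epsilon^{-1/2}a$ reduction of the $\ell_2$ bound to the $\ell_1$ bound. The only point requiring care is fixing the initialization convention for $\hat{\vb}_0$ and confirming that the matrix $\ell_1$ and $\ell_2$ norms are being read as the vector norms of the diagonal entries, which the diagonal structure of $\Vbh_t$ guarantees.
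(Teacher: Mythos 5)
Your proof is correct and follows essentially the same route as the paper's: both rest on the monotonicity of $\hat{\vb}_t$ (together with $\hat{\vb}_t\geq\epsilon$) making the diagonal differences nonnegative, so the $\ell_1$ sum telescopes to $d/\sqrt{\epsilon}$. The only cosmetic difference is in the squared bound, where you use $a^2\leq\epsilon^{-1/2}a$ and reuse the $\ell_1$ result, while the paper uses the element-wise inequality $(x-y)^2\leq x^2-y^2$ and telescopes the reciprocals directly; both are one-line variants of the same idea and yield the identical constant $d/\epsilon$.
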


\begin{proof}
By the definition of variance matrix $\Vbh_t$, and the non-decreasing update of FedCAMS, i.e., $\hat{\vb}_{t-1} \leq \hat{\vb}_{t} = \max(\hat{\vb}_{t-1}, \vb_t, \epsilon) $, we have 
\begin{align}
    \sum_{t=1}^{T} \Big\|\Vbh_{t-1}^{-1/2} - \Vbh_{t}^{-1/2}\Big\|_1 & = \sum_{t=1}^{T} \bigg\|\frac{1}{\sqrt{\hat{\vb}_{t-1}}}-\frac{1}{\sqrt{\hat{\vb}_{t}}}\bigg\|_1 \notag\\
    & = \sum_{t=1}^{T} \bigg[ \bigg\|\frac{1}{\sqrt{\hat{\vb}_{t-1}}}\bigg\|_1 - \bigg\|\frac{1}{\sqrt{\hat{\vb}_{t}}}\bigg\|_1 \bigg]\notag\\
    & = \bigg\|\frac{1}{\sqrt{\hat{\vb}_{0}}}\bigg\|_1 - \bigg\|\frac{1}{\sqrt{\hat{\vb}_{T}}}\bigg\|_1 \notag\\
    & \leq \frac{d}{\sqrt{\epsilon}},
\end{align}
where the inequality holds by the definition of $\hat{\vb}_t \in \RR^d$. For the sum of the variance difference under $\ell_2$ norm, we have 
\begin{align}
    \sum_{t=1}^{T} \Big\|\Vbh_{t-1}^{-1/2} - \Vbh_{t}^{-1/2}\Big\|^2 & = \sum_{t=1}^{T} \bigg\|\frac{1}{\sqrt{\hat{\vb}_{t-1}}}-\frac{1}{\sqrt{\hat{\vb}_{t}}}\bigg\|^2 \notag\\
    & = \sum_{t=1}^T \bigg(\frac{1}{\sqrt{\hat{\vb}_{t-1}}}-\frac{1}{\sqrt{\hat{\vb}_{t}}} \bigg)^2\notag\\
    & \leq \sum_{t=1} ^T \bigg(\frac{1}{\hat{\vb}_{t-1}} - \frac{1}{\hat{\vb}_{t}} \bigg) \notag\\
    & \leq \frac{1}{\hat{\vb}_{0}} - \frac{1}{\hat{\vb}_{T}} \notag\\
    & \leq \frac{d}{\epsilon},
\end{align}
where the first inequality holds by the element-wise operation: $\forall \xb, \yb \in \RR^d, \mathbf{0} \leq \yb \leq \xb$, we have $(\xb-\yb)^2 \leq (\xb-\yb)(\xb+\yb) = \xb^2 - \yb^2$. It concludes the proof.
\end{proof}

\begin{lemma}\label{lm:error}
The compression error has the following absolute bound
\begin{align}
    \|\eb_t^i\|^2 \leq \frac{4q^2}{(1-q^2)^2} \eta_l^2 K^2 G^2, \quad \|\eb_t\|^2 \leq \frac{4q^2}{(1-q^2)^2} \eta_l^2 K^2 G^2.
\end{align}
\end{lemma}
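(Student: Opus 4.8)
The plan is to establish the bound on $\|\eb_t^i\|^2$ by turning the error-feedback update into a linear recursion with a contraction factor strictly below one, and then to deduce the bound on $\|\eb_t\|^2$ by convexity. First I would read off from the error-update rule in Algorithm~\ref{alg:compfedams} that the cumulative error evolves as $\eb_{t+1}^i = (\Delta_t^i + \eb_t^i) - \cC(\Delta_t^i + \eb_t^i)$ with $\eb_1^i = 0$. Hence $\eb_{t+1}^i$ is exactly the compression residual of the input $\Delta_t^i + \eb_t^i$, so Assumption~\ref{as:compressor} in its squared form (which holds deterministically for the top-$k$ and scaled-sign compressors, as noted in the remarks, and is the form appropriate for the absolute bound claimed here) gives $\|\eb_{t+1}^i\|^2 \le q^2\|\Delta_t^i + \eb_t^i\|^2$.

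Next I would control the ``fresh'' input $\Delta_t^i$. Unrolling the $K$ inner SGD steps $\xb_{t,k+1}^i = \xb_{t,k}^i - \eta_l\gb_{t,k}^i$ from $\xb_{t,0}^i = \xb_t$ yields $\Delta_t^i = \xb_{t,K}^i - \xb_t = -\eta_l\sum_{k=0}^{K-1}\gb_{t,k}^i$, whence the triangle inequality together with the bounded-gradient Assumption~\ref{as:bounded-g} gives $\|\Delta_t^i\| \le \eta_l\sum_{k=0}^{K-1}\|\gb_{t,k}^i\| \le \eta_l K G$, uniformly in $t$ and $i$.

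The crux is to split $\|\Delta_t^i+\eb_t^i\|^2$ by Young's inequality $\|\eb_t^i+\Delta_t^i\|^2\le(1+\gamma)\|\eb_t^i\|^2+(1+\gamma^{-1})\|\Delta_t^i\|^2$ and to pick $\gamma$ so that the multiplier of $\|\eb_t^i\|^2$ is a genuine contraction. Choosing $\gamma=\frac{1-q^2}{2q^2}$ makes $q^2(1+\gamma)=\frac{1+q^2}{2}<1$ and $q^2(1+\gamma^{-1})=\frac{q^2(1+q^2)}{1-q^2}$, so that $\|\eb_{t+1}^i\|^2 \le \frac{1+q^2}{2}\|\eb_t^i\|^2 + \frac{q^2(1+q^2)}{1-q^2}\eta_l^2K^2G^2$. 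Since $\eb_1^i=0$, unrolling this linear recursion and summing the geometric series with ratio $\frac{1+q^2}{2}$ (so the accumulated forcing term is at most $\big(1-\frac{1+q^2}{2}\big)^{-1}=\frac{2}{1-q^2}$ times $\frac{q^2(1+q^2)}{1-q^2}\eta_l^2K^2G^2$) gives $\|\eb_t^i\|^2 \le \frac{2q^2(1+q^2)}{(1-q^2)^2}\eta_l^2K^2G^2$, and the final relaxation $1+q^2\le 2$ yields the claimed $\frac{4q^2}{(1-q^2)^2}\eta_l^2K^2G^2$. For the averaged error $\eb_t=\frac1m\sum_{i=1}^m\eb_t^i$, convexity of $\|\cdot\|^2$ (Jensen) gives $\|\eb_t\|^2\le\frac1m\sum_{i=1}^m\|\eb_t^i\|^2$, which is bounded by the same quantity. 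The main obstacle is the calibration of $\gamma$: too large and the recursion fails to contract, too small and the forcing constant blows up; the factor $4$ is precisely the price paid by the relaxation $1+q^2\le2$ after the order-optimal choice of $\gamma$.
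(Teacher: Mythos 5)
Your proposal is correct and follows essentially the same route as the paper's proof: the same error-feedback recursion, the same Young's inequality with $\rho=\frac{1-q^2}{2q^2}$ giving contraction factor $\frac{1+q^2}{2}$, the same geometric-series unrolling with $\|\Delta_t^i\|\le\eta_l KG$, and Jensen for the averaged error. The only (immaterial) difference is that the paper relaxes $1+q^2\le 2$ before unrolling rather than after, and your explicit remark that the squared form of Assumption~\ref{as:compressor} is being used deterministically is a point the paper glosses over.
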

\begin{proof}
For all $t \in [T]$, by Assumption \ref{as:compressor} and Young's inequality, we have
\begin{align*}
    \|\eb_{t+1}^i\|^2 & = \|\Delta_t^i + \eb_t^i - \cC(\Delta_t^i + \eb_t^i)\|^2 \\
    & \leq q^2 \|\Delta_t^i + \eb_t^i\|^2 \\
    & \leq q^2 (1+\rho)\|\eb_t^i\|^2 + q^2 \bigg(1+\frac{1}{\rho}\bigg) \|\Delta_t^i\|^2 \\
    & \leq \frac{1+q^2}{2} \|\eb_t^i\|^2 + \frac{2q^2}{1-q^2}\|\Delta_t^i\|^2,
\end{align*}
where the last inequality holds by choosing $\rho = \frac{1-q^2}{2q^2}$. Thus we obtain the absolute bound for the error terms,
\begin{align}
    \|\eb_t^i\|^2 & \leq \frac{4q^2}{(1-q^2)^2} \eta_l^2 K^2 G^2, \notag\\
    \|\eb_t\|^2  & = \bigg\|\frac{1}{m} \sum_{i=1}^m \eb_t^i \bigg\|^2 \leq \frac{1}{m} \sum_{i=1}^m \|\eb_t^i\|^2 \leq \frac{4q^2}{(1-q^2)^2} \eta_l^2 K^2 G^2.
\end{align}
In the case of partial participation, suppose that client $i$ has the participated time set $\cT_i$, and we rewrite the $\cT_i = \{t_0, t_1,.., t_{p_i} \}$, where $t_0 < t_1 < \cdots < t_{p_i}$.
Since when client $i$ are not selected to participate local training, the error stay unchanged. Then for $t_s \in \cT_i$ we have 
\begin{align*}
    \|\eb_{t_{s+1}}^i\|^2 & \leq \frac{1+q^2}{2} \|\eb_{t_{s}}^i\|^2 + \frac{2q^2}{1-q^2}\|\Delta_{t_{s}}^i\|^2 \\
    & \leq \frac{1+q^2}{2} \bigg[\frac{1+q^2}{2} \|\eb_{t_{s-1}}^i\|^2 + \frac{2q^2}{1-q^2}\|\Delta_{t_{s-1}}^i\|^2 \bigg] + \frac{2q^2}{1-q^2}\|\Delta_{t_{s}}^i\|^2 \\
    & = \bigg(\frac{1+q^2}{2} \bigg)^2 \|\eb_{t_{s-1}}^i\|^2 + \frac{1+q^2}{2} \cdot \frac{2q^2}{1-q^2}\|\Delta_{t_{s-1}}^i\|^2 + \frac{2q^2}{1-q^2}\|\Delta_{t_{s}}^i\|^2,
\end{align*}
thus by the similar recursive approach, since we have $\eb_{t_0}^i = 0$, we have 
\begin{align*}
    \EE[\|\eb_{t_{s+1}}^i\|^2] & \leq \frac{2q^2}{1-q^2} \sum_{\tau=1}^s \bigg(\frac{1+q^2}{2}\bigg)^{s-\tau} \EE [\|\Delta_{t_\tau}^i\|^2].
\end{align*}

Thus we similarly obtain the absolute bound for the error terms,
\begin{align}
    \|\eb_t^i\|^2 & \leq \frac{4q^2}{(1-q^2)^2} \eta_l^2 K^2 G^2, \notag\\
    \|\eb_t\|^2  & = \bigg\|\frac{1}{m} \sum_{i=1}^m \eb_t^i \bigg\|^2 \leq \frac{1}{m} \sum_{i=1}^m \|\eb_t^i\|^2 \leq \frac{4q^2}{(1-q^2)^2} \eta_l^2 K^2 G^2.
\end{align}

It concludes the proof.

\end{proof}

\begin{lemma}\label{lm:gmv-bound}
Under Assumptions \ref{as:bounded-g} and \ref{as:compressor}, for FedAMS, we have $\|\nabla f(\xb)\|\leq G$, $\|\Delta_t\|\leq \eta_l KG$, $\|\mb_t\| \leq \eta_lKG$ and $\|\vb_t\| \leq \eta_l^2 K^2G^2$. For FedCAMS, we have $\|\nabla f(\xb)\|\leq G$, $\|\hat{\Delta}_t\|^2 \leq \frac{4(1+q^2)^3}{(1-q^2)^2} \eta_l^2 K^2G^2$, $\|\mb_t'\| \leq \eta_l KG$ and $\|\vb_t\| \leq \frac{4(1+q^2)^3}{(1-q^2)^2} \eta_l^2 K^2G^2$, where $\mb_t' = \beta_1 \mb_{t-1}' + (1-\beta_1) \Delta_t$.
\end{lemma}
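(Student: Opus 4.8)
The plan is to establish all bounds by unrolling the defining recursions and invoking the bounded-gradient assumption. For the FedAMS part, first note that $\nabla f(\xb)=\frac1m\sum_i\nabla F_i(\xb)$ with $\nabla F_i(\xb)=\EE_\xi[\nabla f_i(\xb,\xi)]$, so Jensen's inequality and Assumption~\ref{as:bounded-g} give $\|\nabla F_i(\xb)\|\le \EE\|\nabla f_i(\xb,\xi)\|\le G$, and averaging yields $\|\nabla f(\xb)\|\le G$. Next, the local update rule gives $\Delta_t^i=\xb_{t,K}^i-\xb_t=-\eta_l\sum_{k=0}^{K-1}\gb_{t,k}^i$, so the triangle inequality and $\|\gb_{t,k}^i\|\le G$ yield $\|\Delta_t^i\|\le\eta_l K G$, which is preserved by averaging, giving $\|\Delta_t\|\le\eta_l K G$. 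For the momentum I would unroll $\mb_t=(1-\beta_1)\sum_{s=1}^t\beta_1^{t-s}\Delta_s$ and use $(1-\beta_1)\sum_{s=1}^t\beta_1^{t-s}=1-\beta_1^t\le 1$ to conclude $\|\mb_t\|\le\eta_l K G$. Applying the same geometric-sum argument coordinate-wise to $\vb_t=(1-\beta_2)\sum_{s=1}^t\beta_2^{t-s}\Delta_s^2$ gives $\|\vb_t\|_1=(1-\beta_2)\sum_s\beta_2^{t-s}\|\Delta_s\|^2\le\eta_l^2K^2G^2$, and since $\vb_t$ is entrywise nonnegative, $\|\vb_t\|\le\|\vb_t\|_1$.

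For FedCAMS the gradient bound is identical, and the crux is controlling $\|\hat\Delta_t^i\|=\|\cC(\Delta_t^i+\eb_t^i)\|$. Here I would build directly on Lemma~\ref{lm:error}, which already gives $\|\eb_t^i\|\le\frac{2q}{1-q^2}\eta_l K G$. Combined with $\|\Delta_t^i\|\le\eta_l K G$, the triangle inequality gives $\|\Delta_t^i+\eb_t^i\|\le\frac{1+2q-q^2}{1-q^2}\eta_l K G$, and using the elementary inequality $(1+2q-q^2)^2\le 2(1+q^2)^2$ on $[0,1]$ we obtain $\|\Delta_t^i+\eb_t^i\|^2\le\frac{2(1+q^2)^2}{(1-q^2)^2}\eta_l^2K^2G^2$. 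Writing $\cC(\yb)=\yb-(\yb-\cC(\yb))$ and applying Young's inequality with parameter $\rho=q^2$ together with Assumption~\ref{as:compressor} (in the form $\|\yb-\cC(\yb)\|^2\le q^2\|\yb\|^2$) gives $\|\cC(\yb)\|^2\le(1+q^2)\|\yb\|^2+(1+q^{-2})q^2\|\yb\|^2=2(1+q^2)\|\yb\|^2$. Multiplying the two bounds yields $\|\hat\Delta_t^i\|^2\le\frac{4(1+q^2)^3}{(1-q^2)^2}\eta_l^2K^2G^2$, and convexity of $\|\cdot\|^2$ transfers this to the average $\hat\Delta_t=\frac1m\sum_i\hat\Delta_t^i$.

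Finally, the FedCAMS variance bound follows exactly as in FedAMS: unrolling $\vb_t=(1-\beta_2)\sum_{s=1}^t\beta_2^{t-s}\hat\Delta_s^2$ and taking the coordinate-wise geometric sum gives $\|\vb_t\|\le\|\vb_t\|_1\le\frac{4(1+q^2)^3}{(1-q^2)^2}\eta_l^2K^2G^2$. The main obstacle is the FedCAMS step: unlike the clean telescoping available for FedAMS, $\hat\Delta_t^i$ carries the accumulated compression error $\eb_t^i$, so any bound is only as strong as the a priori control of the error sequence, which is precisely why Lemma~\ref{lm:error} must be invoked first. Landing exactly on the stated constant $\frac{4(1+q^2)^3}{(1-q^2)^2}$ further depends on the two non-obvious choices highlighted above — the inequality $(1+2q-q^2)^2\le 2(1+q^2)^2$ and the Young's parameter $\rho=q^2$ — each of which should be verified rather than treated as routine.
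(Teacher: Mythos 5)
Your proposal is correct and follows essentially the same route as the paper: unroll the recursions for $\mb_t,\vb_t$ against the bound $\|\Delta_t\|\le\eta_l KG$, invoke Lemma~\ref{lm:error} for the compression error, and control $\|\cC(\Delta_t+\eb_t)\|^2$ by a Young-type step yielding the factor $2(1+q^2)$. The only cosmetic difference is that the paper bounds $\|\Delta_t+\eb_t\|^2\le 2\|\Delta_t\|^2+2\|\eb_t\|^2$ and uses the identity $(1-q^2)^2+4q^2=(1+q^2)^2$, whereas you use the triangle inequality followed by $(1+2q-q^2)^2\le 2(1+q^2)^2$ — which indeed holds, since $2(1+q^2)^2-(1+2q-q^2)^2=(q^2+2q-1)^2\ge 0$ — so both land on the same constant.
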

\begin{proof}
Since $f$ has $G$-bounded stochastic gradients, for any $\xb$ and $\xi$, we have $\|\nabla f(\xb, \xi)\| \leq G$, we have 
\begin{align*}
    \|\nabla f(\xb)\| = \|\EE_\xi \nabla f(\xb, \xi)\| \leq \EE_\xi \|\nabla f(\xb, \xi)\|\leq G.
\end{align*}
For FedAMS, the model difference $\Delta_t$, by definition, has the following formula, 
\begin{align*}
    \Delta_t = \xb_{t,K}^i - \xb_t = -\eta_l \sum_{k=1}^K \gb_{t,k}^i,
\end{align*}
therefore,
\begin{align*}
    \|\Delta_t\| \leq \eta_l K \|\gb_{t,k}^i \| \leq \eta_l KG.
\end{align*}
Thus the bound for momentum $\mb_t$ and variance $\vb_t$ has the formula of 
\begin{align*}
    \|\mb_t\| & = (1-\beta_1) \sum_{\tau=1}^t \beta_1^{t-\tau} \|\Delta_t\| \leq \eta_l K G, \\
    \|\vb_t\| & = (1-\beta_2) \sum_{\tau=1}^t \beta_2^{t-\tau} \|\Delta_t\|^2 \leq \eta_l^2 K^2 G^2.
\end{align*}
For the compressed version, FedCAMS, we have

\begin{align*}
    \|\hat{\Delta}_t^i\|^2 & \leq \|\cC(\Delta_t^i + \eb_t^i)\|^2 \\
    & \leq \|\cC(\Delta_t^i + \eb_t^i)-(\Delta_t^i + \eb_t^i)+(\Delta_t^i + \eb_t^i)\|^2 \\
    & \leq 2(q^2+1) \|\Delta_t^i + \eb_t^i\|^2 \\
    & \leq 4(q^2+1) [\|\Delta_t^i\|^2 + \|\eb_t^i\|^2] \\
    & \leq \frac{4(1+q^2)^3}{(1-q^2)^2} \eta_l^2 K^2 G^2,
\end{align*}
then we have 
\begin{align*}
    \|\hat{\Delta}_t\|^2 = \bigg\|\frac{1}{m} \sum_{i=1}^m \Delta_t^i \bigg\|^2 \leq \frac{1}{m} \sum_{i=1}^m \|\Delta_t^i \|^2 \leq \frac{4(1+q^2)^3}{(1-q^2)^2} \eta_l^2 K^2 G^2
\end{align*}

where the third inequality holds due to Assumption \ref{as:compressor},and the last inequality holds due to Lemma \ref{lm:error}. The virtual momentum sequence $\|\mb_t'\|$ has the same bound as $\mb_t$ of FedAMS. For the variance sequence of FedCAMS, we have
\begin{align*}
    \|\vb_t\| = (1-\beta_2) \sum_{\tau=1}^t \beta_2^{t-\tau} \|\hat{\Delta}_t\|^2 \leq \frac{4(1+q^2)^3}{(1-q^2)^2} \eta_l^2 K^2 G^2.
\end{align*}
This concludes the proof.
\end{proof}

\begin{lemma} \label{lm:EDelta}
The global model difference $\Delta_t = \sum_{i=1}^m \Delta_t^i$ in full participation cases satisfy 
\begin{align*}
     \EE [\|\Delta_t\|^2] & \leq \frac{K \eta_l^2}{m} \sigma_l^2+\frac{\eta_l^2}{m^2} \EE \bigg[\bigg\| \sum_{i=1}^m \sum_{k=0}^{K-1} \nabla F_i(\xb_{t,k}^i) \bigg\|^2\bigg].
\end{align*}
\end{lemma}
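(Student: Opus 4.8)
The plan is to start from the explicit form of the local updates and reduce the claim to a variance decomposition for a sum of stochastic gradients. Since each local model obeys $\xb_{t,k+1}^i = \xb_{t,k}^i - \eta_l \gb_{t,k}^i$ with $\xb_{t,0}^i = \xb_t$, telescoping the inner loop gives $\Delta_t^i = \xb_{t,K}^i - \xb_t = -\eta_l\sum_{k=0}^{K-1}\gb_{t,k}^i$, and hence (with full participation $\Delta_t = \frac{1}{m}\sum_{i=1}^m \Delta_t^i$) we get $\Delta_t = -\frac{\eta_l}{m}\sum_{i=1}^m\sum_{k=0}^{K-1}\gb_{t,k}^i$. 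Taking norms, I would write $\EE[\|\Delta_t\|^2] = \frac{\eta_l^2}{m^2}\EE\big[\big\|\sum_{i=1}^m\sum_{k=0}^{K-1}\gb_{t,k}^i\big\|^2\big]$, so it suffices to control the second moment of the double sum of stochastic gradients and then multiply through by $\eta_l^2/m^2$.

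Next I would split each stochastic gradient into its conditional mean and noise: define $\zeta_{t,k}^i := \gb_{t,k}^i - \nabla F_i(\xb_{t,k}^i)$, so that $\EE[\zeta_{t,k}^i \mid \mathcal{F}_{t,k}] = 0$, where $\mathcal{F}_{t,k}$ is the filtration generated by all randomness used before local step $k$ of round $t$ (so that $\xb_{t,k}^i$ is $\mathcal{F}_{t,k}$-measurable). Writing $\sum_{i,k}\gb_{t,k}^i = \sum_{i,k}\zeta_{t,k}^i + \sum_{i,k}\nabla F_i(\xb_{t,k}^i)$ and expanding the square, the goal is the identity
\[
\EE\Big[\Big\|\textstyle\sum_{i,k}\gb_{t,k}^i\Big\|^2\Big] = \EE\Big[\Big\|\textstyle\sum_{i,k}\zeta_{t,k}^i\Big\|^2\Big] + \EE\Big[\Big\|\textstyle\sum_{i,k}\nabla F_i(\xb_{t,k}^i)\Big\|^2\Big].
\]
The first term on the right telescopes to $\sum_{i,k}\EE[\|\zeta_{t,k}^i\|^2] \le mK\sigma_l^2$ by Assumption~\ref{as:bounded-v}, and substituting back yields exactly $\frac{K\eta_l^2}{m}\sigma_l^2 + \frac{\eta_l^2}{m^2}\EE[\|\sum_{i,k}\nabla F_i(\xb_{t,k}^i)\|^2]$, the claimed bound.

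The delicate part, and the main obstacle, is rigorously justifying the two cancellations in the displayed identity, since the trajectory $\xb_{t,k}^i$ (and hence $\nabla F_i(\xb_{t,k}^i)$) depends on the \emph{earlier} noise along the same client. I would handle the mutual orthogonality of the noise increments by conditioning: for distinct pairs $(i,k)\neq(j,l)$, one increment can always be placed last in a causal ordering — across clients the trajectories are independent because the samples $\{\xi_{t,k}^i\}$ are drawn independently, and within a client the later-step noise is conditionally mean-zero given the earlier steps — so $\EE[\langle \zeta_{t,k}^i,\zeta_{t,l}^j\rangle]=0$ by the tower property, giving the sum-of-variances count of $mK$ terms. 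The cross term $\EE[\langle \zeta_{t,k}^i,\nabla F_j(\xb_{t,l}^j)\rangle]$ is treated the same way whenever $\nabla F_j(\xb_{t,l}^j)$ is measurable before $\zeta_{t,k}^i$ is drawn; the genuinely subtle case is $j=i$ with $l>k$, where the later gradient depends on the current noise, and this is precisely the point where care is needed. I would therefore organize the whole argument around a single filtration that exposes the noise increments one local step at a time and apply the tower property at each step, which keeps the bookkeeping clean and makes the independent-variance count transparent.
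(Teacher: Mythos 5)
Your proposal follows exactly the same route as the paper's own proof: write $\Delta_t=-\frac{\eta_l}{m}\sum_{i=1}^m\sum_{k=0}^{K-1}\gb_{t,k}^i$, split each stochastic gradient as $\gb_{t,k}^i=\nabla F_i(\xb_{t,k}^i)+\zeta_{t,k}^i$, claim the bias--variance identity
$\EE\big[\|\sum_{i,k}\gb_{t,k}^i\|^2\big]=\EE\big[\|\sum_{i,k}\zeta_{t,k}^i\|^2\big]+\EE\big[\|\sum_{i,k}\nabla F_i(\xb_{t,k}^i)\|^2\big]$,
and bound the noise term by $mK\sigma_l^2$ using pairwise orthogonality of the $\zeta_{t,k}^i$ (which does hold, by conditioning on the later index or by cross-client independence). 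The paper states this identity in one line without comment.

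You correctly isolate the only delicate term, namely $\EE[\langle\zeta_{t,k}^i,\nabla F_i(\xb_{t,l}^i)\rangle]$ with $l>k$ on the same client, but your proposed resolution does not work: no choice of filtration makes this term vanish by the tower property, because $\xb_{t,l}^i$ is an explicit function of $\xi_{t,k}^i$ (already at $l=k+1$, $\xb_{t,k+1}^i=\xb_{t,k}^i-\eta_l(\nabla F_i(\xb_{t,k}^i)+\zeta_{t,k}^i)$, so to first order the cross term is $-\eta_l\,\EE[(\zeta_{t,k}^i)^\top\nabla^2F_i(\xb_{t,k}^i)\,\zeta_{t,k}^i]$, which is generically nonzero for nonconvex $F_i$). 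To be fair, the paper's proof asserts the same equality with no justification, so this is a gap you share with, and at least diagnose better than, the original. If you want a fully rigorous version you must either pay a factor of $2$ via $\|a+b\|^2\le2\|a\|^2+2\|b\|^2$ (which propagates into the constants of Theorem~\ref{thm:full-noncomp}) or restructure the decomposition so that the "drift'' part is predictable with respect to the filtration in which the noise is a martingale difference; as written, the exact identity is not available.
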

\begin{proof}
For $\EE[\|\Delta_t\|^2]$ in full participation case, we have
\begin{align}\label{eq:EDelta-1}
    \EE [\|\Delta_t\|^2] & = \EE \bigg[\bigg\|\frac{1}{m} \sum_{i=1}^m \sum_{k=0}^{K-1} \eta_l\gb_{t,k}^i \bigg\|^2\bigg] \notag\\
    & = \frac{\eta_l^2}{m^2} \EE \bigg[\bigg\| \sum_{i=1}^m \sum_{k=0}^{K-1}\gb_{t,k}^i \bigg\|^2\bigg] \notag\\
    & = \frac{\eta_l^2}{m^2} \EE \bigg[\bigg\| \sum_{i=1}^m \sum_{k=0}^{K-1}(\gb_{t,k}^i-\nabla F_i(\xb_{t,k}^i)) \bigg\|^2\bigg]+\frac{\eta_l^2}{m^2} \EE \bigg[\bigg\| \sum_{i=1}^m \sum_{k=0}^{K-1} \nabla F_i(\xb_{t,k}^i) \bigg\|^2\bigg] \notag\\
    & \leq \frac{K \eta_l^2}{m} \sigma_l^2+\frac{\eta_l^2}{m^2} \EE \bigg[\bigg\| \sum_{i=1}^m \sum_{k=0}^{K-1} \nabla F_i(\xb_{t,k}^i) \bigg\|^2\bigg],
\end{align}
where the inequality holds by Assumption \ref{as:bounded-g}. This concludes the proof.
\end{proof}

\begin{lemma} \label{lm:EDelta'}
The global model difference $\Delta_t = \sum_{i \in\cS_t} \Delta_t^i$ in partial participation cases satisfy 
\begin{align*}
    \EE[\|\Delta_t\|^2] & = \frac{K \eta_l^2}{n} \sigma_l^2 + \frac{\eta_l^2(m-n)}{mn(m-1)}[15mK^3 L^3 \eta_l^2(\sigma_l^2 + 6K\sigma_g^2) + 90mK^4L^2\eta_l^2+3mK^2 \|\nabla f(\xb_t)\|^2 \notag\\
    &\quad + 3mK^2\sigma_g^2]+ \frac{\eta_l^2 (n-1)}{mn(m-1)} \EE\bigg[ \bigg\|\sum_{i=1}^m \sum_{k=0}^{K-1}\nabla F_i (\xb_{t,k}^i)\bigg\|^2\bigg]. 
\end{align*} 
\end{lemma}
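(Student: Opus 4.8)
The plan is to separate the two sources of randomness---client sampling and stochastic gradient noise---and handle them in turn. Recall from \eqref{eq: Delta-unbiased} that, conditioned on the local computations, $\Delta_t=\frac1n\sum_{i\in\cS_t}\Delta_t^i$ is an unbiased estimate of $\Bar{\Delta}_t=\frac1m\sum_{i=1}^m\Delta_t^i$. First I would introduce the sampling indicators $Z_i=\mathbbm{1}[i\in\cS_t]$, which under without-replacement sampling satisfy $\EE[Z_i]=n/m$ and $\EE[Z_iZ_j]=\tfrac{n(n-1)}{m(m-1)}$ for $i\neq j$, and which are independent of the local updates $\{\Delta_t^j\}$. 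Expanding $\|\Delta_t\|^2=\frac1{n^2}\big\|\sum_i Z_i\Delta_t^i\big\|^2$, taking the sampling expectation first, using $Z_i^2=Z_i$ and $\sum_{i\neq j}\langle\Delta_t^i,\Delta_t^j\rangle=\big\|\sum_i\Delta_t^i\big\|^2-\sum_i\|\Delta_t^i\|^2$, I expect the exact identity
\begin{align*}
\EE_{\cS_t}[\|\Delta_t\|^2]=\frac{m-n}{mn(m-1)}\sum_{i=1}^m\|\Delta_t^i\|^2+\frac{n-1}{mn(m-1)}\bigg\|\sum_{i=1}^m\Delta_t^i\bigg\|^2,
\end{align*}
which is equivalently the finite-population variance decomposition $\EE\|\Delta_t\|^2=\EE\|\Delta_t-\Bar{\Delta}_t\|^2+\EE\|\Bar{\Delta}_t\|^2$.

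Next I would take expectation over the stochastic gradients in both summands. Writing $\Delta_t^i=-\eta_l\sum_{k=0}^{K-1}\gb_{t,k}^i$ and splitting $\gb_{t,k}^i=(\gb_{t,k}^i-\nabla F_i(\xb_{t,k}^i))+\nabla F_i(\xb_{t,k}^i)$, the zero-mean noise terms decouple across $k$ by conditional independence, so each client contributes at most $K\sigma_l^2$ of noise by Assumption~\ref{as:bounded-v}. The crucial bookkeeping is that the $\sigma_l^2$ noise pieces from the two summands carry coefficients $\frac{(m-n)K\eta_l^2}{n(m-1)}$ and $\frac{(n-1)K\eta_l^2}{n(m-1)}$, which add to exactly $\frac{K\eta_l^2}{n}\sigma_l^2$; this is what produces the clean leading noise term in the statement and leaves only the deterministic ``signal'' norms $\big\|\sum_k\nabla F_i(\xb_{t,k}^i)\big\|^2$ and $\big\|\sum_i\sum_k\nabla F_i(\xb_{t,k}^i)\big\|^2$ in the brackets.

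For the signal term in the first summand I would further decompose $\nabla F_i(\xb_{t,k}^i)=[\nabla F_i(\xb_{t,k}^i)-\nabla F_i(\xb_t)]+[\nabla F_i(\xb_t)-\nabla f(\xb_t)]+\nabla f(\xb_t)$ and apply $\|a+b+c\|^2\le 3(\|a\|^2+\|b\|^2+\|c\|^2)$. The first piece is controlled by $L$-smoothness (Assumption~\ref{as:smooth}) together with the client-drift bound of Lemma~\ref{lm:xikt-xt} for $\frac1m\sum_i\sum_k\EE\|\xb_{t,k}^i-\xb_t\|^2$, which supplies the $(\sigma_l^2+6K\sigma_g^2)$ and $\|\nabla f(\xb_t)\|^2$ contributions with the constants shown in the bracket; the second piece is bounded by the global-variance assumption, giving the $3mK^2\sigma_g^2$ term; and the third gives $3mK^2\|\nabla f(\xb_t)\|^2$ (so that the two $\|\nabla f(\xb_t)\|^2$ contributions combine into the single factor $(90mK^4L^2\eta_l^2+3mK^2)\|\nabla f(\xb_t)\|^2$). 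Multiplying by $\frac{\eta_l^2(m-n)}{mn(m-1)}$ reproduces the bracketed term, while the signal part of the second summand is retained as $\frac{\eta_l^2(n-1)}{mn(m-1)}\EE\big\|\sum_i\sum_k\nabla F_i(\xb_{t,k}^i)\big\|^2$, matching the final term. The main obstacle is the careful accounting in the sampling decomposition---pinning down the without-replacement covariances, the resulting coefficients $\frac{m-n}{mn(m-1)}$ and $\frac{n-1}{mn(m-1)}$, and verifying that the two stray $\sigma_l^2$ contributions recombine into $\frac{K\eta_l^2}{n}\sigma_l^2$; everything afterward is a routine application of smoothness, the drift lemma, and Cauchy--Schwarz/Young inequalities, exactly paralleling the full-participation argument of Lemma~\ref{lm:EDelta}.
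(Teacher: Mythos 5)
Your proposal is correct and follows essentially the same route as the paper: the without-replacement indicator covariances $\PP\{i\in\cS_t\}=n/m$, $\PP\{i,j\in\cS_t\}=\tfrac{n(n-1)}{m(m-1)}$ yield the same $\tfrac{m-n}{mn(m-1)}$ / $\tfrac{n-1}{mn(m-1)}$ decomposition, and the per-client signal term is bounded by the identical three-way split plus Lemma~\ref{lm:xikt-xt}. The only cosmetic difference is ordering — the paper peels off the $\sigma_l^2$ noise first (getting $\tfrac{K\eta_l^2}{n}\sigma_l^2$ in one step) and applies the sampling decomposition only to the signal, whereas you decompose over the sampling first and then verify that the two noise coefficients recombine to $\tfrac{K\eta_l^2}{n}\sigma_l^2$, which you do correctly.
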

\begin{proof}
We have
\begin{align}\label{eq:c8}
    \EE[\|\Delta_t\|^2] & = \EE\bigg[\bigg\|\frac{1}{n} \sum_{i \in \cS_t} \Delta_t^i \bigg\|^2\bigg] \notag\\
    & = \frac{1}{n^2} \EE\bigg[\bigg\|\sum_{i=1}^m \II\{i\in\cS_t\} \Delta_t^i \bigg\|^2\bigg] \notag\\
    & = \frac{\eta_l^2}{n^2} \EE\bigg[\bigg\| \sum_{i=1}^m \II\{i\in\cS_t\} \sum_{k=0}^{K-1}[\gb_{t,k}^i - \nabla F_i (\xb_{t,k}^i)] \bigg\|^2 + \bigg\| \sum_{i=1}^m \II\{i\in\cS_t\} \sum_{k=0}^{K-1} \nabla F_i (\xb_{t,k}^i) \bigg\|^2\bigg] \notag\\
    & = \frac{\eta_l^2}{n^2} \EE\bigg[\bigg\| \sum_{i=1}^m \PP\{i\in\cS_t\} \sum_{k=0}^{K-1}[\gb_{t,k}^i - \nabla F_i (\xb_{t,k}^i)] \bigg\|^2 + \bigg\| \sum_{i=1}^m \PP\{i\in\cS_t\} \sum_{k=0}^{K-1} \nabla F_i (\xb_{t,k}^i) \bigg\|^2\bigg] \notag\\
    & = \frac{\eta_l^2}{mn} \EE\bigg[\bigg\| \sum_{i=1}^m \sum_{k=0}^{K-1}[\gb_{t,k}^i - \nabla F_i (\xb_{t,k}^i)] \bigg\|^2\bigg] + \frac{\eta_l^2}{n^2}\EE \bigg[\bigg\| \sum_{i=1}^m \PP\{i\in\cS_t\} \sum_{k=0}^{K-1} \nabla F_i (\xb_{t,k}^i) \bigg\|^2\bigg] \notag\\    
    & \leq \frac{K \eta_l^2}{n} \sigma_l^2 + \frac{\eta_l^2}{n^2}\EE \bigg[\bigg\| \sum_{i=1}^m \PP\{i\in\cS_t\} \sum_{k=0}^{K-1} \nabla F_i (\xb_{t,k}^i) \bigg\|^2\bigg],
\end{align}
where the fifth equation holds due to $\PP\{i\in\cS_t\} = \frac{n}{m}$. Note that we have 
\begin{align}
    \bigg\| \sum_{i=1}^m \sum_{k=0}^{K-1} \nabla F_i (\xb_{t,k}^i) \bigg\|^2 & = \sum_{i=1}^m \bigg\|\sum_{k=0}^{K-1} \nabla F_i (\xb_{t,k}^i)\bigg\|^2 + \sum_{i \neq j}\bigg\langle \sum_{k=0}^{K-1} \nabla F_i (\xb_{t,k}^i), \sum_{k=0}^{K-1} \nabla F_j (\xb_{t,k}^j)\bigg\rangle \notag\\
    & = \sum_{i=1}^m m \bigg\|\sum_{k=0}^{K-1}\nabla F_i (\xb_{t,k}^i)\bigg\|^2 -\frac{1}{2} \sum_{i \neq j} \bigg\|\sum_{k=0}^{K-1} \nabla F_i (\xb_{t,k}^i) - \sum_{k=0}^{K-1} \nabla F_j (\xb_{t,k}^j)\bigg\|^2,
\end{align}
where the second equation holds due to $\|\sum_{i=1}^m \xb_i\|^2 = \sum_{i=1}^m m \|\xb_i\|^2 - \frac{1}{2} \sum_{i \neq j} \|\xb_i- \xb_j\|^2$. By the sampling strategy (without replacement), we have $\PP\{i\in\cS_t\} = \frac{n}{m}$ and $\PP\{i,j\in\cS_t\} = \frac{n(n-1)}{m(m-1)}$, thus we have 
\begin{align}\label{eq:i-in-St-norm-noncomp}
    & \bigg\| \sum_{i=1}^m \sum_{k=0}^{K-1} \PP\{i\in\cS_t\} \nabla F_i (\xb_{t,k}^i) \bigg\|^2 \notag\\
    & = \sum_{i=1}^m \PP\{i\in\cS_t\}\bigg \|\sum_{k=0}^{K-1} \nabla F_i (\xb_{t,k}^i)\bigg\|^2 + \sum_{i \neq j} \PP\{i,j\in\cS_t\} \bigg\langle \sum_{k=0}^{K-1} \nabla F_i (\xb_{t,k}^i), \sum_{k=0}^{K-1} \nabla F_j (\xb_{t,k}^j)\bigg\rangle \notag\\
    & = \frac{n}{m} \sum_{i=1}^m \bigg\|\sum_{k=0}^{K-1}\nabla F_i (\xb_{t,k}^i)\bigg\|^2 + \frac{n(n-1)}{m(m-1)} \sum_{i \neq j}\bigg\langle \sum_{k=0}^{K-1} \nabla F_i (\xb_{t,k}^i), \sum_{k=0}^{K-1} \nabla F_j (\xb_{t,k}^j)\bigg\rangle \notag\\
    & = \frac{n^2}{m} \sum_{i=1}^m \bigg\|\sum_{k=0}^{K-1} \nabla F_i (\xb_{t,k}^i)\bigg\|^2 - \frac{n(n-1)}{2m(m-1)} \sum_{i \neq j} \bigg\|\sum_{k=0}^{K-1}\nabla F_i (\xb_{t,k}^i) - \sum_{k=0}^{K-1} \nabla F_j (\xb_{t,k}^j)\bigg\|^2 \notag\\
    & = \frac{n(m-n)}{m(m-1)} \sum_{i=1}^m \bigg\|\sum_{k=0}^{K-1} \nabla F_i (\xb_{t,k}^i)\bigg\|^2 + \frac{n(n-1)}{m(m-1)} \bigg\|\sum_{i=1}^m \sum_{k=0}^{K-1}\nabla F_i (\xb_{t,k}^i)\bigg\|^2, \notag\\
\end{align}
where the third equation holds due to $\langle\xb, \yb\rangle = \frac{1}{2}[\|\xb\|^2 + \|\yb\|^2 - \|\xb-\yb\|^2]$ and the last equation holds due to $\frac{1}{2}\sum_{i\neq j}\|\xb_i -\xb_j \|^2 = \sum_{i=1}^m m \|\xb_i\|^2 - \|\sum_{i=1}^m \xb_i\|^2$. Therefore, for the last term in \eqref{eq:c8}, we have
\begin{align}\label{eq:EDelta-2'}
    \EE[\|\Delta_t\|^2] & = \frac{K \eta_l^2}{n} \sigma_l^2 + \frac{\eta_l^2(m-n)}{mn(m-1)} \sum_{i=1}^m \EE\bigg[\bigg\|\sum_{k=0}^{K-1} \nabla F_i (\xb_{t,k}^i)\bigg\|^2 \bigg] + \frac{\eta_l^2 (n-1)}{mn(m-1)} \EE\bigg[ \bigg\|\sum_{i=1}^m \sum_{k=0}^{K-1}\nabla F_i (\xb_{t,k}^i)\bigg\|^2\bigg]. 
\end{align} 
The second term in \eqref{eq:EDelta-2'} is bounded partially following \citet{reddi2020adaptive},
\begin{align} \label{eq:EDelta-3'}
    \sum_{i=1}^m \bigg\|\sum_{k=0}^{K-1} \nabla F_i(\xb_{t,k}^i)\bigg\|^2 & = \sum_{i=1}^m \EE \bigg\|\sum_{k=0}^{K-1} [\nabla F_i(\xb_{t,k}^i)- \nabla F_i(\xb_t)+ \nabla F_i(\xb_t) -\nabla f(\xb_t) + \nabla f(\xb_t)]\bigg\|^2 \notag\\
    & \leq 3 \sum_{i=1}^m \EE \bigg\|\sum_{k=0}^{K-1} [\nabla F_i(\xb_{t,k}^i)- \nabla F_i(\xb_t)] \bigg\|^2 + 3 m K^2\sigma_g^2 + 3mK^2\|\nabla f(\xb_t)\|^2 \notag\\
    & \leq 3K L^2 \sum_{i=1}^m\sum_{k=0}^{K-1} \EE[\|\xb_{t,k}^i-\xb_t\|^2] + 3 m K^2\sigma_g^2 + 3mK^2\|\nabla f(\xb_t)\|^2 \notag\\
    & \leq 15 m K^3 L^3 \eta_l^2(\sigma_l^2+ 6K \sigma_g^2)+ (90mK^4L^2\eta_l^2+3mK^2)\|\nabla f(\xb_t)\|^2 +3mK^2 \sigma_g^2,
\end{align}
where the last inequality holds by applying Lemma \ref{lm:xikt-xt} (also follows from \citet{reddi2020adaptive}).
Substituting \eqref{eq:EDelta-3'} into \eqref{eq:EDelta-2'}, this concludes the proof.
\end{proof}

\begin{lemma}\label{lm:mE-bound}
Under Assumptions \ref{as:smooth}-\ref{as:bounded-v} and Assumption \ref{as:compressor}, for the momentum sequence $\mb_t = (1-\beta_1) \sum_{\tau=1}^t \beta_1^{t-\tau}\Delta_\tau$ and accumulated error sequence $\bGamma_t = (1-\beta_1) \sum_{\tau=1}^t \beta_1^{t-\tau} \eb_\tau$ in full participation settings, we have 
\begin{align*}
    \sum_{t=1}^T \EE [\|\mb_t\|^2] \leq \frac{T K \eta_l^2}{m} \sigma_l^2 + \frac{\eta_l^2}{m^2} \sum_{t=1}^T \EE \bigg[\bigg\|\sum_{i=1}^m \sum_{k=0}^{K-1}\nabla F_i(\xb_{t,k}^i ) \bigg\|^2\bigg],
\end{align*}
and
\begin{align*}
    \sum_{t=1}^T \EE [\|\bGamma_t\|^2] 
    & \leq \frac{4Tq^2}{(1-q^2)^2} \frac{K \eta_l^2}{m} \sigma_l^2 + \frac{\eta_l^2}{m^2} \frac{4q^2}{(1-q^2)^2} \sum_{t=1}^T \EE \bigg[\bigg\|\sum_{i=1}^m \sum_{k=0}^{K-1}\nabla F_i(\xb_{t,k}^i ) \bigg\|^2\bigg].
\end{align*}
\end{lemma}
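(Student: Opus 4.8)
The plan is to treat the two exponential moving averages $\mb_t$ and $\bGamma_t$ in parallel: in each case I will first reduce $\sum_{t=1}^T\EE[\|\cdot\|^2]$ to a sum of squared norms of the underlying per-step quantities ($\Delta_\tau$ or $\eb_\tau$) by exploiting the geometric weights, and then invoke Lemma~\ref{lm:EDelta} and the error-feedback recurrence from Lemma~\ref{lm:error} to finish.

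For the momentum term I would view $\mb_t=(1-\beta_1)\sum_{\tau=1}^t\beta_1^{t-\tau}\Delta_\tau$ as a weighted combination whose weights $(1-\beta_1)\beta_1^{t-\tau}$ sum to $1-\beta_1^t\leq 1$. The Cauchy--Schwarz/Jensen inequality $\|\sum_\tau w_\tau x_\tau\|^2\leq(\sum_\tau w_\tau)\sum_\tau w_\tau\|x_\tau\|^2$ then gives $\|\mb_t\|^2\leq(1-\beta_1)\sum_{\tau=1}^t\beta_1^{t-\tau}\|\Delta_\tau\|^2$. Summing over $t=1,\dots,T$ and interchanging the order of summation, the inner series $\sum_{t=\tau}^T\beta_1^{t-\tau}$ is bounded by $1/(1-\beta_1)$, which cancels the prefactor and yields $\sum_{t=1}^T\|\mb_t\|^2\leq\sum_{t=1}^T\|\Delta_t\|^2$. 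Taking expectations and substituting the bound on $\EE[\|\Delta_t\|^2]$ from Lemma~\ref{lm:EDelta} term by term produces exactly the claimed estimate for $\sum_t\EE[\|\mb_t\|^2]$.

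For $\bGamma_t=(1-\beta_1)\sum_{\tau=1}^t\beta_1^{t-\tau}\eb_\tau$ the identical averaging argument gives $\sum_{t=1}^T\|\bGamma_t\|^2\leq\sum_{t=1}^T\|\eb_t\|^2$, so it remains to control the accumulated compression error $\sum_t\EE[\|\eb_t\|^2]$. Here I would reuse the one-step contraction established inside the proof of Lemma~\ref{lm:error}, namely $\|\eb_{t+1}^i\|^2\leq\frac{1+q^2}{2}\|\eb_t^i\|^2+\frac{2q^2}{1-q^2}\|\Delta_t^i\|^2$; summing over $t$ and using $\eb_1^i=0$ lets me absorb the $\frac{1+q^2}{2}$ factor into the left-hand side, giving the geometric bound $\sum_t\|\eb_t^i\|^2\leq\frac{4q^2}{(1-q^2)^2}\sum_t\|\Delta_t^i\|^2$ per client. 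Averaging over clients and inserting the per-step variance decomposition of $\Delta_t$ as in Lemma~\ref{lm:EDelta} then recovers the stated bound, now carrying the extra $\frac{4q^2}{(1-q^2)^2}$ factor.

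I expect the second part to be the main obstacle. Unlike the momentum, the error sequence does not telescope: the contraction factor must be extracted from the biased error-feedback recurrence, and the delicate point is the passage from the per-client error estimate to the aggregated error $\eb_t=\frac{1}{m}\sum_i\eb_t^i$ while retaining the variance-reduced $\sigma_l^2/m$ and $1/m^2$ structure of the target. Care is also needed to check that the constant $\frac{4q^2}{(1-q^2)^2}$ produced by rearranging the geometric recurrence matches the stated coefficient, which in turn depends on the choice $\rho=\frac{1-q^2}{2q^2}$ in the Young's inequality step of Lemma~\ref{lm:error}.
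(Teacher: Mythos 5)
Your proposal follows essentially the same route as the paper: the momentum bound is obtained by exactly the same Cauchy--Schwarz argument on the geometric weights followed by Lemma~\ref{lm:EDelta}, and the error bound extracts the same $\frac{4q^2}{(1-q^2)^2}$ constant from the error-feedback contraction of Lemma~\ref{lm:error}, the only cosmetic difference being that you sum the one-step recursion over $t$ and absorb the contraction factor, whereas the paper unrolls it into a geometric convolution and bounds the resulting double sum. The ``delicate point'' you flag is real but is not resolved any more carefully in the paper's own proof: after reducing to $\frac{1}{m}\sum_i\EE[\|\eb_\tau^i\|^2]$, the paper simply substitutes the Lemma~\ref{lm:EDelta} bound for the \emph{averaged} $\Delta_\tau$ in place of the per-client $\EE[\|\Delta_\tau^i\|^2]$, which is precisely the leap you anticipated having to make.
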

\begin{proof} 
By the updating rule, we have
\begin{align}
    \EE[\|\mb_{t}\|^{2}] 
    & = \EE\bigg[\|(1-\beta_1) \sum_{\tau=1}^{t} \beta_{1}^{t-\tau} \Delta_{\tau}\|^{2}\bigg] \notag\\
    & \leq (1-\beta_1)^2 \sum_{i=1}^{d} \EE\bigg[\bigg(\sum_{\tau=1}^{t} \beta_{1}^{t-\tau} \Delta_{\tau, i}\bigg)^{2}\bigg] \notag\\
    & \leq (1-\beta_1)^{2} \sum_{i=1}^{d} \EE\bigg[\bigg(\sum_{\tau=1}^{t} \beta_{1}^{t-\tau}\bigg)\bigg(\sum_{\tau=1}^{t} \beta_{1}^{t-\tau} \Delta_{\tau, i}^{2}\bigg)\bigg] \notag \\
    & \leq (1-\beta_1) \sum_{\tau=1}^{t} \beta_{1}^{t-\tau} \EE[\|\Delta_{\tau}\|^{2}] \notag\\
    & \leq \frac{K \eta_l^2}{m} \sigma_l^2 + \frac{\eta_l^2}{m^2} (1-\beta_1)\sum_{\tau=1}^t \beta_1^{t-\tau} \EE \bigg[\bigg\|\sum_{i=1}^m \sum_{k=0}^{K-1}\nabla F_i(\xb_{t,k}^i ) \bigg\|^2\bigg],
\end{align}
where the second inequality holds by applying Cauchy-Schwarz inequality, and the third inequality holds by summation of series. The last inequality holds by Lemma \ref{lm:EDelta}.
Hence summing over $t=1,\cdots, T$, we have
\begin{align}
    \sum_{t=1}^T \EE [\|\mb_t\|^2] \leq \frac{T K \eta_l^2}{m} \sigma_l^2 + \frac{\eta_l^2}{m^2} \sum_{t=1}^T \EE \bigg[\bigg\|\sum_{i=1}^m \sum_{k=0}^{K-1}\nabla F_i(\xb_{t,k}^i ) \bigg\|^2\bigg].
\end{align}

For the compression error $\eb_t$, by Assumption \ref{as:compressor} and \ref{as:compressor-adapt}, we have 
\begin{align}
    \|\eb_{t+1}\| & = \bigg\|\frac{1}{m} \sum_{i=1}^m \eb_{t+1}^i \bigg\| \notag\\
    & = \bigg\|\frac{1}{m} \sum_{i=1}^m [\Delta_t^i + \eb_t^i] - \frac{1}{m} \sum_{i=1}^m \cC(\Delta_t^i + \eb_t^i)\bigg\| \notag\\
    & \leq \bigg\|\frac{1}{m} \sum_{i=1}^m [\Delta_t^i + \eb_t^i] - \cC\bigg(\frac{1}{m} \sum_{i=1}^m [\Delta_t^i + \eb_t^i] \bigg) \bigg\| + \bigg\|\cC\bigg(\frac{1}{m} \sum_{i=1}^m [\Delta_t^i + \eb_t^i] \bigg) - \frac{1}{m} \sum_{i=1}^m \cC(\Delta_t^i + \eb_t^i)\bigg\| \notag\\
    & \leq q \bigg\|\frac{1}{m} \sum_{i=1}^m [\Delta_t^i + \eb_t^i] \bigg\| + \gamma \bigg\|\frac{1}{m} \sum_{i=1}^m \Delta_t^i \bigg\| \notag\\
    & \leq q\|\Delta_t\| + q\|\eb_t\| + \gamma \|\Delta_t\| \notag\\ 
    & = q \|\eb_t\| + (q+\gamma) \|\Delta_t\|,
\end{align}
where the first equation holds by the definition for error $\eb_{t+1}$, and the second one holds by the update rule for $\eb_{t+1}^i$. The first inequality holds by $\|\ab + \bb\| \leq \|\ab\| + \|\bb\|$, and the second one holds by Assumption \ref{as:compressor} and \ref{as:compressor-adapt}. Thus by Young's inequality, we have 
\begin{align}
    \|\eb_{t+1}\|^2 & \leq \big( q \|\eb_t\| + (q+\gamma) \|\Delta_t\|\big)^2 \notag\\
    & \leq  q^2 (1+\rho) \|\eb_t\|^2 + (q+\gamma)^2 (1+\rho^{-1}) \|\Delta_t\|^2 \notag\\
    & = \frac{1+q^2}{2} \|\eb_t\|^2 + \frac{2(q+\gamma)^2 }{1-q^2}  \|\Delta_t\|^2,
\end{align}
where the equation holds by letting $\rho = \frac{1-q^2}{2q^2}$, and $1+\rho^{-1} = \frac{1+q^2}{1-q^2} \leq \frac{2}{1-q^2}$, then by the similar recursive approach in the proof of Lemma \ref{lm:error}, we have  
\begin{align}
    \EE[\|\eb_{t+1}\|^2] & \leq \frac{2(q+\gamma)^2}{1-q^2} \sum_{\tau=1}^t \bigg(\frac{1+q^2}{2}\bigg)^{t-\tau} \EE [\|\Delta_\tau\|^2] \notag\\
    & \leq \frac{4(q+\gamma)^2}{(1-q^2)^2} \frac{K \eta_l^2}{m} \sigma_l^2 + \frac{\eta_l^2 }{m^2} \frac{2(q+\gamma)^2}{1-q^2}\sum_{\tau=1}^t \bigg(\frac{1+q^2}{2}\bigg)^{t-\tau} \EE \bigg[\bigg\|\sum_{i=1}^m \sum_{k=0}^{K-1}\nabla F_i(\xb_{\tau,k}^i ) \bigg\|^2\bigg].
\end{align}
For the sequence $\bGamma_t$, similar as the previous analysis, we have
\begin{align}
    \EE [\|\bGamma_t\|^2] & = \EE \bigg[\bigg\|(1-\beta_1)\sum_{\tau=1}^t \beta_1^{t-\tau} \eb_\tau\bigg\|^2\bigg] \notag\\
    & \leq (1-\beta_1) \sum_{\tau=1}^t \beta_1^{t-\tau} \EE[ \|\eb_\tau\|^2] \notag\\
    & \leq \frac{4(q+\gamma)^2}{(1-q^2)^2} \frac{K \eta_l^2}{m} \sigma_l^2 + \frac{\eta_l^2}{m^2} \frac{2 (q+\gamma)^2(1-\beta_1)}{1-q^2}\sum_{\tau=1}^t \beta_1^{t-\tau} \sum_{j=1}^\tau \bigg(\frac{1+q^2}{2}\bigg)^{\tau-j} \EE \bigg[\bigg\|\sum_{i=1}^m \sum_{k=0}^{K-1}\nabla F_i(\xb_{j,k}^i ) \bigg\|^2\bigg].
\end{align}
Summing over $t=1,\cdots, T$, we have
\begin{align}
    \sum_{t=1}^T \EE [\|\bGamma_t\|^2] & \leq \frac{4T(q+\gamma)^2}{(1-q^2)^2} \frac{K \eta_l^2}{m} \sigma_l^2 + \frac{\eta_l^2}{m^2} \frac{2(q+\gamma)^2}{1-q^2}\sum_{t=1}^T \sum_{\tau=1}^t \bigg(\frac{1+q^2}{2}\bigg)^{t-\tau} \EE \bigg[\bigg\|\sum_{i=1}^m \sum_{k=0}^{K-1}\nabla F_i(\xb_{\tau,k}^i ) \bigg\|^2\bigg] \notag\\
    & \leq \frac{4T (q+\gamma)^2}{(1-q^2)^2} \frac{K \eta_l^2}{m} \sigma_l^2 + \frac{\eta_l^2}{m^2} \frac{4 (q+\gamma)^2}{(1-q^2)^2} \sum_{t=1}^T \EE \bigg[\bigg\|\sum_{i=1}^m \sum_{k=0}^{K-1}\nabla F_i(\xb_{t,k}^i ) \bigg\|^2\bigg].
\end{align}

\end{proof}

\begin{lemma}\label{lm:mE-bound-partial}
Under Assumptions \ref{as:smooth}-\ref{as:bounded-v} and Assumption \ref{as:compressor}, for the momentum sequence $\mb_t = (1-\beta_1) \sum_{\tau=1}^t \beta_1^{t-\tau}\Delta_\tau $ in partial participation settings, we have 
\begin{align*}
   \sum_{t=1}^{T}\EE [\|\mb_t\|^2] \leq \frac{K T \eta_l^2}{n} \sigma_l^2 + \frac{\eta_l^2}{n^2}  \sum_{t=1}^{T} \EE \bigg[\bigg\|\sum_{i \in \cS_t} \sum_{k=0}^{K-1} \nabla F_i(\xb_{t,k}^i) \bigg\|^2\bigg].
\end{align*}
\end{lemma}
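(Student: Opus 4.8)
The plan is to mirror the full-participation argument of Lemma~\ref{lm:mE-bound}, replacing the second-moment bound on $\Delta_t$ by its partial-participation counterpart. First I would unfold the momentum recursion as $\mb_t = (1-\beta_1)\sum_{\tau=1}^t \beta_1^{t-\tau}\Delta_\tau$ and apply Cauchy--Schwarz coordinate-wise (equivalently Jensen's inequality for the weights $(1-\beta_1)\beta_1^{t-\tau}$, which sum to at most one). This gives the convex-combination bound $\EE[\|\mb_t\|^2] \leq (1-\beta_1)\sum_{\tau=1}^t \beta_1^{t-\tau}\EE[\|\Delta_\tau\|^2]$, exactly as in the full-participation case; the weighting structure is unaffected by which clients are sampled.

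The crucial new ingredient is a second-moment bound on $\Delta_t$ under random sampling. Since $\Delta_t = \frac{1}{n}\sum_{i\in\cS_t}\Delta_t^i = -\frac{\eta_l}{n}\sum_{i\in\cS_t}\sum_{k=0}^{K-1}\gb_{t,k}^i$, I would condition on the sampled set $\cS_t$ and on all iterates up to step $t$, and split each stochastic gradient as $\gb_{t,k}^i = (\gb_{t,k}^i - \nabla F_i(\xb_{t,k}^i)) + \nabla F_i(\xb_{t,k}^i)$. The mean-zero noise terms are independent across the $n$ sampled workers and the $K$ local steps, so the cross terms vanish in expectation and the noise contributes at most $nK\sigma_l^2$ by Assumption~\ref{as:bounded-v}. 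This yields $\EE[\|\Delta_t\|^2] \leq \frac{K\eta_l^2}{n}\sigma_l^2 + \frac{\eta_l^2}{n^2}\EE[\|\sum_{i\in\cS_t}\sum_{k=0}^{K-1}\nabla F_i(\xb_{t,k}^i)\|^2]$, the partial-participation analogue of Lemma~\ref{lm:EDelta}.

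Substituting this into the convex-combination bound and using $(1-\beta_1)\sum_{\tau=1}^t \beta_1^{t-\tau} \leq 1$ to absorb the $\sigma_l^2$ term, I would then sum over $t = 1,\dots,T$. The only remaining manipulation is the interchange of summation $\sum_{t=1}^T (1-\beta_1)\sum_{\tau=1}^t \beta_1^{t-\tau} a_\tau = (1-\beta_1)\sum_{\tau=1}^T a_\tau \sum_{t=\tau}^T \beta_1^{t-\tau} \leq \sum_{\tau=1}^T a_\tau$, where the inner geometric sum is bounded by $(1-\beta_1)^{-1}$; taking $a_\tau = \EE[\|\sum_{i\in\cS_\tau}\sum_k \nabla F_i(\xb_{\tau,k}^i)\|^2]$ delivers exactly the claimed bound.

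I would expect the partial-participation second-moment bound on $\Delta_t$ to be the only genuinely delicate step: one must condition on $\cS_t$ before invoking independence of the gradient noise, so that the noise variance scales with the sample size $n$ (giving the $K\eta_l^2\sigma_l^2/n$ term) while the deterministic part is retained inside the expectation over the sampling. Everything else is the same bookkeeping as in Lemma~\ref{lm:mE-bound}.
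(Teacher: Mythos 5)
Your proposal is correct and follows essentially the same route as the paper: the paper's proof simply reuses the argument of Lemma~\ref{lm:mE-bound} (Jensen/Cauchy--Schwarz on the exponentially weighted momentum average, then a geometric-sum interchange) with the full-participation bound on $\EE[\|\Delta_t\|^2]$ replaced by its partial-participation counterpart, which is exactly the decomposition you derive by conditioning on $\cS_t$ and using the martingale-difference structure of the local gradient noise. The only cosmetic difference is that you stop the $\Delta_t$ bound at the form with the sum over $\cS_t$ (which is all the lemma's right-hand side requires), whereas the paper cites Lemma~\ref{lm:EDelta'}, whose first inequality is the same bound before the further expansion over sampling probabilities.
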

\begin{proof}
The proof outline is the same as the proof of Lemma \ref{lm:mE-bound}, the main difference is $E[\|\Delta_t\|^2]$ has changed, so we need to apply Lemma \ref{lm:EDelta'} instead of Lemma \ref{lm:EDelta} during the proof. 
\end{proof}

\begin{lemma} \label{lm:xikt-xt}
(This lemma directly follows from Lemma 3 in FedAdam \citep{reddi2020adaptive}. For local learning rate which satisfying $\eta_l \leq \frac{1}{8KL}$, the local model difference after $k$ ($\forall k \in \{0,1,...,K-1\}$) steps local updates satisfies
\begin{align}
    \frac{1}{m}\sum_{i=1}^{m}\EE [\|\xb_{t,k}^i - \xb_t\|^2] \leq 5K\eta_l^2(\sigma_l^2+6K\sigma_g^2) + 30 K^2 \eta_l^2 \EE[\|\nabla f(\xb_t)\|^2].
\end{align}
\end{lemma}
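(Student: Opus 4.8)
The plan is to bound the per-client drift $\EE[\|\xb_{t,k}^i-\xb_t\|^2]$ by unrolling the local SGD recursion and controlling the resulting geometric growth with the stepsize constraint $\eta_l \le \frac{1}{8KL}$. Since this matches Lemma~3 of \citet{reddi2020adaptive}, the argument is essentially their client-drift bound adapted to our notation, so I would follow that template.

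First I would use the local update $\xb_{t,k}^i = \xb_{t,k-1}^i - \eta_l \gb_{t,k-1}^i$ with $\xb_{t,0}^i = \xb_t$, and split the stochastic gradient into its mean and a zero-mean noise term, $\gb_{t,k-1}^i = \nabla F_i(\xb_{t,k-1}^i) + (\gb_{t,k-1}^i - \nabla F_i(\xb_{t,k-1}^i))$. Taking conditional expectation and using that the noise is zero-mean with variance at most $\sigma_l^2$ (Assumption~\ref{as:bounded-v}) removes the cross term and contributes an additive $\eta_l^2 \sigma_l^2$:
\begin{align*}
\EE[\|\xb_{t,k}^i - \xb_t\|^2] \le \EE[\|\xb_{t,k-1}^i - \xb_t - \eta_l \nabla F_i(\xb_{t,k-1}^i)\|^2] + \eta_l^2\sigma_l^2.
\end{align*}

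Next I would apply Young's inequality in the form $\|\mathbf{u}+\mathbf{w}\|^2 \le (1+\tfrac{1}{2K-1})\|\mathbf{u}\|^2 + 2K\|\mathbf{w}\|^2$ to peel off the gradient step, and then bound the deterministic gradient by the three-term splitting
\begin{align*}
\|\nabla F_i(\xb_{t,k-1}^i)\|^2 \le 3\|\nabla F_i(\xb_{t,k-1}^i)-\nabla F_i(\xb_t)\|^2 + 3\|\nabla F_i(\xb_t)-\nabla f(\xb_t)\|^2 + 3\|\nabla f(\xb_t)\|^2,
\end{align*}
using $L$-smoothness (Assumption~\ref{as:smooth}) on the first term so it becomes $3L^2\|\xb_{t,k-1}^i - \xb_t\|^2$. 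Averaging over $i\in[m]$ and invoking the global-variance bound $\frac{1}{m}\sum_i\|\nabla F_i(\xb_t)-\nabla f(\xb_t)\|^2 \le \sigma_g^2$ collapses the heterogeneity term into a clean $\sigma_g^2$ contribution.

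Writing $A_k = \frac{1}{m}\sum_{i=1}^m \EE[\|\xb_{t,k}^i-\xb_t\|^2]$, these steps yield a scalar recursion of the form $A_k \le (1+\tfrac{1}{2K-1}+6K\eta_l^2 L^2)A_{k-1} + \eta_l^2\sigma_l^2 + 6K\eta_l^2\sigma_g^2 + 6K\eta_l^2\EE[\|\nabla f(\xb_t)\|^2]$, which the final step unrolls from $A_0=0$. The main obstacle, and exactly where the stepsize constraint enters, is controlling the growth factor: with $\eta_l \le \frac{1}{8KL}$ one has $6K\eta_l^2 L^2 \le \frac{1}{2K-1}$, so the multiplier is at most $1+\frac{2}{2K-1}$, and the geometric sum satisfies $\sum_{j=0}^{K-1}(1+\tfrac{2}{2K-1})^j = O(K)$ via $(1+\tfrac{2}{2K-1})^K \le e^2$. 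The delicate part is the bookkeeping that turns this $O(K)$ factor against the additive constants into the precise coefficients $5K\eta_l^2(\sigma_l^2+6K\sigma_g^2) + 30K^2\eta_l^2\EE[\|\nabla f(\xb_t)\|^2]$; the clean constants $5$ and $30$ follow from the specific Young parameter $\tfrac{1}{2K-1}$ and the factor-$3$ splitting chosen above, matching the accounting in \citet{reddi2020adaptive}.
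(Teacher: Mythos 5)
Your proposal is correct and follows essentially the same route as the paper, which itself simply defers to Lemma 3 (proved as Lemma 4) of \citet{reddi2020adaptive}: the recursion you set up, the Young/three-term splitting with the $\tfrac{1}{2K-1}$ parameter, and the use of $\eta_l \le \tfrac{1}{8KL}$ to tame the growth factor are exactly the cited argument. The only looseness is that your crude $(1+\tfrac{2}{2K-1})^K \le e^2$ estimate gives a geometric sum slightly larger than $5K$; the sharper bound $(1+\tfrac{1}{K-1})^K \le 4$ used in the original accounting recovers the stated constants, as you anticipate.
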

\begin{proof}
The proof of Lemma \ref{lm:xikt-xt} is exactly same as the proof of Lemma 3 in \citet{reddi2020adaptive}.
\end{proof}

\section{Additional Discussions}

\noindent\textbf{The additional server-to-worker communication: } Our current analysis only focus on one-way compression from worker to server while the server-to-worker broadcasting is still uncompressed since of cost of broadcasting is in general cheaper than worker to server uploading. 
Note that it is also straightforward to compress $\xb$ for server-to-worker communication in the full participation scheme (with guarantees). So we can indeed achieve high communication efficiency even for two-way compression. Table \ref{tab:communication-x} shows the communication bits comparison for scaled sign and top-$k$ compressors, where $T$ is the total iteration of training and $d$ denotes the dimension of $\xb$. Specifically, Table  \ref{tab:communication-x-resnet} shows the communication bits corresponding to the experiments showing by Figure \ref{fig:cifar10_ef_resnet}. However, for the partial participating setting, it will encounter a synchronization issue, which is highly non-trivial to solve. Thus we leave the two-way compression strategy for future work. 
\begin{table}[ht]
    \centering
    \begin{tabular}{c|ccc}
        \toprule
        Method &  Uncompressed & One-way Compression & Two-way Compression\\
        \midrule
        Scaled sign & $32d\times 2T$ & $(32+d)\times T + 32d\times T$ & $(32+d)\times 2T$\\
        Top-$k$ & $32d\times 2T$ & $\approx32(2k+d)\times T$ & $\approx 32 \times 2k \times 2T$\\
        \bottomrule
    \end{tabular}
    \caption{Communication bits comparisons for scaled sign and top-$k$ compressors.}
    \label{tab:communication-x}
\end{table}

\begin{table}[ht]
    \centering
    \begin{tabular}{c|ccc}
        \toprule
        Method &  Uncompressed & One-way Compression & Two-way Compression\\
        \midrule
        Scaled sign & $3.58\times 10^{11}$ & $1.84\times 10^{11}$ & $1.12\times 10^{10}$\\
        Top-$k$ with $r= 1/64$ & $3.58\times 10^{11}$ & $1.84\times 10^{11}$ & $1.12\times 10^{10}$\\
        Top-$k$ with $r= 1/128$ & $3.58\times 10^{11}$ & $1.82\times 10^{11}$ & $5.59\times 10^{9}$\\
        Top-$k$ with $r= 1/256$ & $3.58\times 10^{11}$ & $1.80\times 10^{11}$ & $2.79\times 10^{9}$\\
        \bottomrule
    \end{tabular}
    \caption{Approximate communication bits comparisons for scaled sign and top-$k$ with $r= 1/64$, $r= 1/128$ and $r= 1/256$ compressors when training CIFAR-10 on ResNet-18 model for $500$ rounds.}
    \label{tab:communication-x-resnet}
\end{table}

\section{Additional Experimental Results}

\subsection{Hyperparameter Settings}
We conduct detailed hyperparameter searches to find the best hyperparameters for each baseline methods including ours. In details, we grid search over the local learning rate $\eta_{l} \in \{0.0001, 0.001, 0.01, 0.1,$ $1.0\}$, the global learning rate $\eta \in \{0.001, 0.01, 0.1, 1.0\}$ for all methods. For adaptive federated optimization methods, we set $\beta_1 = 0.9$, $\beta_2 = 0.99$. For FedAdam, FedYogi, and FedAMSGrad, we search the best $\epsilon$ from $\{10^{-8}, 10^{-4}, 10^{-3}, 10^{-2}, 10^{-1}, 10^{0}\}$. For FedAMS and FedCAMS, we search the max stabilization $\epsilon$ from $\{10^{-8}, 10^{-4}, 10^{-3}, 10^{-2}, 10^{-1}, 10^{0}\}$.

Specifically, for our ResNet-18 experiments, we set the local learning rate $\eta_{l} = 0.01$ and the global learning rate $\eta = 1.0$ for FedAvg, set $\eta_{l} = 0.01$, $\eta = 0.1$ and $\epsilon=0.1$ for FedAdam and FedAMSGrad, set $\eta_{l} = 0.01$, $\eta = 1.0$ and $\epsilon=0.1$ for FedYogi, set $\eta_{l} = 0.01$, $\eta = 1.0$ and max stabilization $\epsilon=0.001$ for FedAMS and FedCAMS. For our ConvMixer-256-8 experiments, we set the local learning rate $\eta_{l} = 0.01$ and the global learning rate $\eta = 1.0$ for FedAvg, set $\eta_{l} = 0.01$, $\eta = 1.0$ and $\epsilon=0.1$ for FedAdam, FedYogi and FedAMSGrad, set $\eta_{l} = 0.01$, $\eta = 1.0$ and max stabilization $\epsilon=0.001$ for FedAMS and FedCAMS.

\subsection{Additional Experiments}

Figure \ref{fig:cifar10_ef_kn_resnet} shows the effect of parameter $n$ on the convergence rate of FedCAMS with choosing groups of parameters: $n\in \{5, 10, 20\}$. For both ResNet-18 and ConvMixer-256-8 models, it is shown that a larger number of participating clients $n$ achieves a faster convergence rate, this backs up our theory.
\begin{figure}[ht!]
\centering
\subfigure[ResNet-18]{\includegraphics[width=0.23\textwidth]{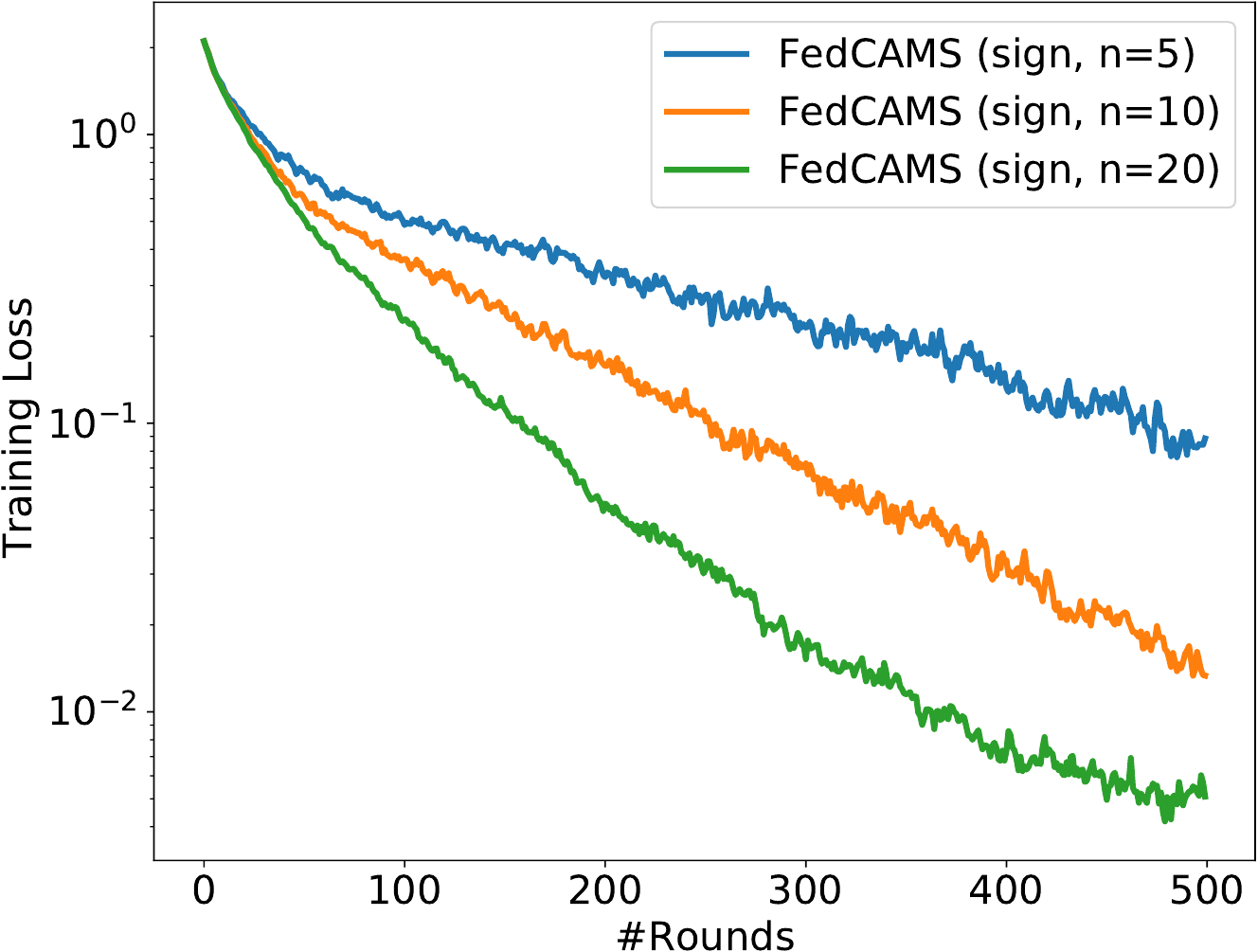}}
\subfigure[ConvMixer-256-8]{\includegraphics[width=0.23\textwidth]{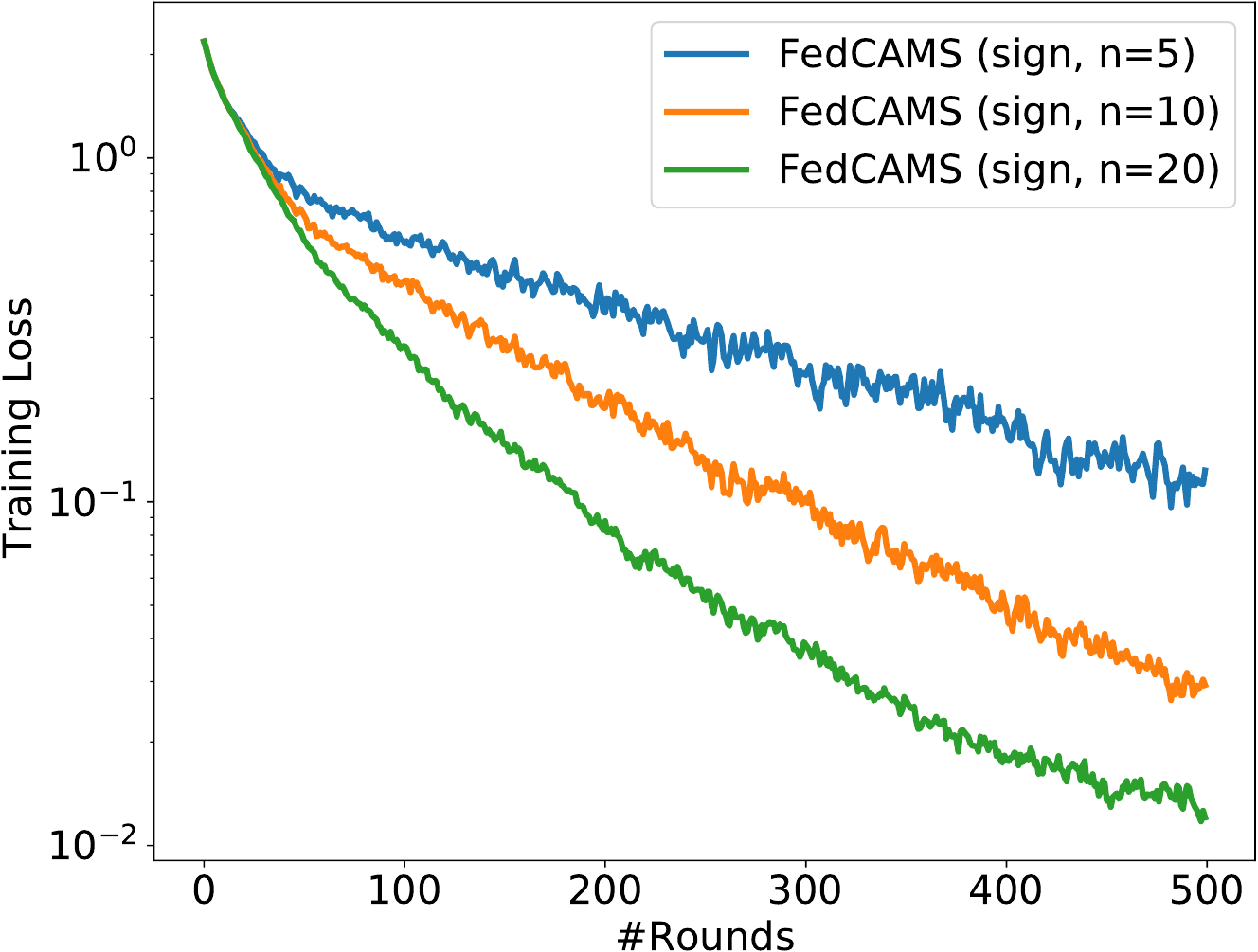}}
\caption{The learning curves for FedCAMS with different participating number of clients $n$ in training CIFAR-10 data on ResNet-18 and ConvMixer-256-8 models.}
\label{fig:cifar10_ef_kn_resnet}
\end{figure}

Figure \ref{fig:cifar100} shows the convergence result of FedAMS and other federated learning baselines on training CIFAR-100 dataset with the ResNet-18 model and the ConvMixer-256-8 model. We compare the training loss and test accuracy against the global rounds for each model. For the ResNet-18 model, FedAMS and FedYogi achieve significantly better performance comparing with other three baselines. In particular, FedYogi has a fast convergence rate at the beginning status, while FedAMS performs the best in terms of the final training loss and test accuracy. FedAvg achieves a slightly better training loss to FedAdam and FedAMSGrad but much higher test accuracy which is close to FedYogi and FedAMS. 

\begin{figure}[ht!]
\centering
\subfigure[ResNet-18]{\includegraphics[width=0.23\textwidth]{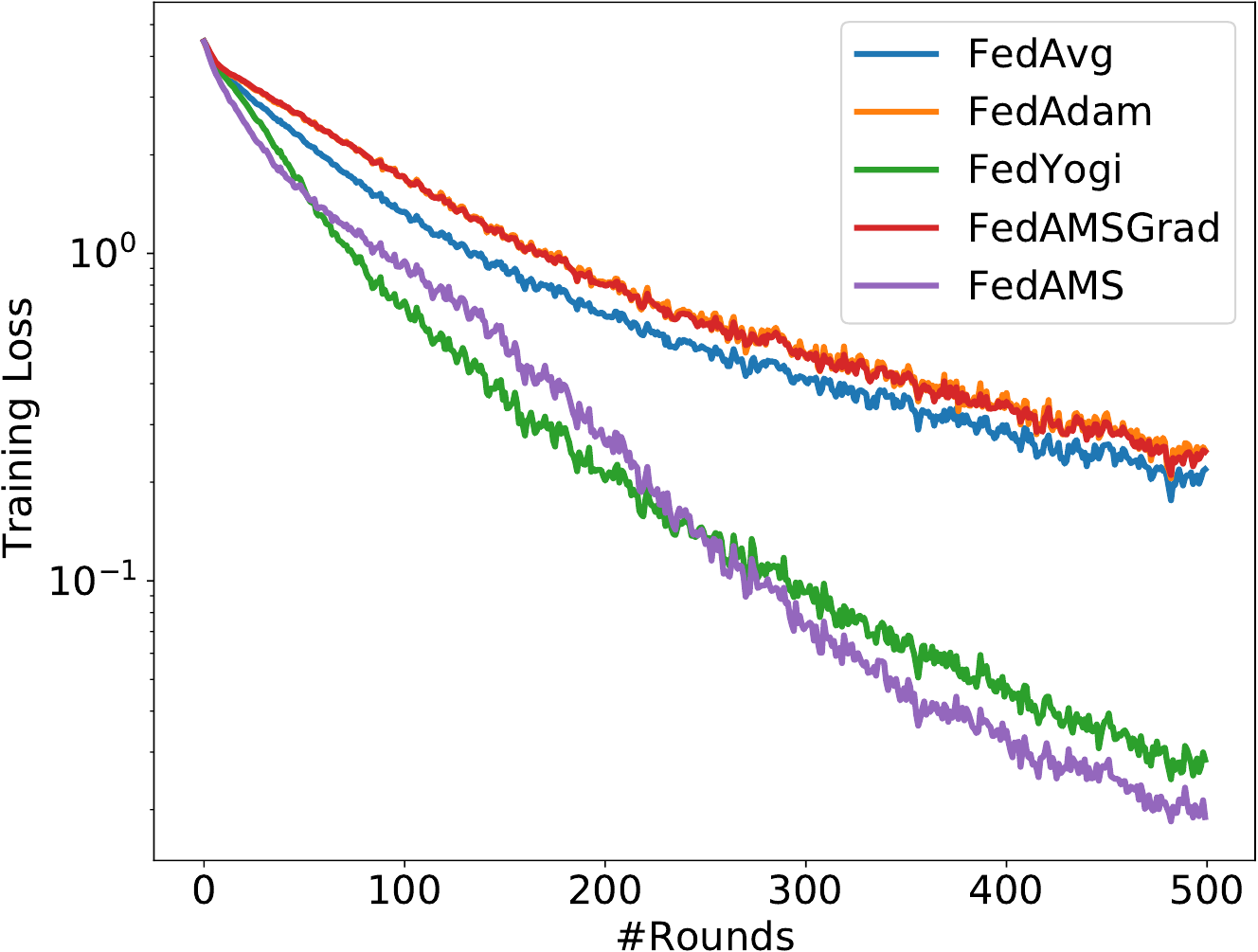}}
\subfigure[ResNet-18]{\includegraphics[width=0.23\textwidth]{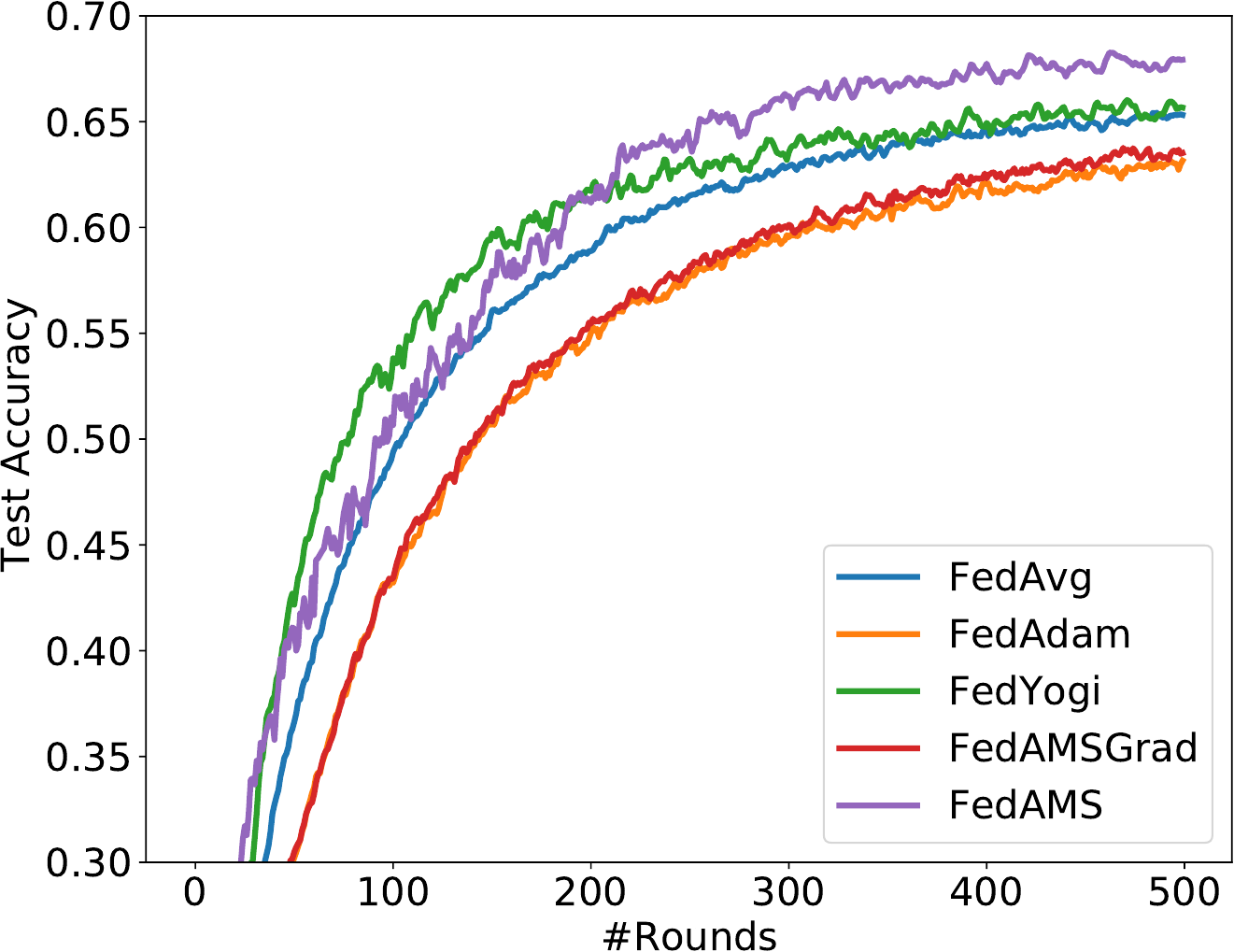}}
\subfigure[ConvMixer-256-8]{\includegraphics[width=0.23\textwidth]{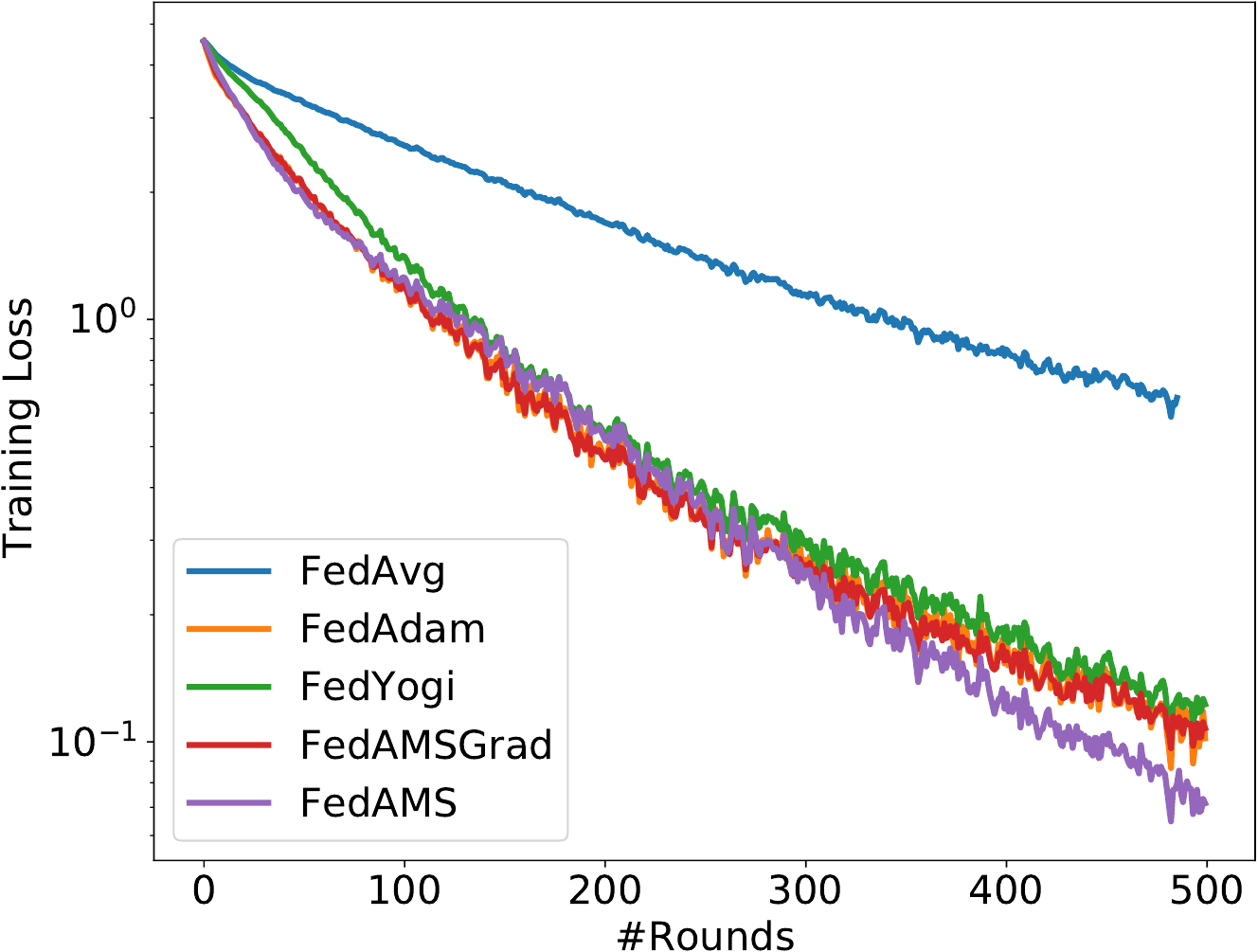}}
\subfigure[ConvMixer-256-8]{\includegraphics[width=0.23\textwidth]{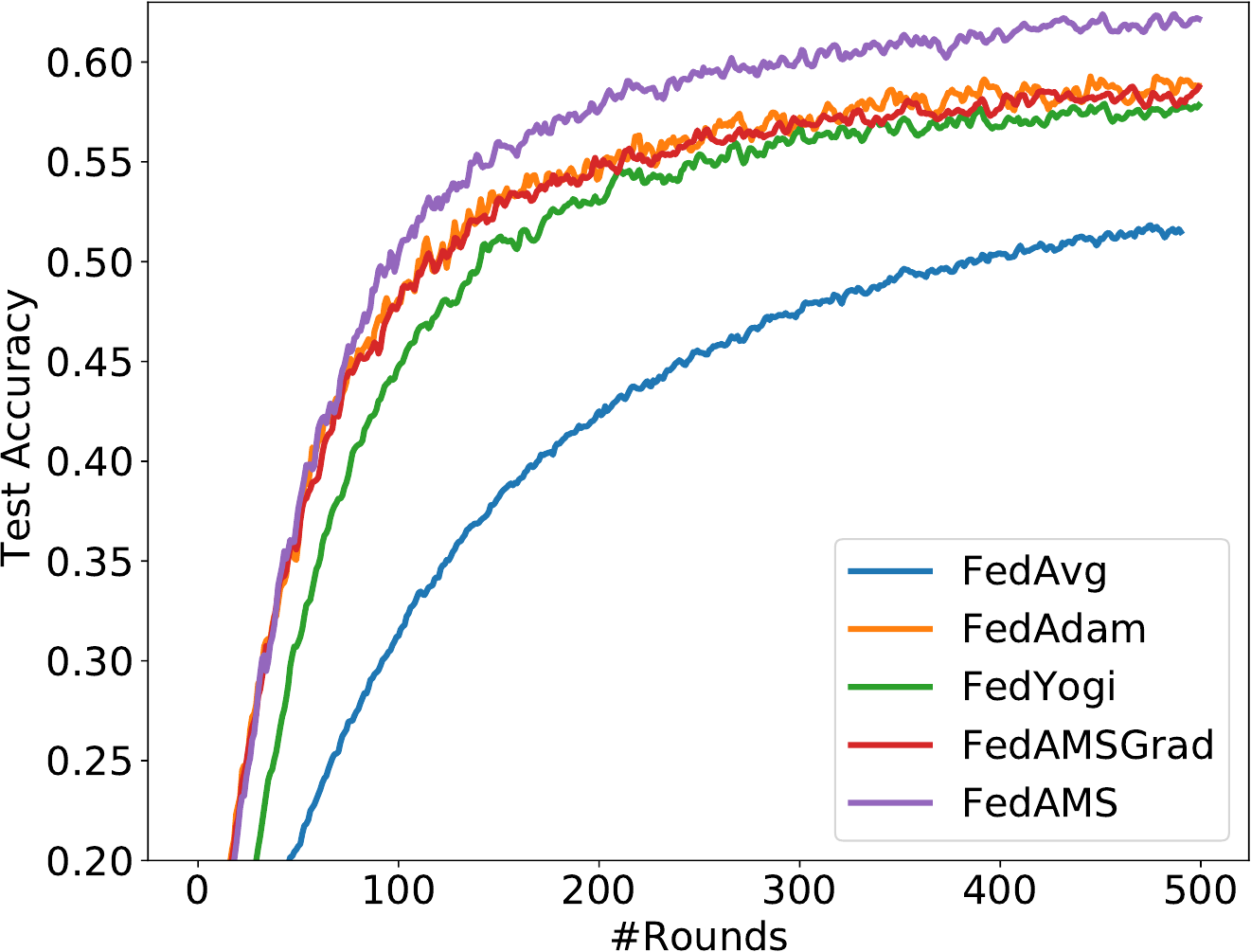}}
\caption{The learning curves for FedAMS and other federated learning baselines on training CIFAR-100 data (a)(b) show the results for ResNet-18 model and (c)(d) show the results for ConvMixer-256-8 model. }
\label{fig:cifar100}
\vspace{-10pt}
\end{figure}

For the ConvMixer-256-8 model, which is typically trained via adaptive gradient method, we observe that all adaptive federated optimization methods (FedAdam, FedYogi, FedAMSGrad and FedAMS) achieve much better performance in terms of both training loss and test accuracy compared with FedAvg. In details, FedAMS again achieves a significantly better result than other baselines in terms of training loss and test accuracy. Other adaptive methods, including FedAdam, FedYogi, and FedAMSGrad, have similar convergence behaviour when training the ConvMixer-256-8 model. Such results empirically show the effectiveness of our proposed FedAMS method.

\subsection{Additional Ablation Study}

\noindent\textbf{The ablation on $\epsilon$:} We conduct an ablation study with $\epsilon \in \{10^{-1}, 10^{-2}, 10^{-3}, 10^{-4}, 10^{-6}, 10^{-8}\}$ on CIFAR-10 in Table \ref{tab:abla-eps} and our $\epsilon$ value in experiments is chosen by its relatively higher test accuracy. 

\begin{table}[ht!]
    \centering
    \begin{tabular}{c|cccccc}
    \toprule
        $\epsilon$ & $10^{-1}$ & $10^{-2}$ & $10^{-3}$ & $10^{-4}$ & $10^{-6}$ & $10^{-8}$ \\
        \midrule
        Test acc (\%) & 90.45 & 90.51 & \textbf{90.94} & 90.72 & 90.49 & 90.30 \\
        \bottomrule
    \end{tabular}
    \caption{Ablation study on $\epsilon$.}
    \label{tab:abla-eps}
\end{table}

\end{document}
